\definecolor{mygray}{gray}{.9}
\newtheorem{theorem}{Theorem}
\newtheorem{proof}{Proof}
\newcommand{\pub}[1]{{\color{gray}{\tiny{[{#1}]}}}}
\definecolor{citecolor}{RGB}{34,139,34}
\def \alambic {\includegraphics[width=0.02\linewidth]{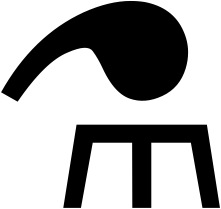}\xspace}
\begin{document}

\title{DNA Family: Boosting Weight-Sharing NAS with Block-Wise Supervisions}

\author{Guangrun Wang$\dag$, Changlin Li$\dag$, Liuchun Yuan, Jiefeng Peng, Xiaoyu Xian,
\\Xiaodan Liang, Xiaojun Chang~\IEEEmembership{Senior~Member,~IEEE}, and Liang Lin$^*$~\IEEEmembership{Senior~Member,~IEEE}
\IEEEcompsocitemizethanks {\IEEEcompsocthanksitem
Guangrun Wang and Changlin Li contribute equally and share the first authorship. G. Wang, L. Yuan, X. Liang, and L. Lin are with Sun Yat-sen University, Guangzhou, China. C. Li and X. Chang are with ReLER lab, AAII, University of Technology Sydney, Australia. X. Xian is with the CRRC Academy, Beijing, China. Email: \{wanggrun, changlinli.ai, ylc0003, jiefengpeng, xdliang328\}@gmail.com; xiaojun.chang@uts.edu.au; xxy@crrc.tech; linliang@ieee.org. Corresponding author: Liang Lin.}}

\markboth{IEEE TRANSACTIONS ON Pattern Analysis and Machine Intelligence}%
{G. Wang\MakeLowercase{\textit{et al.}}: DNA Family: Boosting Weight-Sharing NAS with Block-Wise Supervisions}

\IEEEcompsoctitleabstractindextext{
\begin{abstract}
Neural Architecture Search (NAS), aiming at automatically designing neural architectures by machines, has been considered a key step toward automatic machine learning. One notable NAS branch is the weight-sharing NAS, which significantly improves search efficiency and allows NAS algorithms to run on ordinary computers. Despite receiving high expectations, this category of methods suffers from low search effectiveness. By employing a generalization boundedness tool, we demonstrate that the devil behind this drawback is the untrustworthy architecture rating with the oversized search space of the possible architectures. Addressing this problem, we modularize a large search space into blocks with small search spaces and develop a family of models with the distilling neural architecture (DNA) techniques. These proposed models, namely a DNA family, are capable of resolving multiple dilemmas of the weight-sharing NAS, such as scalability, efficiency, and multi-modal compatibility. Our proposed DNA models can rate all architecture candidates, as opposed to previous works that can only access a sub- search space using heuristic algorithms. Moreover, under a certain computational complexity constraint, our method can seek architectures with different depths and widths. Extensive experimental evaluations show that our models achieve state-of-the-art top-1 accuracy of 78.9\% and 83.6\% on ImageNet for a mobile convolutional network and a small vision transformer, respectively. Additionally, we provide in-depth empirical analysis and insights into neural architecture ratings. Codes available: \url{https://github.com/changlin31/DNA}.
\end{abstract}

\begin{IEEEkeywords}
Block-wise Learning, Neural Architecture Search, Generalization Boundedness, Vision Transformer
\end{IEEEkeywords}}

\maketitle



\begin{figure*}[t]
\centering
\includegraphics[width=0.8\textwidth]{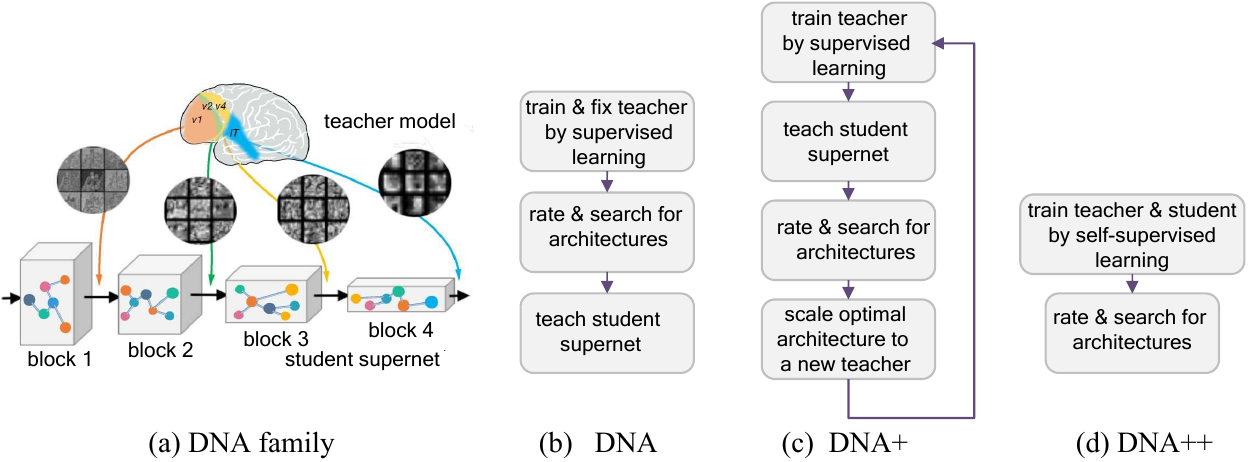}
\vspace{-8pt}
\caption{\small{Illustration of DNA family. (a) Distilling neural architecture technique with block-wise supervision. Architecture candidates (denoted by different nodes and paths) are divided into blocks. (b) Supervised learning (vanilla DNA). (c) Progressive learning (DNA+). (d) Self-supervised learning (DNA++).
}}\label{fig:intro}
\vspace{-14pt}
\end{figure*}

\section{Introduction}\label{sec:intro}

Neural architecture search (NAS)~\cite{RenXCHLCW2020}, aiming to replace human experts with machines in designing neural architectures, is widely anticipated.
Typical works included reinforcement learning approaches \cite{Tan2018MnasNetPN,Tan2019EfficientNetRM}, evolutionary algorithms \cite{chen2018reinforced,negrinho2017deeparchitect}, and Bayesian methods \cite{Kandasamy2018Neural_neurips,White2021BANANAS_aaai}. These methods require multiple trials (i.e., training many architectures separately to assess their quality), which is computationally unaffordable for many researchers. Recent weight-sharing NAS solutions encoded a search space into a weight-sharing supernet and trained all architectures in the supernet at once, significantly improving search efficiency.

Despite high efficiency, the effectiveness of weight-sharing NAS is worrying \cite{sciuto2019evaluating,anonymous2020nas,Peng2021Pi_iccv}. The culprit may be the unreliable architecture rating caused by the oversized search space. Specifically, weight-sharing NAS methods co-train supernet weights to rate architecture candidates. With search space increasing, a generalization boundedness tool suggests that supernet weights have a lower generalization ability, resulting in untrustworthy architecture ratings.
This irrelevance between the predicted and ground-truth architecture ranking makes weight-sharing NAS ineffective.

To boost weight-sharing NAS, reducing search space is helpful. Hence, we factorize the large search space into blocks with small search spaces. 
Specifically, we divide a neural architecture into blocks (see Fig. \ref{fig:intro} (a)) and train each block separately. In this way, the architecture candidates in a block reduce exponentially compared to that in the whole search space, allowing more reliable architecture ratings. With these block-wise representations, we develop three novel NAS models using the distilling neural architecture (DNA) techniques. And we induce these models into a family called the ``DNA Family'', as they share a similar mechanism for building models, i.e., training student supernets by distilling knowledge from a teacher network ~\cite{Hinton2015DistillingTK}. In the following, we summarize the characters of the three DNA models and discuss how they can resolve multiple dilemmas of weight-sharing NAS.

\begin{itemize}
    \item DNA: the vanilla model (illustrated in Fig.~\ref{fig:intro} (b)) employs traditional supervised learning, i.e., casting the latent codes of a fixed teacher network as the supervision. It is very efficient as several student supernets can be trained simultaneously. 
    \item DNA+: this model (illustrated in Fig.~\ref{fig:intro} (c)) allows the teacher network to be progressively updated during the architecture search procedure. The performance of DNA+ is thus less dependent on the initial capacity of the teacher network, leading to better scalability. 
    \item DNA++: this model (illustrated in Fig.~\ref{fig:intro} (d)) makes the teacher network and the student supernets jointly optimize in a self-supervised learning manner, i.e., no labeled data is required during the architecture search. One superiority of DNA++ is the tolerance of different basic architectures of the teacher network and student supernets (e.g., the teacher using a CNN and the student supernets using a vision transformer (ViT)). 
\end{itemize}

With the block-wise representations, we can rate all of the candidate architectures, as opposed to previous works that can only access a sub- search space using heuristic algorithms. 
\textcolor{black}{Our method thus shows a promising level of effectiveness in comparison to other weight-sharing NAS approaches.}
Overall, our contributions are five-fold:\begin{itemize}
\item We show that the excessive search space significantly hampers the effectiveness of most weight-sharing NAS approaches. As the search space expands, supernet weights exhibit reduced generalization ability, leading to imprecise architecture rankings and ineffective searches. To address this issue, we introduce a solution by modularizing the extensive search space using a block-wise representation, which systematically reduces the search scope. 
\item We explore three implementations under the general knowledge distillation framework for balancing the scalability, learning efficiency, and compatibility with various neural network structures. The proposed DNA family can be flexibly adopted for resolving different dilemmas of NAS applications. 
\item Block-wise supervisions enable our models to assess all candidates, which is a departure from previous methods that could only explore a subset of the search space. Additionally, this approach allows us to search for architectures with varying depths and widths within the constraints of computational resources.
\item We highlight the significance of architecture rating through comprehensive empirical studies, encompassing model ranking, evaluations using the Kendall Tau metric, and an assessment of training stability. These studies provide insights into the (in)effectiveness of conventional weight-sharing NAS approaches.
\item \textcolor{black}{Promisingly efficient experimental results} are provided on ImageNet, CIFAR10, and PASCAL VOC segmentation. Typically, our DNA family gets a 78.9\% top-1 accuracy in a mobile setting and an 83.6\% top-1 accuracy for a ViT on ImageNet.
\end{itemize}

\noindent

\section{Related Work}\label{sec:related work}\label{sect:related_work}
\noindent\textbf{Neural Architecture Search.} NAS aims to replace human experts in architecture design. Early approaches used agents like RNNs or evolutionary algorithms to sample and evaluate architectures, which are computationally expensive and impractical for large datasets \cite{zoph2016neural,zhong2018practical,baker2016designing,Tan2018MnasNetPN,Tan2019EfficientNetRM,chen2018reinforced,negrinho2017deeparchitect,Kandasamy2018Neural_neurips,White2021BANANAS_aaai}. More recent studies employ weight-sharing supernets \cite{Cai2018ProxylessNASDN, Liu2018DARTSDA, dong2019searching, brock2017smash}, optimizing both supernet weights and architecture indicators through gradient-based methods \cite{Liu2018DARTSDA,Cai2018ProxylessNASDN, Wu2018FBNetHE}. However, these methods can introduce bias between sub-models and rely heavily on initialization. One-shot approaches ensure fairness among sub-models by sampling and rating them using inherited weights from a supernet \cite{Guo2019SinglePO, Chu2019FairNAS_arxiv, brock2017smash, bender2018understanding}, but there remains a performance gap between shared-weight proxy sub-models and retrained standalone models, as observed in prior work \cite{bender2018understanding, Chu2019FairNAS_arxiv, anonymous2020improving}.
Specifically, \cite{anonymous2020improving} provided an analysis using a Bayesian view \cite{Baxter1997Bayesian_ml} to identify that this gap narrows as the amount of weight-sharing sub-models decreases. However, its assumption that the weights of each network layer are independent might be too idealistic.
DNA (originally proposed in our conference version \cite{li2020block}) addresses architecture rating issues by optimizing the supernet differently.

\noindent\textbf{Knowledge Distillation.} KD is a model compression method transferring knowledge from a trained teacher to an efficient student. Approaches fall into two categories: soft label-based methods (\cite{Ba2013DoDN,Hinton2015DistillingTK}) and feature-based methods (\cite{Romero2014FitNetsHF, Yim2017AGF, Wang2018ProgressiveBK}). \cite{Yim2017AGF} trains a student to mimic the teacher's behavior in multiple hidden layers. \cite{Wang2018ProgressiveBK} uses a progressive block-wise KD scheme to transmit knowledge, easing block-wise KD optimization. Unlike the progressive scheme, our method employs parallelized KD to reduce time consumption and teacher-student gap. While previous KD works optimize network weights, our method focuses on optimizing neural architecture.

\noindent\textbf{Progressive Learning.}
Self-distillation enhances teacher-student model accuracy by reusing a student as a new teacher \cite{furlanello2018born}. It progressively improves regularization and reduces overfitting \cite{mobahi2020self}.
Progressive learning extends beyond KD, like \cite{xie2020self} using pseudo labels to train a new student, improving label quality and model performance. Our work, distinctively, focuses on progressive neural architecture search rather than network weight optimization.

\noindent\textbf{Self-Supervised Learning.}
Recent NAS methods often emphasize the label-free nature of architecture search \cite{Liu2020Are_eccv,Zhang2021Neural_cvpr,Yan2020Does_neurips,Wang2021Joint_tnnls}, considering self-supervision optional. In contrast, DNA++ aims to replace the supervisor with self-supervision due to potential bias. DNA++ significantly differ from BossNAS \cite{Li2021BossNAS_iccv} in NAS techniques, self-supervised learning, and scalability. DNA++ uses a heterogeneous teacher-student model (standard net vs. supernet), while BossNAS employs a homogeneous model (supernet vs. supernet).  To enable the heterogeneous model, DNA++ introduces new SSL techniques to prevent mode collapse and encourage output divergence among samples, effectively combining self-supervision with weight-sharing NAS. Lastly, DNA++ offers diverse application possibilities, such as using a CNN for self-supervision and searching for ViTs, thanks to its heterogeneous model.

\noindent\textbf{NAS for ViTs.} Some prior work focused on NAS for ViTs. The key differences between the DNA family and them are as follows:
GLiT \cite{chen2021glit} ingeniously introduces a novel search space for ViTs and employs an intelligent hierarchical search algorithm. However, it may exclude certain architectures with potential during the initial screening phase, limiting their consideration.
ViTAS \cite{su2022vitas} ingeniously presents a novel cyclic method for supernet training, aiming for fairness \cite{Chu2019FairNAS_arxiv}. While promising, it doesn't address reducing the search space or the challenge of inaccurate architecture ranking, which the DNA family target.
AutoFormer \cite{chen2021autoformer} cleverly employs a supernet training methodology inspired by BigNAS \cite{yu2020bignas}. However, the correlation between subnets' accuracy and their train-from-scratch performance remains unknown, making them somewhat opaque. In contrast, the DNA family resolves the issue of inaccurate architecture ranking in weight-sharing NAS.
NASViT \cite{gong2021nasvit} smartly focuses on a novel method for neural network optimization, which could benefit our "from-scratch" retraining process. However, this paper may not cover the application of this optimization to our retraining process, which we plan to explore in future work.
We conducted comprehensive quantitative comparisons, demonstrating that the DNA family consistently achieve superior accuracy compared to these state-of-the-art NAS methods applied to ViTs \cite{chen2021searching}.

\vspace{-11pt}
\section{Methodology}\label{sect:alg}

We start by analyzing weight-sharing NAS's dilemma (Section \ref{sect:dilema}). 
To tackle this issue, we partition the search space into blocks (Section \ref{sect:dna}). Using these block-wise representations, we develop three NAS models employing neural architecture distillation techniques to address various weight-sharing NAS challenges.
Section \ref{sec:supernetdistill} introduces DNA with supervised learning, offering searchable widths/depths and resource adaptability.
Section \ref{sect:idna} outlines DNA+ with progressive learning.
Section \ref{sect:dna++} presents DNA++ with self-supervised learning.

\vspace{-11pt}
\subsection{Basic Analysis of Weight-sharing NAS's Dilemma}\label{sect:dilema}

Finding optimal architectures requires multiple/greedy trials, i.e., training many/all architectures separately to assess their quality, which is computationally unaffordable for many researchers. Recent works give up training each candidate individually; instead, they encode the search space into a weight-sharing supernet\footnote{A \textbf{supernet} is a general concept widely used in the NAS community \cite{Guo2019SinglePO,Cai2018ProxylessNASDN,Liu2018DARTSDA,Wu2018FBNetHE,Pham2018EfficientNA}, referring to a directed acyclic super-graph covering a whole search space with each node representing the feature maps and each edge representing a connection between the nodes with a particular operation (e.g., a convolution). Each subnet in the supernet represents a candidate architecture in the search space.} and train all architectures in the supernet concurrently. These methods are called weight-sharing NAS, which significantly improves search efficiency. Despite high efficiency, the effectiveness of weight-sharing NAS is unsatisfactory. We analyze the culprit here.

\begin{figure*}
\begin{center}
\includegraphics[width=0.70\linewidth]{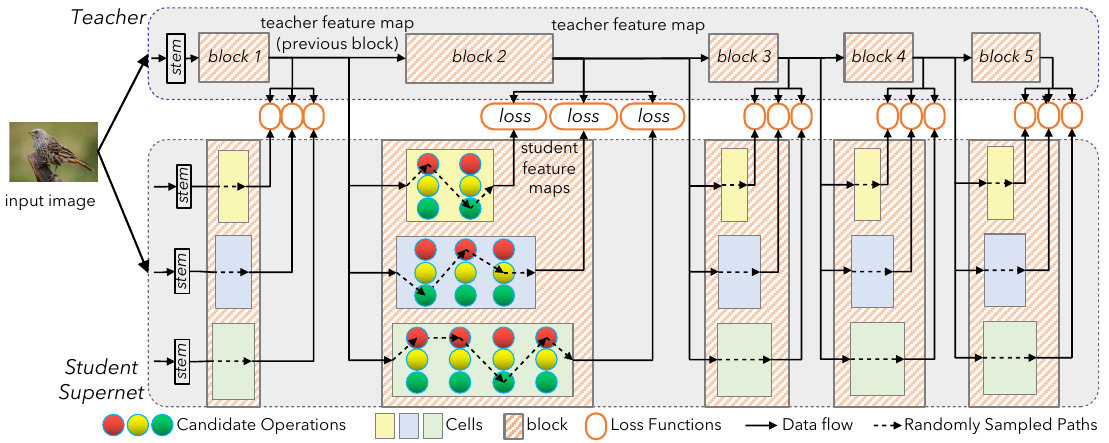}
\end{center}
\vspace{-15pt}
 \caption{\small{Illustration of our DNA. The teacher's preceding feature map is used as input for both teacher and student blocks. Each cell of the supernet is trained independently to mimic the behavior of the corresponding teacher block by minimizing the L2 distance between their output feature maps. The dotted lines indicate randomly sampled paths in a cell.} \textbf{(Best viewed in color)}} \label{fig:training}
 \vspace{-11pt}
\end{figure*}

Formally, let $\alpha_j$ and $\psi_j$ denote an architecture and its network weights, respectively. NAS is a problem to find the optimal pair ($\alpha_j^*$, $\psi^*_j$) such that the model performance is maximized. To perform searching, people need to assess many architectures' quality as rewards for adjusting search policies, i.e., they need to train each architecture separately to obtain $\psi_j^* = \mathop{\arg\min} \mathcal{L}(\psi_j|\alpha_j)$\footnote{$\mathcal{L}$ is the loss; we ignore the input data and the labels for simplicity.} and use $\psi_j^*$ to rate the architecture. However, training each architecture is quite inefficient. Weight-sharing NAS methods formulate the search space $\mathcal{A}$ into an over-parameterized supernet such that each candidate architecture $\alpha_j$ is a subnet of the supernet. They train all architectures concurrently in the supernet to obtain optimal supernet weights $\Psi^{sup} = \mathop{\arg\min}\mathcal{L}(\Psi|\mathcal{A})$. Then, they extract weights from the supernet for each subnet for validation and use this validation accuracy to rate the subnet. The dilemma of weight-sharing NAS is that there is a low correlation between the predicted and the actual architecture quality. To analyze this conjecture, we first have a generalization boundedness theorem.

\noindent
\textcolor{black}{
\begin{tcolorbox}[colframe=black,colback=lightgray!30,boxrule=0.5pt,boxsep=2pt,left=1pt,right=1pt,top=0pt,bottom=0pt]
\begin{theorem}\label{theorem:generalization}
\textbf{(Generalization boundedness).} For any subnet $\alpha_j$, we use $\psi_j^{sup}$ to denote its sub-optimal weights extracted from a trained supernet and use $\psi_j^{*}$ to denote its ideal weights when trained alone. 
Then, the Frobenius norm of $\psi_j^{sup}$ is upper bounded by:\begin{small}\begin{equation}\label{eqn:bound-a}
\big\| \psi_j^{sup} \big\|_F 
 \le \sqrt{\mathcal{C}_1 + \mathcal{C}_2\sum\limits_{t=1}^{T} \big| \mathcal{C}_3 - \mathcal{L}(\psi_t^*| \alpha_t)\big| } ,
\end{equation}\end{small}where \begin{small}$\mathcal{C}_1\ge 0$\end{small}, \begin{small}$\mathcal{C}_2 \ge 0$\end{small}, and \begin{small}$\mathcal{C}_3$\end{small} are constants.
\end{theorem}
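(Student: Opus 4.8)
The plan is to treat the weight-sharing objective as the aggregate single-path loss, $\mathcal{L}(\Psi\mid\mathcal{A})=\sum_{t=1}^{T}\mathcal{L}(\psi_t\mid\alpha_t)+\lambda\|\Psi\|_F^{2}$, where $\alpha_1,\dots,\alpha_T$ enumerate the candidate subnets, $\psi_t$ is the coordinate slice of $\Psi$ selected by $\alpha_t$, and $\lambda>0$ is the weight-decay coefficient used when training the supernet; this is the standard weight-sharing formulation, up to the normalisation of the sum, which I absorb into the constants. The idea is to pit the \emph{optimality} of $\Psi^{sup}=\arg\min_{\Psi}\mathcal{L}(\Psi\mid\mathcal{A})$ against a fixed reference point $\Psi^{0}$ (e.g. the random initialisation, for which every $\mathcal{L}(\psi_t^{0}\mid\alpha_t)$ is finite), squeeze out a bound on $\|\Psi^{sup}\|_F$, and then pass to the single subnet via $\|\psi_j^{sup}\|_F\le\|\Psi^{sup}\|_F$, which holds because $\psi_j^{sup}$ is merely a sub-collection of the coordinates of $\Psi^{sup}$.

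First I would write optimality as $\mathcal{L}(\Psi^{sup}\mid\mathcal{A})\le\mathcal{L}(\Psi^{0}\mid\mathcal{A})$, i.e.
\[
\sum_{t=1}^{T}\mathcal{L}(\psi_t^{sup}\mid\alpha_t)+\lambda\big\|\Psi^{sup}\big\|_F^{2}\;\le\;\sum_{t=1}^{T}\mathcal{L}(\psi_t^{0}\mid\alpha_t)+\lambda\big\|\Psi^{0}\big\|_F^{2}.
\]
Then I would bound the trained losses on the left from below by their stand-alone optima, $\mathcal{L}(\psi_t^{sup}\mid\alpha_t)\ge\mathcal{L}(\psi_t^{*}\mid\alpha_t)$ (immediate from $\psi_t^{*}=\arg\min\mathcal{L}(\cdot\mid\alpha_t)$), and the reference losses on the right from above by a single constant $\mathcal{C}_3:=\sup_t\mathcal{L}(\psi_t^{0}\mid\alpha_t)<\infty$. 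Rearranging,
\[
\lambda\big\|\Psi^{sup}\big\|_F^{2}\;\le\;\lambda\big\|\Psi^{0}\big\|_F^{2}+\sum_{t=1}^{T}\big(\mathcal{C}_3-\mathcal{L}(\psi_t^{*}\mid\alpha_t)\big),
\]
and since $\psi_t^{*}$ is a minimiser, $\mathcal{L}(\psi_t^{*}\mid\alpha_t)\le\mathcal{L}(\psi_t^{0}\mid\alpha_t)\le\mathcal{C}_3$, so every summand on the right is non-negative and hence equals $|\mathcal{C}_3-\mathcal{L}(\psi_t^{*}\mid\alpha_t)|$.

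Finally I would divide by $\lambda$, invoke $\|\psi_j^{sup}\|_F^{2}\le\|\Psi^{sup}\|_F^{2}$, set $\mathcal{C}_1:=\|\Psi^{0}\|_F^{2}\ge0$ and $\mathcal{C}_2:=1/\lambda\ge0$, and take a square root; this yields exactly \eqref{eqn:bound-a}, with $\mathcal{C}_1,\mathcal{C}_2,\mathcal{C}_3$ depending only on the fixed training configuration and not on which subnet $\alpha_j$ is chosen. The whole dependence on the search-space size then sits inside the sum: when the per-subnet optimal losses $\mathcal{L}(\psi_t^{*}\mid\alpha_t)$ are uniformly small, the right-hand side behaves like $\sqrt{\mathcal{C}_1+\mathcal{C}_2\mathcal{C}_3 T}$, i.e. it grows with $T$, which is the phenomenon the rest of the paper uses to explain the unreliability of weight-sharing ratings.

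I expect the main obstacle to be keeping the curvature/regularisation hypothesis honest. The clean chain above leans on the explicit $\lambda\|\cdot\|_F^{2}$ term in real training; to state the bound for the un-regularised loss one must instead assume a local quadratic-growth (Polyak--Lojasiewicz-type) condition $\mathcal{L}(\psi\mid\alpha_t)-\mathcal{L}(\psi_t^{*}\mid\alpha_t)\ge\tfrac{\mu}{2}\|\psi-\psi_t^{*}\|_F^{2}$ near each $\psi_t^{*}$ --- plausible only in a neighbourhood, e.g. the linearised/NTK regime --- then run the same optimality comparison against $\Psi^{0}$ to get $\sum_t\|\psi_t^{sup}-\psi_t^{*}\|_F^{2}\le\tfrac{2}{\mu}\sum_t|\mathcal{C}_3-\mathcal{L}(\psi_t^{*}\mid\alpha_t)|$, keep only the $t=j$ term, and convert the distance-to-$\psi_j^{*}$ bound into a norm bound via the triangle inequality and $(a+b)^2\le2a^2+2b^2$ (which reintroduces an additive $2\|\psi_j^{*}\|_F^{2}$ into $\mathcal{C}_1$). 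A minor secondary point is the bookkeeping under weight sharing: a shared parameter is counted in several $\psi_t$, so the per-path sums over-count it, but this only enlarges the left-hand sides and is harmless once all but one term is discarded --- one only needs the extraction map $\Psi^{sup}\mapsto\psi_j^{sup}$ to be the natural coordinate restriction, so that $\psi_j^{sup}$ and $\psi_j^{*}$ are compared in the same parameter space.
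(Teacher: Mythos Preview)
Your argument is correct and yields the stated bound, but it follows a different route from the paper's. The paper does not work with the full supernet weight $\Psi$ and a single regulariser $\lambda\|\Psi\|_F^{2}$; instead it decomposes each subnet's effective weight as $\omega_j=\psi_0+\psi_j$ (a component $\psi_0$ shared by all paths plus a path-specific residual $\psi_j$), with separate penalties $\lambda_0\|\psi_0\|_F^{2}$ and $\lambda\|\psi_j\|_F^{2}$. Its reference point is not an arbitrary $\Psi^{0}$ but the ``universal'' optimum $\psi_0^{*}$ (the best single shared weight with all $\psi_j\equiv 0$), and the role of your $\mathcal{C}_3$ is played by the task-dependent quantity $\mathcal{L}_{\alpha_t}(\psi_0^{*})$. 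The step that replaces your coordinate-restriction inequality $\|\psi_j^{sup}\|_F\le\|\Psi^{sup}\|_F$ is a Cauchy--Schwarz argument converting $\tilde\lambda\|\psi_0^{\bullet}+\psi_j^{\bullet}\|_F^{2}$ (with $\tilde\lambda=\lambda_0\lambda/(\lambda_0+\lambda)$) into $\lambda_0\|\psi_0^{\bullet}\|_F^{2}+\lambda\|\psi_j^{\bullet}\|_F^{2}$; a sandwich between the universal solution (above) and the stand-alone solutions (below) then gives the bound. What each buys: your version is more elementary---one optimality comparison plus a trivial sub-vector inequality---and produces a genuine \emph{constant} $\mathcal{C}_3=\sup_t\mathcal{L}(\psi_t^{0}\mid\alpha_t)$, matching the theorem statement more literally than the paper's $t$-dependent $\mathcal{L}_{\alpha_t}(\psi_0^{*})$. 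The paper's decomposition is slightly more informative mechanistically: the summand $|\mathcal{L}_{\alpha_t}(\psi_0^{*})-\mathcal{L}_{\alpha_t}(\psi_t^{*})|$ is precisely the gap between ``share everything'' and ``train alone'', so the bound visibly blows up when the search space contains architectures that no single shared weight can serve well; your $\mathcal{C}_3-\mathcal{L}(\psi_t^{*}\mid\alpha_t)$ has the same flavour but is anchored at an arbitrary $\Psi^{0}$ rather than the optimal shared one, giving a looser though equally valid quantification.
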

\end{tcolorbox}
}

\noindent
\textbf{Remark.} The Proof of Theorem \ref{theorem:generalization} is in the appendix. Theorem \ref{theorem:generalization} shows that using the trained supernet weights, a subnet's model complexity (usually measuring the generalization ability) has an upper bound associated with $T$. Increasing the search space leads to a poorer generalization ability, further implying that the architecture quality estimated by the supernet weights is not predictive of the actual quality. Suppose an architecture has excellent quality, but its generalization ability based on the supernet weights is poor. Then, its validation accuracy might be low, and its quality will be underestimated. \textbf{In summary}, the oversized search space is the devil/culprit because a big search space would result in poor generalization ability and further lead to inaccurate architecture rating, which finally leads to search ineffectiveness. On the contrary, reducing the search space can improve the effectiveness of weight-sharing NAS.

\textcolor{black}{
Theorem \ref{theorem:generalization} underscores the potential for block-wise supervision to enhance architecture ranking, a concept that resonates with ``Modular Learning \cite{ballard1987modular}''. Modular learning effectively demonstrated the advantages of decomposing neural network training into modular components and independently training each model, showcasing a multitude of benefits.
In our context, Theorem \ref{theorem:generalization} supports the idea that modular learning ensures fair and thorough training of each subnet, a key factor contributing to improved ranking.
}

\vspace{-11pt}
\subsection{Modularizing Search Space into Blocks
}\label{sect:dna}

As discussed, to improve weight-sharing NAS's effectiveness, one should reduce the search space. Unfortunately, direct replacement of a large space with a small one is inadvisable since it leads to a small accuracy range, making the search meaningless. Keeping the whole search unchanged, we modularize the large space into blocks. The candidate number in each block is thus significantly smaller than that in the entire search space. That is, we divide the supernet $\mathcal{U}$ into $N$ blocks \emph{w.r.t} the depth of the supernet:
\begin{small}\begin{equation}
\left\{
\begin{aligned}
\mathcal{U} &= \mathcal{U}_N \circ \dots \circ \mathcal{U}_{k+1} \circ \mathcal{U}_{k} \circ\dots \circ \mathcal{U}_1\\
\Psi &= \Big[\Psi_N ; \dots ; \Psi_{k+1} ; \Psi_{k} ; \dots ; \Psi_1\Big]\\
\mathcal{Z} &= \Big[\mathcal{Z}_N,; \dots ; \mathcal{Z}_{k+1};\mathcal{Z}_{k}; \dots ; \mathcal{Z}_1\big\}\Big]\\
\end{aligned} ,
\right.
\end{equation}\end{small}where $\mathcal{U}_{k+1} \circ \mathcal{U}_{k}$ represents that the ($k$+$1$)-th block is connected to the $k$-th block in the supernet. $\Psi_k$ is the network weights of the $k$-th block. $\mathcal{Z}_k = \big\{\mathcal{X}_{k}, \mathcal{Y}_{k}\big\}$ are the input data and the supervision of the $k$-th block. We optimize each block of the supernet separately:\begin{small}\begin{equation}\label{eq:loss-block}
\Psi_k^{sup} = \mathop{\arg\min}_{\Psi_k} {\mathcal{L}_{\text{train}}(\Psi_k | \mathcal{A}_k}, \mathcal{X}_k, \mathcal{Y}_k ), ~~~~i = 1,2\cdots,N,
\end{equation}\end{small}where $\mathcal{A}_k$ denote the search space of the $k$-th block.

\textcolor{black}{\textbf{Blocky \emph{vs.} entire search space}. Let $d_k$ and $c$ denote the depth of the $k$-th block and the number of candidate operations in each layer. Then, the size of the search space of the $k$-th block is $c^{d_k}$; the size of the entire search space is \begin{small}$\prod_{k=0}^N c^{d_k}$\end{small}. This indicates an exponential reduction in search space, i.e., $(\prod_{k=1}^N c^{d_k})/c^{d_k}$. In our experiment, the blocky search space is significantly smaller than the entire search space (e.g., reduction ratio$\approx 1e^{\frac{15}{N}}$), ensuring effective weight-sharing NAS.} Finally, the architecture is searched across the different blocks in the \underline{whole search space $\mathcal{A}$}:\begin{small}\begin{equation}\label{eq:loss-supernet}
\begin{aligned}
&\alpha^{*} = \mathop{\arg\min}\limits_{\forall \alpha \in \mathcal{A}}\sum_{k=1}^{N} \lambda_k \mathcal{L}_{\text{val}}\Big(\Psi_k^{\sup}|\mathcal{A}_k, \mathcal{X}_k, \mathcal{Y}_k\Big) \\
s.t. ~~&\Psi_k^{sup} = \mathop{\arg\min}_{\Psi_k} \mathcal{L}_{\text{train}}(\Psi_k | \mathcal{A}_k, \mathcal{X}_k, \mathcal{Y}_k), k = 1,\cdots,N
\\
\end{aligned},
\end{equation}\end{small}where $\lambda_k$ represents the loss balance.

\noindent
\textcolor{black}{
\textbf{Remark.} \emph{It's crucial to highlight that, although we use block-wise supervision to help train the supernet to provide block-wise local scores, our architecture search is still performed in an entire architecture by taking all the local scores into consideration. Previous works \cite{anonymous2020improving} assume that the weights at each layer in deep nets are independent. In contrast, we consider the weights at different layers of the deep nets to be highly dependent on each other.}
Our algorithm in Section \ref{sec:supernetdistill} allows us to precisely identify the highest-reward architecture among the full search space of $10^{17}$ architectures.
Furthermore, our approach can also search for architectures with varying depths and widths while adhering to specific computational constraints.
}

\vspace{-11pt}
\subsection{DNA: Distillation via Supervising Learning}\label{sec:supernetdistill}

Although we motivate well in Section \ref{sect:dna}, a technical barrier is that we lack internal ground truth in Eqn. \eqref{eq:loss-block}. 
Inspired by knowledge distillation (KD) \cite{Hinton2015DistillingTK}, we use the hidden features of a teacher as the supervision. Let $\mathcal{Y}_k$ be the output tensor of the kth block a teacher and $\hat{\mathcal{Y}}_k(\mathcal{X}_k)$ be the output tensor of the kth block of the supernet. We take the L2 norm as the cost function. The loss function in Eqn. \eqref{eq:loss-block} can be written as:\begin{small}\begin{equation}\label{eq:distill-loss}
\mathcal{L}_{\text{train}}(\Psi_k | \mathcal{A}_k, \mathcal{X}_k, \mathcal{Y}_k) = \frac{1}{K} \left\| \mathcal{Y}_k - \hat{\mathcal{Y}}_k(\mathcal{X}_k) \right\|_2^2,
\end{equation}
\end{small}where $K$ denotes the neuron numbers in $\mathcal{Y}$. Moreover, inspired by the remarkable success of the transformers \cite{Vaswani2017Attention_nips,Devlin2019pretraining_naacl,Dong2021EfficientBERT_emnlp} that discards the inefficient sequential training of RNN, we parallelize our supernet training analogously. Specifically, we use the output of the (k-1)th block of the teacher as the input of the (k-1)th block of the supernet, i.e., we use $\mathcal{Y}_{k-1}$ to replace $\mathcal{X}_k$ in Eqn. \eqref{eq:distill-loss}. Thus, the search can be sped up in parallel. Eqn. \eqref{eq:distill-loss} can be written as:\begin{small}\begin{equation}\label{eq:distill-loss-re}
\mathcal{L}_{\text{train}}\Big(\Psi_k | \mathcal{A}_k, \mathcal{Y}_{k-1}, \mathcal{Y}_k \Big) = \frac{1}{K} \left\| \mathcal{Y}_k - \hat{\mathcal{Y}}_k(\mathcal{Y}_{k-1}) \right\|_2^2.
\end{equation}
\end{small}Fig.\ref{fig:training} shows a pipeline of DNA.

\begin{figure}
\begin{center}
\includegraphics[width=1.0\linewidth]{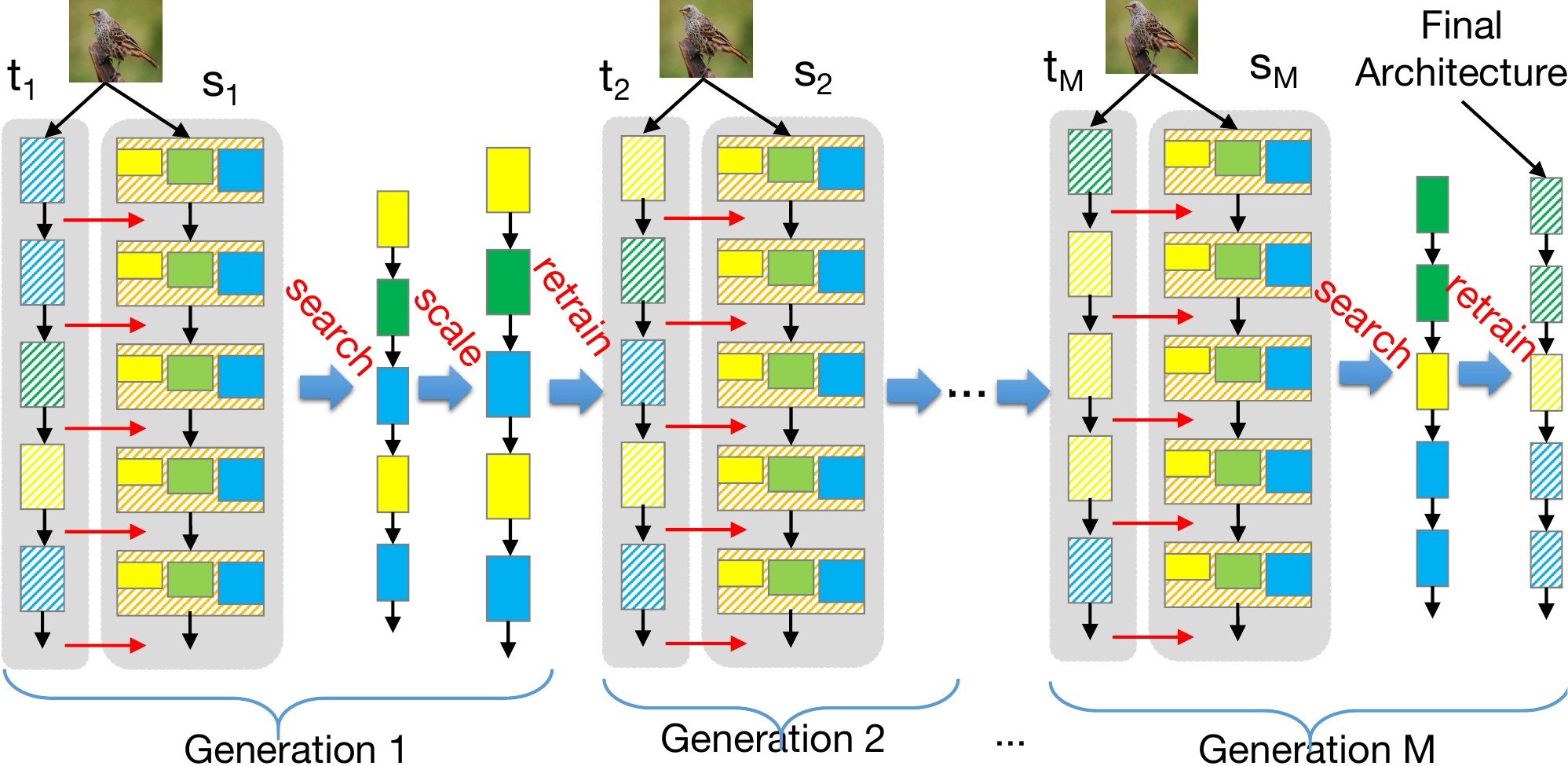}
\end{center}
\vspace{-15pt}
 \caption{\small{Illustration of DNA+. In the first generation, we use an existing model as the teacher model. Then, at each consecutive generation, a new teacher is obtained by scaling the searched architecture of the previous generation and retraining the scaled architecture. The finally searched architecture is the optimal student $\alpha^{M*}$ in the last generation, which is retrained without scaling.}}\label{fig:born-again}
 \vspace{-15pt}
\end{figure}

\textbf{Searchable depths \& widths.} Automatically allocating each block's model complexity under a particular constraint is vital in NAS. To better imitate the teacher, each block's model complexity should be allocated according to the corresponding teacher block's learning difficulty adaptively. With the input image size and the stride of each block fixed, generally, the computation allocation is only related to the width and depth of each block, which are burdensome to search in weight-sharing NAS because the width and depth are usually pre-defined when designing the supernet. Most previous works include \emph{identity} as a candidate operation for depth search (\cite{bender2018understanding, Liu2018DARTSDA, Cai2018ProxylessNASDN, Wu2018FBNetHE, anonymous2020improving}). However, as pointed out by \cite{Chu2019ScarletNASBT}, adding identity as a candidate operation can lead to supernet convergence difficulty and an unfair comparison of sub-models. Also, adding identity as a candidate operation may lead to redundant search space. For example, a sequence \{conv, identity, conv\} is equivalent to \{conv, conv, identity\}. According to Theorem \ref{theorem:generalization}, this redundant search space can cause search ineffectiveness. Besides, \cite{anonymous2020computation} searched for the layer number with fixed operations for the first step and subsequently searched for three operations with a fixed layer number. However, the operation choice depends on each block's depth, leading to a cumulative search departure, especially when search space increases. Thanks to block-wise supervision, we can train several \emph{cells} with different channel numbers/layer numbers \emph{in parallel} in each block to ensure depth/width variability. Specifically, in each training step, the teacher's previous feature map is fed to three cells with different depths/withs in parallel (e.g., solid line of data flow in Fig. \ref{fig:training}). For each layer of each cell, an operation is randomly sampled from the candidate operations (e.g., dotted line of data flow in Fig. \ref{fig:training}).

\textbf{Constrained search algorithm.} Our typical supernet contains about $10^{17}$ sub-models, stopping us from rating all of them. Previous weight-sharing NAS used random sampling, evolutionary algorithms, and reinforcement learning to sample sub-models from the trained supernet for further rating. In the latest works (\cite{anonymous2020computation,anonymous2020improving}), a greedy search algorithm was used to progressively shrink the search space by layer-wisely selecting the top-performing partial models. In contrast, although using block-wise supervision to calculate blocky local scores, our architecture search is still performed in the global search space by considering all the local scores. We can subtly traverse all the subnets to select the top-performing ones under certain constraints.
\begin{itemize}
    \item \emph{Rating step.} Block-wise supervision enables rating all candidates in a search space. 
    We first compute local scores using block-wise supervision, which is affordable because there are only $10^4$ sub-models in each cell.
    For further acceleration, we process batch data \emph{node by node} in a manner similar to a depth-first algorithm, with the intermediate output of each node saved and reused by subsequent nodes to avoid recalculating it from the root node. 
    The feature sharing evaluation algorithm is outlined in Alg. \ref{alg:eval} in the appendix. By evaluating all cells in a block of the supernet, we can get the evaluation loss of all possible paths in one block. We can quickly sort this list with about $10^4$ elements in a few seconds with a single CPU.
    Each block has such a local-score list\footnote{One may say that we can select the top-1 partial model in each local-score list to assemble the best student. But this short-sighted selection could lead to local minima. In contrast, our architecture search is performed in the global search space by considering all local scores and their computation complexity. Please refer to the ``search step.''}.
    
    \item \emph{Searching step.} Given a computational cost constraint, we should automatically allocate costs to each block using a fair rating metric for different blocks. As an MSE loss is affected by the variance of a teacher's feature map, we use a fairer metric, i.e., relative L1 loss:\begin{small}\begin{equation}\label{eqn:local_score}
\mathcal{L}_{\text{val}}(\Psi_k | \mathcal{A}_k, \mathcal{Y}_{k-1}, \mathcal{Y}_k ) =\frac{|| \mathcal{Y}_k - \hat{\mathcal{Y}}_k(\mathcal{Y}_{k-1}) ||_1}{{K\sqrt{D(\mathcal{Y}_k)}}},
\end{equation}\end{small}where \begin{small}$D(\cdot)$\end{small} measures the variance. The blocky local scores \begin{small}$\mathcal{L}_{\text{val}}\big(\Psi_k | \mathcal{A}_k, \mathcal{Y}_{k-1}, \mathcal{Y}_k\big)$\end{small} are summed for global search. Efficiently, we don't need to compute the cost (e.g., FLOPs) and losses for all $10^{17}$ candidates. With the scores in a local-score list ranked, we propose an efficient search algorithm for visiting all possible models (Alg. \ref{alg:search} in the appendix). First, during a search loop, if the cost already exceeds the constraint, we use the statement ``continue'' to jump to the next loop iteration. Second, when a model satisfying the constraint is found, we return to the previous block because this model is optimal in the current block.
Third, we get the cost of each candidate operation by a pre-calculated lookup table to save time.
\textcolor{black}{The approach can be likened to a hiking expedition from a starting point A to a destination B, with numerous possible routes to explore. Each path yields a different reward. If we identify point C as a key milestone within the optimal path from A to B and, furthermore, discover the highest-reward path from A to C, denoted as $\overline{AC}$, then it logically follows that the best path from A to B via C must include $\overline{AC}$. Any path that does not incorporate $\overline{AC}$ is not optimal. Leveraging this concept, we are able to efficiently and accurately pinpoint the highest-reward architecture from the extensive pool of $10^{17}$ samples.}
\end{itemize}

\begin{figure*}
\begin{center}
\includegraphics[width=0.6\linewidth]{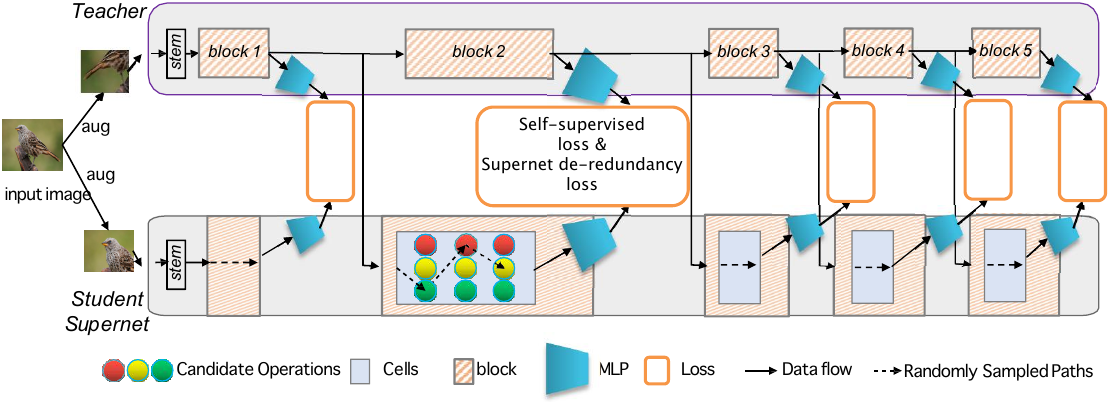}
\end{center}
\vspace{-15pt}
 \caption{\small{Illustration of DNA++. It uses self-supervisions to replace existing supervising teachers, avoiding architecture shifts (referring to a phenomenon that students with a similar architecture to a teacher tend to be favored when the teacher is traditional). DNA++ contains two losses, i.e., a self-supervised loss and a particular loss to remove redundant non-learnable supernets.}}\label{fig:selfsup}
 \vspace{-15pt}
\end{figure*}

\vspace{-11pt}
\subsection{DNA+: Distillation via Progressive Learning}\label{sect:idna}

After searching, our searched architecture has super good performance. If we scale our search architecture to the same size as the teacher, our scaled architecture can significantly outperform the original teacher. This impressive result encourages us to perform a cascade NAS. We iteratively scale up our searched architecture to be a new teacher and search for a new architecture guided by the new teacher generation by generation, mimicking the merit of a born-again neural network \cite{furlanello2018born} with multiple generations of knowledge transfer. This forms another DNA version, i.e., DNA+.

In the first generation, we use an existing model (e.g., EfficientNet-B1) as the teacher model, i.e.:\begin{equation}\label{eqn:1_g}
\mathcal{Y}_k^1 = f(\mathcal{X}_k^1 | \beta_k^1, \Psi_k^1),
\end{equation}where $\beta^1$ and $\Psi^1$ are the architecture and network weights of the first generation teacher. $\mathcal{X}^1$ and $\mathcal{Y}^1$ are the input and output. Here, we use $k$ to denote the kth block. Substituting $\mathcal{Y}_k^1 $ into Eqn. \eqref{eq:loss-supernet}, we can train the first generation supernet and search for an optimal architecture $\alpha^{1*}$ of Generation 1.

At each consecutive generation, a new teacher is obtained by scaling the searched architecture of the earlier generation using the scaling strategy from \cite{Tan2019EfficientNetRM} (denoted as $\mathcal{S}caling$ in Eqn. \eqref{eqn:other_g}). Then, we retrained the new teacher model from scratch, which is used as the block-wise supervision in this generation:\begin{equation}\label{eqn:other_g}
\left\{
\begin{aligned}
&\beta^m = \mathcal{S}caling (\alpha^{(m-1)*}) \\
&\mathcal{Y}_k^m  = f(\mathcal{X}_k^m | \beta_k^m, \Psi_k^m)
\end{aligned}.
\right.
\end{equation}The search process in each consecutive generation is similar to the first generation. We perform the search iteratively for several generations to saturate. At the end of the procedure, the finally searched architecture is the optimal student $\alpha^{M*}$ in the last generation, which will be retrained without scaling. Fig. \ref{fig:born-again} shows a pipeline of DNA+.

\textcolor{black}{\textbf{Kind reminder:} We would like to highlight the fact that the improvement in architectural ranking is primarily attributed to block-wisely supervised learning. Knowledge distillation just serves as a means to achieve this block-wisely supervised learning.}

\vspace{-11pt}
\subsection{DNA++: Distillation via Self-Supervised Learning}\label{sect:dna++}

Despite the high search performance of DNA and DNA+, their searched results might be influenced by the teacher architecture. Specifically, according to Eqn. \eqref{eqn:local_score}, candidates with operations more similar to the teacher tend to have a better local score in each block, resulting in a biased architecture ranking. When the search space is close to the teacher network, this bias is not apparent. \textcolor{black}{But as suggested by \cite{Li2021BossNAS_iccv}, if the search space is more divergent, this bias will be amplified. Fortunately, without access to an existing teacher, self-supervised NAS methods have been proven capable of achieving comparable performance to supervised NAS methods. Hence, we use self-supervision as an alternative to the teachers' supervision to reduce the ranking bias caused by the use of existing teacher models.}

The most effective strategies to use self-supervision in computer vision include contrastive learning \cite{Chen2020Simple_icml,Dong2021EfficientBERT_emnlp,Grill2020Bootstrap_neurips}, metric learning \cite{Wang2021Solving_iccv}, and generation \cite{wang2022semantic_cvpr}. In these strategies, people make two independent data augmentations for each image and input them into a learnable and non-learnable network. Representations are learned by fully tapping the similarity of the two outputs. Note that the non-learnable network is usually a moving average of the learnable network (i.e., the mean teacher) \cite{Chen2020Simple_icml,Dong2021EfficientBERT_emnlp,Grill2020Bootstrap_neurips,Wang2021Solving_iccv} or a copy of the learnable network with its gradients being stopped \cite{Chen2021Exploring_cvpr}. In other words, the learnable and non-learnable networks need to be identical; otherwise, learning will collapse. This makes it very difficult to introduce contrast learning and metric learning into NAS. \textcolor{black}{To solve this problem, \cite{Li2021BossNAS_iccv} and \cite{Peng2021Pi_iccv} set both the teacher and the learning network as supernets, which makes them very bloated and inelegant. Fortunately, inspired by \cite{Bardes2021VICReg_arxiv}, we are able to elegantly solve this problem by introducing DNA++.}

\textbf{Overview \& Notation}. An overview of DNA++ is shown in Figure 2. For each block, we first use two MLPs to extract its features. The features are denoted as $\mathcal{Z}_k^j$ and $\hat{\mathcal{Z}}_k^j$, where $j$ denotes the j-th sample in a batch and $k$ denotes the k-th block. Then we impose two losses on the features, namely a self-supervised loss and a particular loss to remove redundant non-learnable supernets.

\textbf{Self-Supervised Loss.} A basic loss in self-supervised learning is the distance metric loss \cite{Grill2020Bootstrap_neurips,Chen2020Simple_icml,Wang2021Solving_iccv}, which aims to learn the similarities and dissimilarities between features. To stabilize optimization and ease self-supervised learning, regulation is usually also required. Therefore, our self-supervised loss is defined as:\begin{small}\begin{equation}\label{eqn:ssl}
\begin{aligned}
\mathcal{L}_\text{SSL} = &\frac{\lambda_1}{KM}\sum\limits_{k=1}^K\sum\limits_{j=1}^M\| \mathcal{Z}_k^j - \hat{\mathcal{Z}_k^j}\|_2^2 +  \\
& \frac{\lambda_2}{KC}\sum\limits_{k-1}^{K}(\|\text{Off-diag}(Cov(\mathcal{Z}_k))\|_2^2  +\|\text{Off-diag}(Cov(\hat{\mathcal{Z}_k}))\|_2^2), \\
\end{aligned}
\end{equation}\end{small}where the first term is a distance metric loss, and the second term is a regularization loss. The role of the regularization term is seen in \cite{Zbontar2021Barlow_icml}. Here $K$ is the block number, $M$ is the batch size, and $C$ is the channel number. $\lambda_1$ and $\lambda_2$ are used for loss balance. $\text{Off-diag}$ represents taking the off-diagonal elements of a matrix. $Cov$ stands for calculating the covariance matrix of a vector set, i.e., $Cov(\mathcal{Z}_k) = \frac{1}{M-1}\sum_{j=1}^{M}(\mathcal{Z}_k^j - \bar{\mathcal{Z}_k})(\mathcal{Z}_k^j - \bar{\mathcal{Z}_k})^T$, where $\bar{\mathcal{Z}_k} = \frac{1}{M}\sum_{j=1}^{M}\mathcal{Z}_k^j$.

Eqn. \eqref{eqn:ssl} is a standard self-supervised loss, and it inherits one of its inconveniences, i.e., it requires a redundant non-learnable network for optimization. In the context of weight-sharing NAS, if only Eqn. \eqref{eqn:ssl} is used, a redundant non-learnable supernet is inevitably required, making optimization difficult and inelegant. Next, we aim to remove this redundant non-learnable supernet.

\textbf{Avoiding Redundant Supernets.} The reason why the existing self-supervised algorithms rely on redundant non-learnable networks is that if there are only learnable networks, the optimization will collapse into a shortcut, such as the output of different samples being all the same constant vector. Hence, one possible way to avoid the redundant non-learnable supernet is to encourage the outputs of different samples to be divergent. Inspired by \cite{Bardes2021VICReg_arxiv}, we impose a loss function on the output of different samples, maximizing their variance, i.e.:\begin{equation}\label{eqn:var}
\mathcal{L}_{SDR} = \frac{\lambda_3}{CK}\sum\limits_{k=1}^K\sum\limits_{c=1}^{C}\max(0, \gamma - std(\mathcal{Z}_{k,c})),
\end{equation}where $\max(0, )$ is a hinge-like loss, which is widely used in SVMs. $\gamma$ is a threshold. $std$ represents computing the standard deviation, i.e., $std(\mathcal{Z}_{k,c}) = \sqrt{\text{Var}(\mathcal{Z}_{k,c}) + \epsilon}$. $\epsilon$ is a small constant, and $\mathcal{Z}_{k,c}$ denotes the c-th channel neuron of the k-th block. After adding the above supernet de-redundancy loss $\mathcal{L}_{SDR}$, we no longer need an extra non-learnable supernet to train our supernet. Our final loss is $\mathcal{L}_{SSL} + \mathcal{L}_{SDR} $. In this way, self-supervision is cleverly combined with weight-sharing NAS to form our DNA++.

\emph{In DNA++, our self-supervision can be a CNN, and the student supernet is a ViT. Thus, we can search for ViTs.} Fig. \ref{fig:selfsup} shows a pipeline of DNA++.

  \vspace{-11pt}
\section{Experiments} \label{sect:exp}

\subsection{Datasets.}\label{sect:dataset}

We evaluated our method on ImageNet \cite{deng2009imagenet}, a benchmark for practical NAS methods. For architecture search, we created a 50k-image validation set by randomly selecting 50 images from each class of the original training set. The rest of the images were used for supernet training. After search, both the found architectures and scaled architectures were \textbf{retrained from scratch} on the original training set \textbf{without} teacher network supervision and tested on the original validation set. Additionally, we assessed transferability on CIFAR-10 and CIFAR-100 \cite{Krizhevsky09cifar}. For a more comprehensive evaluation of generality, we conducted semantic segmentation tasks on PASCAL VOC 2012 \cite{Everingham2015Pascal_ijcv} and ADE20K \cite{zhou2019semantic}, and object detection tasks on MS-COCO dataset \cite{Lin2014Microsoft_eccv}.

\textbf{ImageNet NAS Bench.} To evaluate NAS methods, prior approaches commonly retrain searched architectures from scratch, making it hard to discern whether improvements are due to NAS effectiveness or retraining techniques. NAS benchmarks exist but are often based on small datasets (e.g., CIFAR-10 or ImageNet-Tiny) and cell-based search spaces. Addressing this, we created a benchmark on full ImageNet using the mobile-setting search space, comprising 23 randomly sampled architectures with top-1 accuracies ranging from 73.58\% to 75.52\% that align with the true accuracy distribution. More details are in Fig. \ref{fig:bench} in the appendix.

\vspace{-11pt}
\subsection{Search Spaces and Architecture Details}\label{sec:search_spaces}

Our search spaces are defined by selecting operations from operation candidates and selecting cells with varying channel and layer numbers. They include ViT and MBConv search spaces.

\textbf{MBConv Search Spaces.} In these spaces, operation candidates are MBConvs
\cite{Sandler2018MobileNetV2IR}. We use two MBConv search spaces. The first one used in Section \ref{sec:performance} is similar to most of the recent NAS works \cite{Tan2018MnasNetPN, Tan2019EfficientNetRM, Chu2019ScarletNASBT, Chu2019MoGASB} to ensure a fair comparison, which has 6 operation types in total, i.e., a combination of convolution kernel sizes of \{3, 5, 7\} and expansion rates \{3, 6\}. Table \ref{tab:searchspacecmp} in the appendix provides a detailed search space comparison with existing NAS methods. For fast evaluation in Section \ref{sec:effctiveness} and \ref{sec:ablation}, a smaller search space with 4 operation types (kernel sizes of \{3, 5\} and expansion rates \{3, 6\}) is used. Besides, we also search for cells with different channel numbers and layer numbers, as introduced in Section \ref{sec:supernetdistill}. There are 3 cells in each of the first five blocks and 1 cell in the last block. The layer numbers and channel numbers of each cell are shown in Table \ref{tab:design}. The whole search space contains $2\times10^{17}$ architectures.

We use EfficientNet B7 \cite{Tan2019EfficientNetRM} as our teacher for supernet training due to its state-of-the-art performance and relatively low computational cost compared to ResNeXt-101 \cite{Xie2016AggregatedRT} and other manually designed models. We divide the teacher model into 6 blocks in sequence according to their filter numbers. The details of these blocks are shown in Table \ref{tab:design}.

\begin{table}
\centering
\caption{\small{Supernet design. ``$l\#$'' and ``ch$\#$'' represent the numbers of the layers and channels of each cell, respectively.}}\label{tab:design}
\vspace{-10pt}
\resizebox{0.4\textwidth}{!}{
\begin{tabular}[t]{p{1.2cm}|p{0.3cm}p{0.5cm}|p{0.3cm}p{0.5cm}|p{0.3cm}p{0.5cm}|p{0.3cm}p{0.5cm}}
 \toprule
 model&\multicolumn{2}{c|}{teacher}&\multicolumn{6}{c}{student supernet}\cr\hline
 - & \multicolumn{2}{c|}{-} & \multicolumn{2}{c|}{cell 1} & \multicolumn{2}{c|}{cell 2} & \multicolumn{2}{c}{cell 3}\\\hline
 - & $l\#$ & ch$\#$ & $l\#$ & ch$\#$ & $l\#$ & ch$\#$ & $l\#$ & ch$\#$ \\
 \hline
 block 1 & 7 & 48 & 2 & 24 & 3 & 24 & 2 & 32\\
 block 2 & 7 & 80 & 2 & 40 & 3 & 40 & 4 & 40\\
 block 3 & 10 & 160 & 2 & 80 & 3 & 80 & 4 & 80\\
 block 4 & 10 & 224 & 3 & 112 & 4 & 112 & 4 & 96\\
 block 5 & 13 & 384 & 4 & 192 & 5 & 192 & 5 & 160\\
 block 6 & 4 & 640 & 1 & 320 & - & - & - & -\\
 \bottomrule
\end{tabular}
}
\vspace{-20pt}
\end{table}

\textbf{ViT Search Space.} In this space, we build our supernet based on the well-known DeiT \cite{touvron2020deit}. The search space design is inspired by slimmable networks~\cite{yu2018slimmable,li2021dynamic,jiang2023dynamic}, especially DS-ViT \cite{Li2022Ds_tpami}. We define the candidate operations by the embedding dimension (224, 448, 32), the head number (7, 14, 1), the MLP ratio (2.5, 4, 0.5), and the network depth (12, 16, 1). Here, ($a$, $b$, $l$) represents a search space ranging from $a$ to $b$ with interval $l$. The whole search space is divided into 4 blocks, and contains about $1.7\times10^7$ architectures in total.

Although our search space/supernet is ViT-based, we do not use a ViT as a teacher to guide our supernet training. Instead, we adopt EfficientNet-B1 as our teacher due to two reasons. First, although a ViT has high performance, its computational complexity (e.g., in terms of parameter number) is higher than an EfficientNet-B1. Second, the architectural differences between a teacher and a student supernet help to test the effectiveness of our proposed block-wise self-supervision technique in addressing architecture shifts.

\begin{table}
\centering
\caption{\small{Comparison of state-of-the-art NAS models on ImageNet. The input size is $224\times 224$. RA: RandAugment \cite{cubuk2019randaugment}.}}\label{tab:imagenet}
\vspace{-6pt}
\resizebox{0.43\textwidth}{!}{
 \begin{tabular}{p{3.5cm}|p{0.9cm}|p{0.9cm}|p{0.9cm}|p{0.9cm}}
 model &Params & FLOPs & Acc@1 & Acc@5\\
 \hline\hline
 SPOS \cite{Guo2019SinglePO} & - & 319M & 74.3\% & -\\
 ProxylessNAS \cite{Cai2018ProxylessNASDN} & 7.1M & 465M & 75.1\% & 92.5\%\\
 FBNet-C \cite{Wu2018FBNetHE} & - & 375M & 74.9\% & -\\
 MobileNetV3 \cite{Howard2019SearchingFM} & 5.3M & 219M & 75.2\% & -\\
 MnasNet-A3 \cite{Tan2018MnasNetPN} & 5.2M & 403M & 76.7\% & 93.3\%\\
 FairNAS-A \cite{Chu2019FairNAS_arxiv} & 4.6M & 388M & 75.3\% & 92.4\%\\
 MoGA-A \cite{Chu2019MoGASB} & 5.1M & 304M & 75.9\% & 92.8\%\\
 SCARLET-A \cite{Chu2019ScarletNASBT} & 6.7M & 365M & 76.9\% & 93.4\%\\
 PC-NAS-S \cite{anonymous2020improving} & 5.1M & - & 76.8\% & -\\
 MixNet-M \cite{Tan2019MixConvMD} & 5.0M & 360M & 77.0\% & 93.3\%\\
 EfficientNet-B0 \cite{Tan2019EfficientNetRM} & 5.3M & 399M & 76.3\% & 93.2\%\\
 \textcolor{black}{FBNetV3-A0 \cite{dai2021fbnetv3}} & \textcolor{black}{6.3M} & - & \textcolor{black}{78.4\%} & - \\
 \textcolor{black}{FBNetV3-A \cite{dai2021fbnetv3}} & \textcolor{black}{8.6M} & - & \textcolor{black}{79.1\%} & - \\
 \textcolor{black}{AttentiveNAS-A0 \cite{wang2021attentivenas}} & \textcolor{black}{9.1M} & - & \textcolor{black}{77.3\%} & - \\
 \textcolor{black}{AttentiveNAS-A1 \cite{wang2021attentivenas}} & \textcolor{black}{9.6M} & - & \textcolor{black}{78.4\%} & - \\
 \textcolor{black}{AttentiveNAS-A2 \cite{wang2021attentivenas}} & \textcolor{black}{11.3M} & - & \textcolor{black}{78.8\%} & - \\
 \textcolor{black}{AlphaNet-A0 \cite{wang2021alphanet}} & \textcolor{black}{9.1M} & - & \textcolor{black}{77.8\%} & - \\
 \textcolor{black}{AlphaNet-A1 \cite{wang2021alphanet}} & \textcolor{black}{9.6M} & - & \textcolor{black}{78.9\%} & - \\
 \textcolor{black}{AlphaNet-A2 \cite{wang2021alphanet}} & \textcolor{black}{11.3M} & - & \textcolor{black}{79.1\%} & - \\
 \textcolor{black}{EfficientNetV2-B0 \cite{tan2021efficientnetv2}} & \textcolor{black}{7.4M} & - & \textcolor{black}{78.7\%} & - \\
 \textcolor{black}{OnceForAll-1080ti-27 \cite{cai2019once}} & \textcolor{black}{6.5M} & - & \textcolor{black}{76.4\%} & - \\
 \textcolor{black}{OnceForAll-1080ti-22 \cite{cai2019once}} & \textcolor{black}{5.2M} & - & \textcolor{black}{75.3\%} & - \\
 \textcolor{black}{OnceForAll-1080ti-15 \cite{cai2019once}} & \textcolor{black}{6.0M} & - & \textcolor{black}{73.8\%} & - \\
 \textcolor{black}{OnceForAll-1080ti-12 \cite{cai2019once}} & \textcolor{black}{5.9M} & - & \textcolor{black}{72.6\%} & - \\
 \textcolor{black}{OnceForAll-v100-11 \cite{cai2019once}} & \textcolor{black}{6.2M} & - & \textcolor{black}{76.1\%} & - \\
 \textcolor{black}{OnceForAll-v100-09 \cite{cai2019once}} & \textcolor{black}{5.2M} & - & \textcolor{black}{75.3\%} & - \\
 \textcolor{black}{OnceForAll-v100-06 \cite{cai2019once}} & \textcolor{black}{4.9M} & - & \textcolor{black}{73.0\%} & - \\
 \textcolor{black}{OnceForAll-v100-05 \cite{cai2019once}} & \textcolor{black}{5.2M} & - & \textcolor{black}{71.6\%} & - \\
 \hline
 random & 5.4M & 399M & 75.7\% & 93.1\%\\
 \hline
 DNA-a (ours) & 4.2M & 348M & 77.1\% & 93.3\% \\
 DNA-b (ours) & 4.9M & 406M & 77.5\% & 93.3\% \\
 DNA-c (ours) & 5.3M & 466M & 77.8\% & 93.7\% \\
 DNA-d (ours) & 6.4M & 611M & 78.4\% & 94.0\% \\
 \textcolor{black}{\textbf{DNA-d w/ KD (ours)}} & \textcolor{black}{\textbf{6.4M}} & \textcolor{black}{\textbf{611M}} & \textcolor{black}{\textbf{79.72\%}} & \textcolor{black}{\textbf{94.34\%}} \\
 DNA-c w/ AA (ours) & 5.3M & 466M & 77.9\% & 93.9\% \\
DNA-c w/ RA (ours) & 5.3M & 466M & 78.1\% & 94.0\% \\ DNA-d w/ RA (ours) & 6.4M & 611M & 78.9\% & 94.2\% \\
 DNA+ -c w/ AA (ours) & 5.3M & 476M & 78.0\% &  93.9\% \\
 DNA+ -c w/ RA (ours) & 5.3M & 476M & 78.3\% &  94.1\% \\
 \end{tabular}
 }
\vspace{-11pt}
\end{table}

\begin{figure*}[t]
\centering
\subfigure{
\centering
\includegraphics[width=0.31\linewidth]{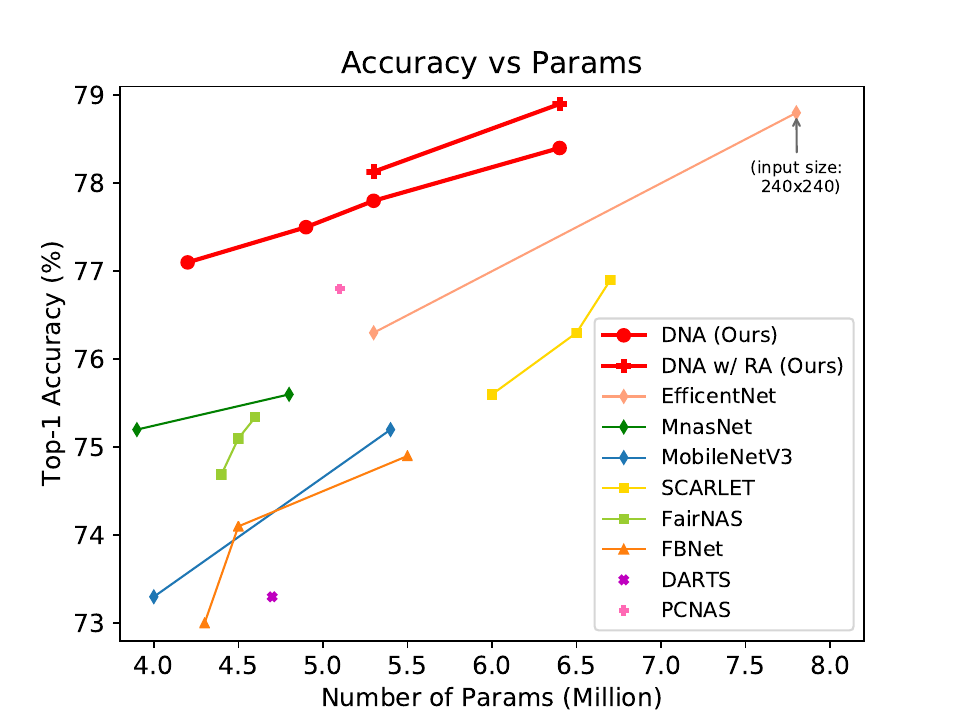}
}
\subfigure{
\centering
\includegraphics[width=0.31\linewidth]{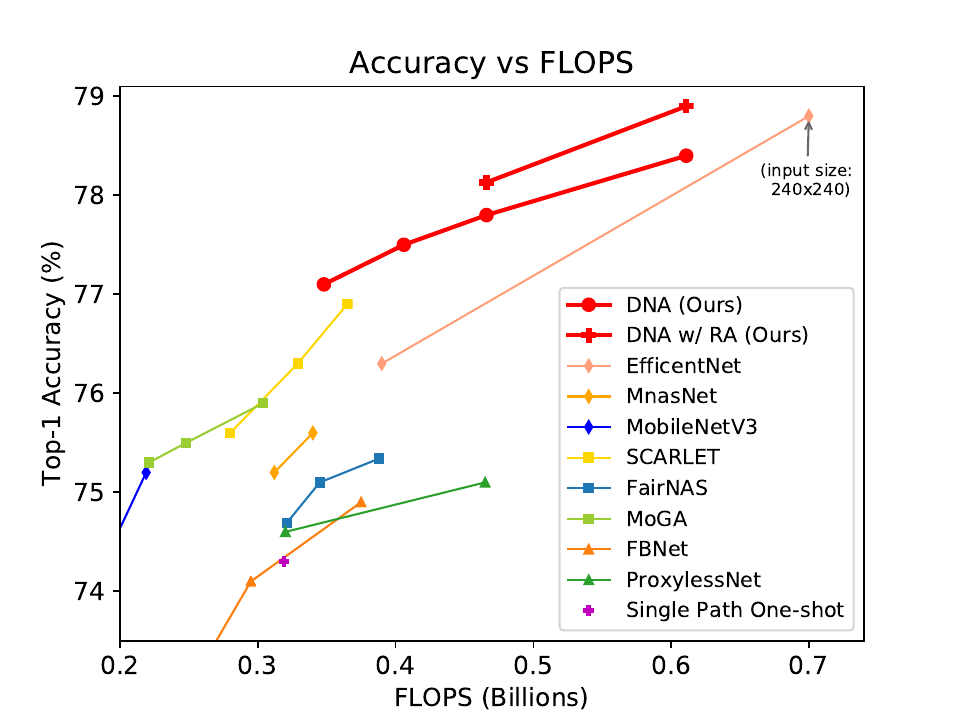}
}
\subfigure{
\centering
\includegraphics[width=0.31\linewidth]{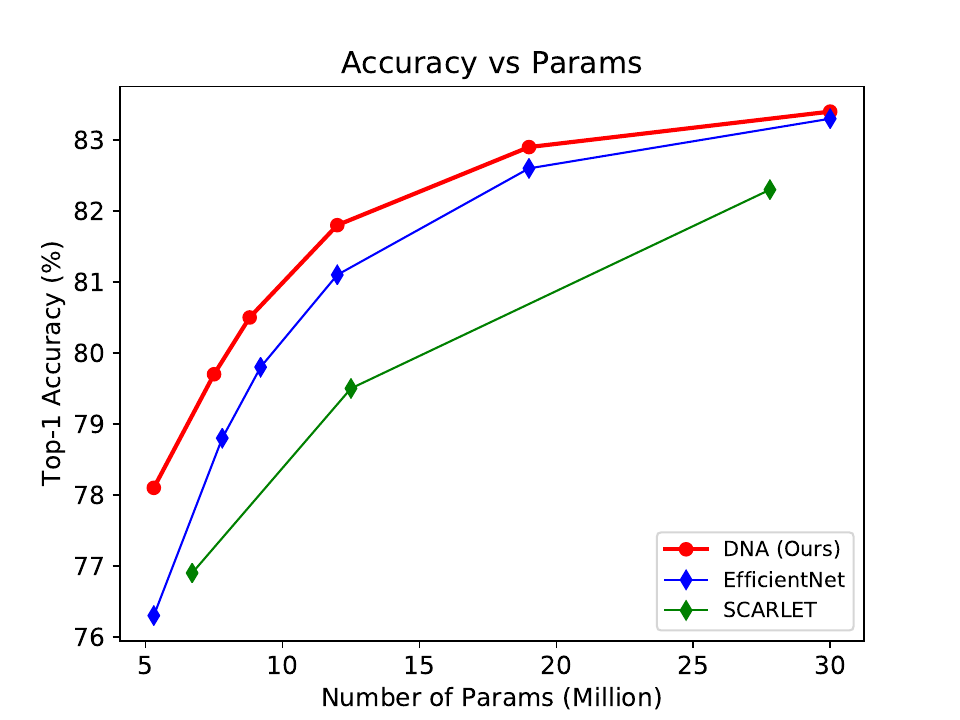}
}
\vspace{-13pt}
\caption{\small{Trade-off between model accuracy and model complexity on ImageNet.
\textbf{Left:}
comparison among our scaled DNA models, EfficientNets, and SCARLET on ImageNet by accuracy \emph{vs.} parameter numbers.
\textbf{Mid:} model accuracy \emph{vs.} parameter numbers; 
\textbf{Right:} model accuracy \emph{vs.} FLOPs. 
}
}\label{fig:paramsota}
\vspace{-15pt}
\end{figure*}

\vspace{-11pt}
\subsection{Searching on MBConv Search Space} \label{sec:performance}

\textbf{Why do we choose DNA and DNA+?} This section only tests DNA and DNA+ because MBConv is efficient and is suitable for algorithms with high training complexity, such as DNA+. We leave DNA++ to be examined in the ViT search space because DNA+ can solve the architecture shift problem in that case.

We train each \emph{cell} in the supernet for 20 epochs \footnote{\scriptsize\textcolor{black}{
In the supernet training stage, we decouple each block, significantly increasing subnet sampling efficiency compared to prior weight-sharing NAS methods. This efficiency allows us to conduct effective training with 20 epochs for each subnet.
\textcolor{black}{In the searching stage, we utilize block-wise supervision and provide block-wise local scores. However, our architecture search encompasses an entire architecture, considering all local scores and avoiding the limitation of sampling only a few subnets.
}
}} individually under the guidance of the teacher's feature map in the corresponding block. For DNA and DNA+, we use 0.002 as the start learning rate for the first block and 0.005 for the rest blocks. We use Adam as our optimizer and reduce the learning rate by 0.9 every epoch.

For DNA/DNA+, it takes three days to train a typical supernet using 8 NVIDIA GTX 2080Ti GPUs. However, simplifying the supernet to contain only one cell in each block takes only one day. With Algorithm \ref{alg:eval}, our cost of architecture rating is about 0.6 GPU days. To search for the best model under certain constraints, we perform Algorithm \ref{alg:search} on CPUs, and the cost is less than one hour.

To retrain our searched (and scaled) architectures on ImageNet, we use a similar setting with \cite{Tan2019EfficientNetRM}, i.e., a batch size of 4,096, an RMSprop optimizer with a momentum of 0.9, and an initial learning rate of 0.256 that decays by 0.97 for every 2.4 epochs. Our models are first trained without additional tricks for a fair comparison, then with RandAugment\cite{cubuk2019randaugment} to stimulate their potential. For transfer learning experiments, we follow the same fine-tuning settings as \cite{Kornblith2018DoBI} and \cite{Huang2019GPipeET}.

\subsubsection{DNA}

\textbf{Top-1 accuracy.} As shown in Table \ref{tab:imagenet}, our DNA models achieve state-of-the-art results compared with the most recent NAS models. \textbf{First}, when searched under a constraint of 350M FLOPs, DNA-a surpasses SCARLET-A with 1.8M fewer parameters. \textbf{Second}, for a fair comparison with EfficientNet-B0, we further perform our DNA under the constraints of 350M FLOPs and 5.3M parameters separately. We thus obtain DNA-b and DNA-c, respectively. Both DNA-b and DNA-c outperform EfficientNet-B0 by a large margin (1.1\% and 1.5\%). Even compared with the recent top-performing MixNet-M, which uses more efficient MixConv operations that we do not use, our DNA-b with fewer parameters beats MixNet-M by 0.4\%. \textbf{Third}, when searched under no constraint, our DNA-d achieves 78.4\% top-1 accuracy with 6.4M parameters and 611M FLOPs. When tested with the same input size ($240\times240$) as EfficientNet-B1, DNA-d achieves 78.8\% top-1 accuracy, being evenly accurate but 1.4M smaller than EfficientNet-B1.
\textbf{Fourth}, with RandAugment \cite{cubuk2019randaugment}, DNA-c and DNA-d further achieve 78.1\% and 78.9\% top-1 accuracy.

\begin{table}
\centering
\caption{\small{Comparison of transfer learning performance of NAS models on CIFAR-10 and CIFAR-100. $x(x')$: $x$ is our re-implementational results with the officially released model; $x'$ is the accuracy reported by the original paper.}}\label{tab:cifar}
\vspace{-11pt}
\resizebox{0.4\textwidth}{!}{
\begin{tabular}{l|l|l}
Model & CIFAR-10 Acc & CIFAR-100 Acc\\
\hline\hline
MixNet-M\cite{Tan2019MixConvMD} & 97.9\% & 87.4\% \\
EfficientNet-B0 \cite{Tan2019EfficientNetRM} & 98.0\%(98.1\%)$^\dagger$ & 87.1\%(88.1\%)$^\dagger$ \\
DNA-c (ours) & 98.3\% &88.3\% \\
\end{tabular}
}
\vspace{-11pt}
\end{table}

\textbf{Architecture visualization.} Our searched architectures are visualized in Fig \ref{fig:arch_detail} in the appendix, from which we have several observations. \textbf{i)} Searched under no constraint, DNA-d tends to choose relatively expensive operations with high expansion rates and large kernel sizes, thus achieving the best performance. This verifies that our DNA is able to find optimal architectures in the search space. \textbf{ii)} Under the constraint of maximum parameter number, DNA-c tends to discard the operations with redundant channels to save the parameters. It also tends to select a lower expansion rate in the last blocks since the last blocks have more channels than the first blocks. \textbf{iii)} Under the constraint of maximum computational cost, DNA-b and DNA-a tend to select operations with fewer channels and lower expansion rates evenly in each block. The significant style difference of the high-performing architectures in Fig \ref{fig:arch_detail} proves DNA's architecture search ability.

\textbf{Model complexity vs. model accuracy.} Fig. \ref{fig:paramsota} compares the curve of \textbf{model size vs. accuracy} and \textbf{FLOPs vs. accuracy} for most recent NAS models. Our DNA models can achieve better accuracy with smaller model sizes and lower computation complexity than other NAS models.

\textbf{Transferability.} To test our searched architectures' transferability, we evaluate our searched architectures on two widely used transfer learning datasets, i.e., CIFAR-10 and CIFAR-100. The results are reported in Table \ref{tab:cifar}. As shown, our models maintain superiority after transferring.

\begin{table}
\setlength{\abovecaptionskip}{0.cm}
\setlength{\belowcaptionskip}{0.cm}
\centering
\caption{\small{Comparison between our scaled DNA models and EfficientNets on ImageNet.}}\label{tab:scaling}
\resizebox{0.4\textwidth}{!}{
 \begin{tabular}{l|c|c|c|c}
 model & input size & Acc@1 & Acc@5 &Params\\
 \hline\hline
 DNA-c & \multirow{2}{*}{224$\times$224} & \textbf{78.1\%} & \textbf{94.0\%} & 5.3M\\
 EfficientNet-B0 & & 76.3\% & 93.2\% & 5.3M\\
 \hline
 DNA-c1 & \multirow{2}{*}{240$\times$240} & \textbf{79.7\%} & \textbf{94.7\%} & 7.5M\\
 EfficientNet-B1 & & 78.8\% & 94.4\% & 7.8M\\
 \hline
 DNA-c2 & \multirow{2}{*}{260$\times$260} & \textbf{80.5\%} & \textbf{95.1\%} & 8.8M\\
 EfficientNet-B2 & & 79.8\% & 94.9\% & 9.2M\\
 \hline
 DNA-c3 & \multirow{2}{*}{300$\times$300} & \textbf{81.8\%} & \textbf{95.7\%} & 12M\\
 EfficientNet-B3 & & 81.1\% & 95.5\% & 12M\\
 \hline
 DNA-c4 & \multirow{2}{*}{380$\times$380} & \textbf{82.9\%} & \textbf{96.3\%} & 19M\\
 EfficientNet-B4 & & 82.6\% & 96.3\% & 19M\\
 \hline
 DNA-c5 & \multirow{2}{*}{456$\times$456} & \textbf{83.4\%} & 96.4\% & 30M\\
 EfficientNet-B5 & & 83.3\% & 96.7\% & 30M\\
 \end{tabular}
 }
 \vspace{-5pt}
\end{table}

\begin{table}
\scriptsize
\caption{\small{Top-1 accuracy of DNA+ on ImageNet. }}\label{tab:cascade}
\vspace{-6pt}
\centering
\resizebox{0.4\textwidth}{!}{
\begin{tabular}{m{0.4cm}|m{1.9cm}|m{2.1cm}|m{1.3cm}|m{0.8cm}}
\hline
     Gen & Teacher & Searched model & Searched model acc & Kendall tau\\\hline
     0 & --- & \textcolor{black}{\textbf{DNA+0}} (EfficientNet-b1)  & 76.3  & --- \\\hline
     1 & Scaled-\textcolor{black}{\textbf{DNA+0}} (EfficientNet-b7) & {\textbf{DNA+1}} ~~~~~~~~~~~(DNA-c)  & 78.1 & 0.64 \\\hline
     2 & Scaled-{\textbf{DNA+1}} (DNA-c1) & \textcolor{black}{\textbf{DNA+2}} & 78.3 & 0.66  \\\hline
     3 & Scaled-\textcolor{black}{\textbf{DNA+2}}  & DNA+3 & --- & 0.66  \\\hline
\end{tabular}
}
\vspace{-11pt}
\end{table}

\textbf{Model scalability.} As our searched architectures achieve remarkably high accuracy with significantly few parameters, to adequately explore our searched architecture's performance, we extend our DNA-c to the different model sizes by scaling up the depth, width, and input size of DNA-c simultaneously. Although it is better to search for an optimal scale strategy suitable for our DNA using a grid search as EfficientNet \cite{Tan2019EfficientNetRM} did, for simplicity, we just borrow the scaling strategy from \cite{Tan2019EfficientNetRM} without searching, putting our method at a disadvantage. The results are shown in Table \ref{tab:scaling}. We can see that our scaled architectures' accuracy surpassed the competing EfficientNet at all levels, even if the scaling strategy used might not fit our DNA perfectly due to the lack of strategy search. The left of Fig. 5 presents the scatter diagram. In summary, this comparison verifies the general features that our searched architecture has learned.

\textcolor{black}{\textbf{Fairer comparisons.} Please kindly note the importance of ensuring fairness when comparing our approach with the state-of-the-art method. There are two key points that merit thoughtful consideration in this regard: \textbf{Firstly}, during the validation process using the MBConv search space, our searched architectures undergo retraining from scratch without the utilization of knowledge distillation or fine-tuning. In contrast, recent practices often involve initializing the searched architecture with pre-trained weights from the supernet or employing larger networks for knowledge distillation. These disparities place our DNA approach at a potential disadvantage. To address this discrepancy, we have taken the proactive step of incorporating knowledge distillation into our retraining process in this revision. \textbf{Secondly}, given our emphasis on parameter efficiency, the DNA family conducts searches in the search phase with the number of parameters as the target. Therefore, it is essential to compare it with state-of-the-art methods that exhibit fair parameter efficiency.
We meticulously compare our method with the latest advanced techniques and thoroughly discuss the results. As demonstrated in Table \ref{tab:imagenet}, this fairer comparison reveals that DNA outperforms previous state-of-the-art approaches, such as FBNetV3 \cite{dai2021fbnetv3} (79.7\% vs. 78.4\%), AttentiveNAS \cite{wang2021attentivenas} (79.7\% vs. 77.3\%), AlphaNet \cite{wang2021alphanet} (79.7\% vs. 77.8\%), and EfficientNetV2 \cite{tan2021efficientnetv2} (79.7\% vs. 78.7\%), and OnceForAll\cite{cai2019once} (79.7\% vs. 76.4\%), by a significant margin.
}

\subsubsection{DNA+}
DNA+ is a multi-generation cascade of DNA.
As Table \ref{tab:cascade} shows, the searched models of Generations 0  and 1 are Efficient-b1 and DNA-c. The scaled searched models of Generations 0 and 1 are Efficient-b7 and DNA-c1, which further serve as the teachers of Generations 1 and 2, respectively. The searched models of Generations 2 and 3 are DNA+2 and DNA+3. For each generation, we evaluate the supernet's ranking ability with our ImageNet NAS Bench. The ranking Kendall tau along with searched model accuracies are reported. The accuracy leaped by 1.8\% at Generation 1. At Generation 2, the accuracy further improved by 0.2\%, and the ranking tau improved by about 3\% relatively. The ranking correlation reaches a plateau at Generation 3, as the teacher model may have reached the search space ceiling.

\begin{table}
\centering
\caption{\small{Comparison of vision transformers on ImageNet. The input size is $224\times 224$. \textcolor{black}{While our method exhibits superior performance to NASViT, we acknowledge that the DNA model size is larger compared to NASViT (5.6G vs. 1.9G). However, it is worth mentioning that this model is the largest available one in the NASViT paper and its GitHub page.}}}\label{tab:imagenet_vit}
\vspace{-6pt}
\resizebox{0.4\textwidth}{!}{
 \begin{tabular}{l|c|c}
 \toprule
 Model & FLOPs & Top-1 Acc (\%)\\
 \midrule
 \multicolumn{3}{l}{\textit{w/o distillation}}\\
 \midrule
 ViT-S/16~\pub{ICLR21}~\cite{dosovitskiy2021vit}            & 4.7G & 78.8 \\
 DeiT-S~\pub{ICML21}~\cite{touvron2020deit}                 & 4.7G & 79.9 \\
 T2T-ViT-14~\pub{ICCV21}~\cite{yuan2021tokens}              & 5.2G & 81.5\\
 Swin-T~\pub{ICCV21}~\cite{liu2021swin}                     & 4.5G & 81.3 \\
 \textcolor{black}{AutoFormer \pub{ICCV21} \cite{chen2021autoformer} }   & \textcolor{black}{5.1G} & \textcolor{black}{81.7}\\
 BossNAS-T $\bm\uparrow$\pub{ICCV21}~\cite{Li2021BossNAS_iccv}   & 5.7G & 81.6\\
 \midrule
 DNA-T                                             & 4.5G & 81.2\\
 DNA-T++                                           & 5.6G & 82.0\\
 \midrule
 \multicolumn{3}{l}{\textit{w/ distillation}}\\
 \midrule
 \textcolor{black}{GLiT-Small \pub{ICCV21}~\cite{chen2021glit}}   & \textcolor{black}{4.4G} & \textcolor{black}{80.5}\\
 \textcolor{black}{GLiT-Base \pub{ICCV21}~\cite{chen2021glit}}   & \textcolor{black}{17.0G} & \textcolor{black}{82.3}\\
 \midrule
 \textcolor{black}{ViTAS-Twins-S \pub{ECCV22} \cite{su2022vitas} }   & \textcolor{black}{3.0G} & \textcolor{black}{82.0}\\
 \textcolor{black}{ViTAS-Twins-B \pub{ECCV22} \cite{su2022vitas} }   & \textcolor{black}{8.8G} & \textcolor{black}{83.5}\\
 \textcolor{black}{ViTAS-DeiT-B \pub{ECCV22} \cite{su2022vitas} }   & \textcolor{black}{4.9G} & \textcolor{black}{80.2}\\
 \midrule
 \textcolor{black}{NASViT \pub{ICLR21} \cite{gong2021nasvit}}   & \textcolor{black}{1.9G} & \textcolor{black}{82.9}\\
 \midrule
 \textcolor{black}{Twins-PCPVT-S \pub{NeurIPS21} \cite{chu2021twins}}   & \textcolor{black}{3.8G} & \textcolor{black}{81.2} \\
 \textcolor{black}{Twins-PCPVT-B \pub{NeurIPS21} \cite{chu2021twins}}   & \textcolor{black}{6.7G} & \textcolor{black}{82.7} \\
 \midrule
 DeiT-S\alambic~\pub{ICML21}~\cite{touvron2020deit}         & 4.7G & 81.2 \\
 DS-ViT-L++ \pub{TPAMI22}~\cite{Li2022Ds_tpami}                                                & 5.6G & 83.0\\
 DNA-T \alambic                                              & 4.5G & 82.9 \\
 DNA-T++ \alambic                                           & 5.6G& 83.6\\
 \bottomrule
 \end{tabular}
 }
\vspace{-15pt}
\end{table}

\vspace{-11pt}
\subsection{Searching for Vision Transformers}
\vspace{5pt}

\textbf{Why do we select DNA and DNA++?} This section focuses on examining DNA and DNA++ because: First, the teacher we use is a CNN, and the architectures to be searched are ViTs, which have an architecture shift. The self-supervised learning technique adopted by DNA++ can solve this architecture shift problem. Second, DNA+ is time-consuming and has been tested in an efficient MBConv space.

\textbf{Supernet training.} We train each cell in the supernet for 20 epochs individually under the block-wise guidance of the teacher model. We use 0.33 and 0.0025 (for 256 total batch size) as the start learning rate for DNA and DNA++, respectively. For DNA, we use AdamW as our optimizer. The learning rate of DNA is reduced to its $0.9\times$ every $3$ epochs. For DNA++, we use cosine scheduler to reduce the learning rate and use LARS as the optimizer. It takes 1 and 2 days to train a typical supernet using 8 NVIDIA GTX 2080Ti GPUs for DNA and DNA++, respectively. With Algorithm \ref{alg:eval}, our cost of architecture rating is about 0.6 and 0.1 GPU days for DNA and DNA++. To search for the best model under certain constraints, we perform Algorithm \ref{alg:search} on CPUs, and the cost is less than one hour for both DNA and DNA++.

\textbf{Retraining searched models.}
To retrain our searched architectures on ImageNet, we use a similar setting with~\cite{touvron2020deit}, i.e., a batch size of 1,024, an AdamW optimizer, an initial learning rate of 1$e$-3, and a cosine learning rate scheduler. The weight decay is set to 5$e$-2. The architecture is trained for 300 epochs, with 5 epochs for learning rate warm-up. We use similar regularization and augmentation as DeiT \cite{touvron2020deit}, including label smoothing~\cite{szegedy2016rethinking}, stochastic depth~\cite{huang2016deep}, Cutmix~\cite{yun2019cutmix}, RandAugment~\cite{cubuk2020randaugment}, random erasing~\cite{zhong2020random}, and Mixup~\cite{zhang2018mixup}. Following~\cite{touvron2020deit}, we also explore training with KD, a standard technique widely used for training ViTs.

\subsubsection{DNA}

As shown in Table \ref{tab:imagenet_vit}, our DNA models achieve state-of-the-art results compared with existing models. More precisely, we have three critical findings. \textbf{First,} the performance of the frameworks searched by our DNA matches or even exceeds the performance of hand-designed frameworks. For example, at about 4.5G FLOPs of computation, our DNA-T outperforms ViT-S/16 \cite{dosovitskiy2021vit} and DeiT-S \cite{touvron2020deit}, close to Swin-T \cite{liu2021swin}, and slightly lower than T2T-ViT-14 \cite{yuan2021tokens}. Note that T2T-ViT-14 is computationally more expensive than our DNA-T (i.e., 5.2G vs. 4.5G). \textbf{Second}, our method maintains an advantage over hand-designed models with knowledge distillation (e.g., 82.8\% vs. 81.2\% for DNA-T vs. DeiT-S). \textbf{Last but not least}, we generously admit that our DNA-T is slightly inferior to the best NAS methods.

\textbf{Analysis.}
Two reasons can account for why our method does not hold a clear advantage over existing NAS methods. 
\textbf{First}, existing methods are more computationally expensive than our DNA-T (e.g., 5.7G FLOPs vs. 4.5G FLOPs for BossNAS-T vs. DNA-T).
\textbf{Second}, our DNA-T has an architecture shift between the teacher and the students. Specifically, our teacher is a CNN, and our students are ViTs. In the knowledge distillation process, operations similar to the teacher tend to have lower distillation losses. This makes the searched networks tend to be close to the teacher. This bias prevents the best architecture from being discovered. Fortunately, our DNA++ can solve this architecture shift, which will be presented in the next section.

\begin{figure*}
\setlength{\abovecaptionskip}{-0cm}
\setlength{\belowcaptionskip}{0.1cm}
 \centering
 \includegraphics[width=0.9\linewidth]{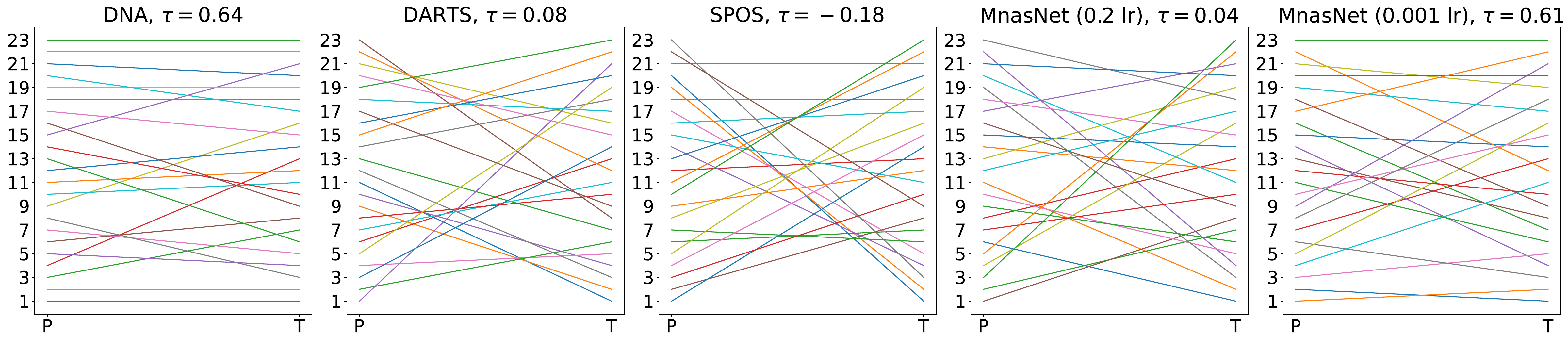}
 \caption{\small{Comparison of model ranking among \emph{DNA, DARTS \cite{Liu2018DARTSDA}, SPOS \cite{Guo2019SinglePO}, and MnasNet \cite{Tan2018MnasNetPN}} under two different hyper-parameters. The left and right y-axises represent the ``predicted model ranking'' (P) and ``actual model ranking'' (T), respectively.}}\label{fig:ranking}
 \vspace{-16pt}
\end{figure*}

\begin{figure*}
\setlength{\abovecaptionskip}{-0cm}
\setlength{\belowcaptionskip}{0.1cm}
 \centering
 \includegraphics[width=0.9\linewidth]{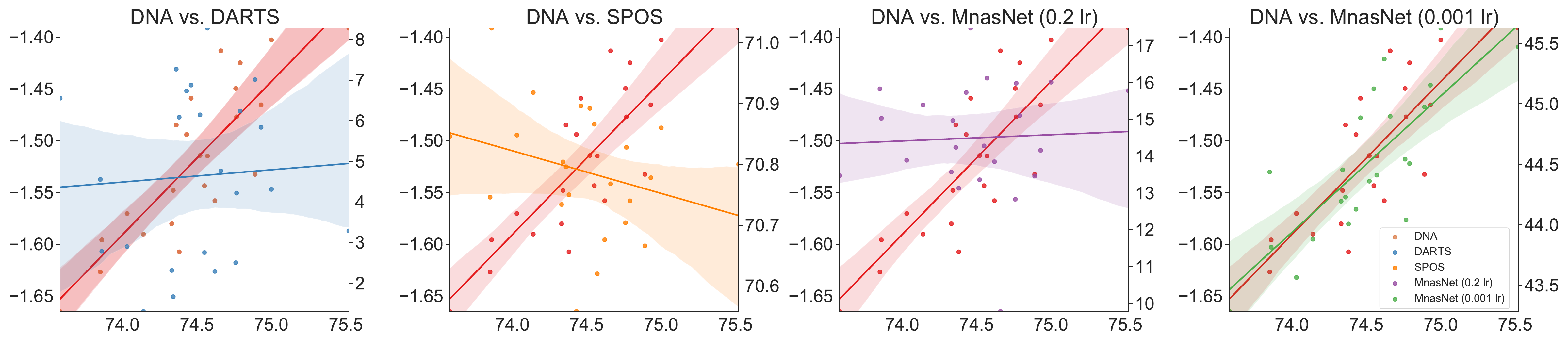}
 \caption{\small{Comparison of model ranking among \emph{DNA vs. DARTS \cite{Liu2018DARTSDA}}, \emph{DNA vs. SPOS \cite{Guo2019SinglePO}}, and \emph{DNA vs. MnasNet \cite{Tan2018MnasNetPN}} under two different hyper-parameters. The x-axises represent the ground-truth top-1 accuracies of different architectures on ImageNet, and the y-axises represent their losses based on the under-trained weights of different methods.}}\label{fig:rankingloss}
 \vspace{-16pt}
\end{figure*}

\begin{figure*}
\centering
\vspace{-3pt}
\subfigure[DNA (ours)]{\label{fig:progress_dna}\includegraphics[width=0.3\linewidth]{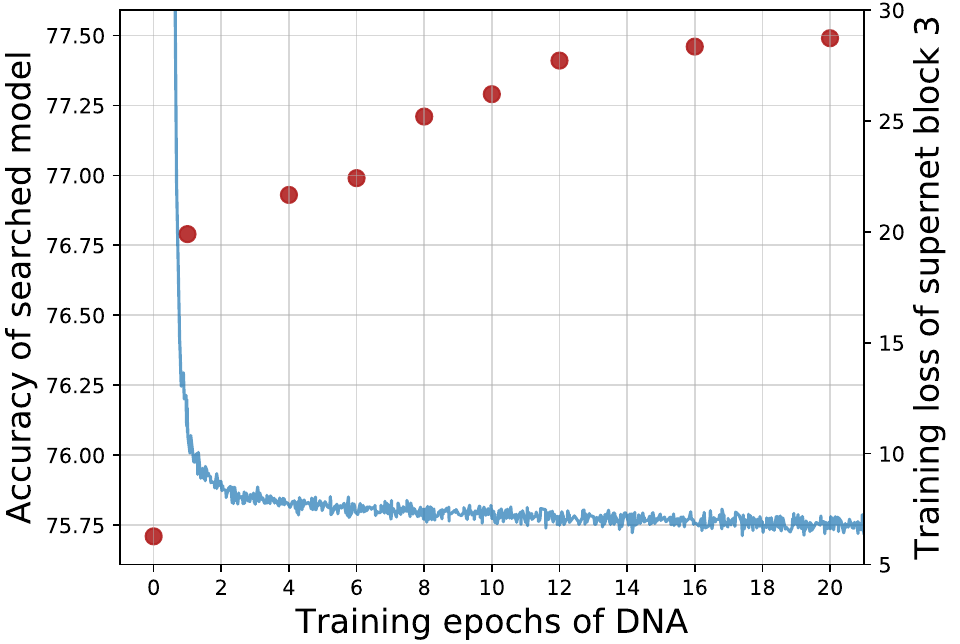}}
\subfigure[SPOS\cite{Guo2019SinglePO}]{\label{fig:progress_spos}\includegraphics[width=0.3\linewidth]{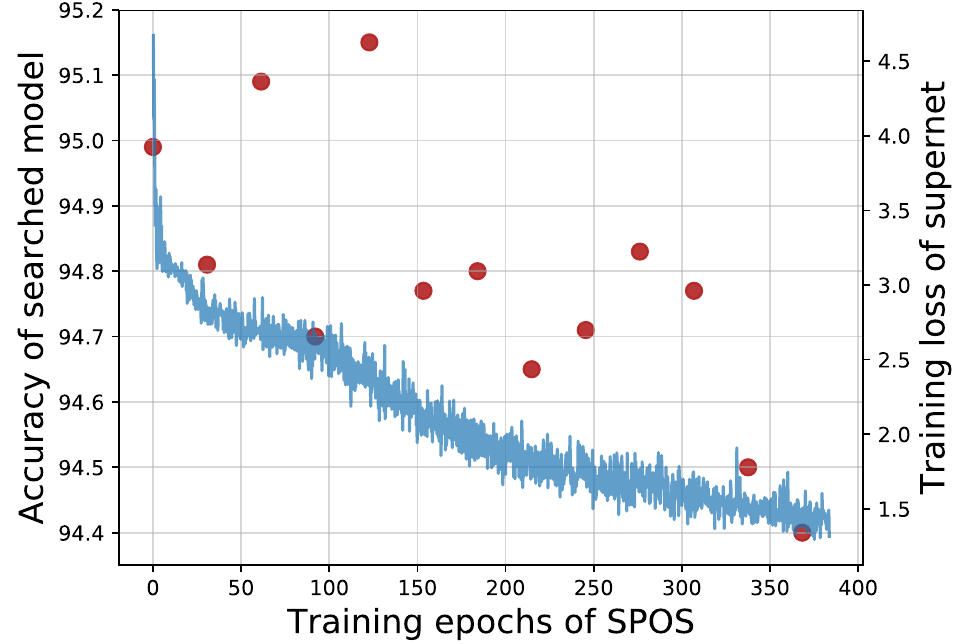}}
\subfigure[DARTS\cite{Liu2018DARTSDA}]{\label{fig:progress_darts}\includegraphics[width=0.3\linewidth]{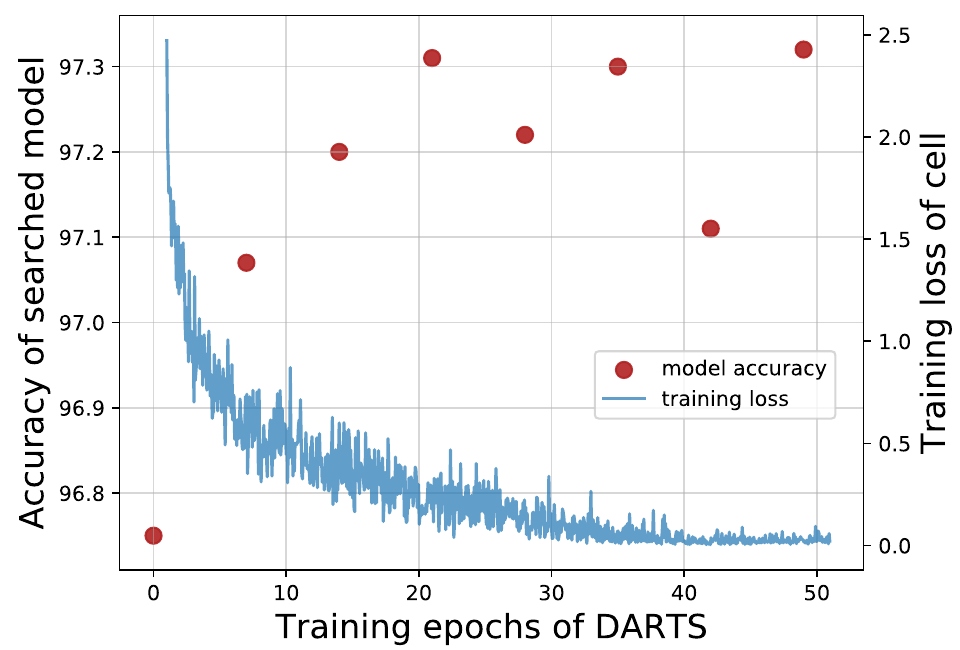}}
\vspace{-11pt}
\caption{\small{Top-1 accuracies of the searched models \emph{w.r.t} the training loss of the cell during the training progress on ImageNet.}}\label{fig:progress}
\vspace{-11pt}
\end{figure*}

\subsubsection{DNA++}

\textcolor{black}{We have conducted comprehensive quantitative comparisons between our DNA approach and these state-of-the-art NAS methods applied to ViTs. These comparisons are extensively presented in the findings outlined in Table \ref{tab:imagenet_vit}. The compelling results unequivocally demonstrate that our method consistently achieves a superior level of accuracy compared to these methods, even with similar FLOPs (e.g., 83.6\% vs. 80.5\% vs. 82.0\% vs. 81.7\% vs. 82.9\% vs. 82.7\% for DNA vs. GLIT \cite{chen2021glit} vs. VITAS \cite{su2022vitas} vs. AutoFormer \cite{chen2021autoformer} vs. NASViT \cite{gong2021nasvit} vs. Twins \cite{chu2021twins}).
}

\textbf{On the one hand}, our DNA-T++ achieves the best performance among all contrasting methods, including manual architectures \textcolor{black}{(e.g., Twins \cite{chu2021twins}, which proposes a smart attention mechanism for vision transformers)} and automatically searched architectures. 
\textcolor{black}{In terms of comparing our DNA approach with the Twins \cite{chu2021twins} method of similar FLOPs (e.g., 6.4G vs. 6.7G), we have obtained the following results: 83.6\% vs. 82.7\% for DNA-T++ vs. Twins-PCPVT-B.}
Moreover, DNA-T++ outperforms BossNAS, the current best NAS method for vision transformer search, by a clear margin (i.e., 82.0\% vs. 81.6\%). DNA++ even also significantly outperforms the current best-performing dynamic ViTs like DS-ViT-L++ \cite{Li2022Ds_tpami} (i.e., 83.6\% 83.0\%). \textcolor{black}{In terms of comparing our DNA approach with the NASViT \cite{gong2021nasvit} approach of similar size, we have obtained the following results: 83.6\% vs. 82.8\% for DNA-T++ vs. NASViT. While our method exhibits a superior performance to NASViT, we acknowledge that the DNA model size is larger compared to NASViT (5.6G vs. 1.9G). However, it is worth mentioning that this model is the largest available one in the NASViT paper and its GitHub page \footnote{\url{https://github.com/facebookresearch/NASViT}}.}
\textbf{On the other hand}, DNA-T++ far exceeds the performance of DNA-T (i.e., 82.0\% vs. 81.2\% without knowledge distillation and 83.6\% vs. 82.9\% with knowledge distillation), which is very encouraging to us.
The significant advantage of DNA-T++ over DNA-T confirms that self-supervised technology can solve the architecture shift problem between teachers and students in NAS.
This exciting result proves not only the effectiveness of DNA++, but also the robustness of our DNA family in the face of various corner cases in the network structure search.

\subsection{Effectiveness}\label{sec:effctiveness}
We prove our DNA's effectiveness and provide more insights into DNA by comparing it with recent NAS methods on model ranking, quantitative effectiveness/efficiency analysis, training stability, and feature visualization. Three methods, i.e., DARTS\cite{Liu2018DARTSDA}, SPOS\cite{Guo2019SinglePO}, and MnasNet\cite{Tan2018MnasNetPN} are chosen as the representatives of the three mainstreams of the current NAS: jointly-optimized weight-sharing methods, one-shot weight-sharing methods, and independent under-training methods. The experiments in this section (i.e., Section \ref{sec:effctiveness}) are conducted on our ImageNet NAS Bench.

\textbf{Model ranking}\footnote{\scriptsize
Our model ranking experiments are more comprehensive and fair than previous works like FairNAS \cite{Chu2019FairNAS_arxiv} and PCNAS \cite{anonymous2020improving}. FairNAS only samples models from the Pareto front, while PCNAS selects models from the last iterations of the search, introducing bias. In contrast, our model ranking includes architectures randomly sampled from the entire search space, ensuring a fairer comparison.
}. A main advantage of our DNA is reliable architecture rating, which means that the ranking of under-trained models (i.e., ``predicted model ranking'') can be an indicator of the actual ranking of these models (i.e., ``true model ranking''). To evaluate architecture rating, in Fig. \ref{fig:ranking}, we compare the ``predicted model ranking'' (P) and ``true model ranking'' (T) for DARTS, SPOS, MnasNet, and DNA, respectively.
All these methods are examined in the same MBConv search space (i.e., the smaller MBConv space described in Section \ref{sec:search_spaces}). 
The true accuracies and the ``true model ranking'' of these models are obtained from our ImageNet NAS Bench. We use the under-trained network weights of these four methods to get their ``predicted model ranking''. Each block in our DNA supernet is trained for 20 epochs, adding up to 120 epochs in total.
The SPOS supernet is also trained for 120 epochs following the official protocol \cite{Guo2019SinglePO}. 
We obtain the ``predicted model ranking'' for DNA and SPOS by evaluating the validation loss using supernet weights. The DARTS supernet is trained for 50 epochs because over-training DARTS could lead to a performance drop, as suggested by \cite{arber2020rdarts}. The ``predicted model ranking'' for DARTS is obtained by sorting the predicted joint probability of selecting different operation candidates for each architecture. 
As for MnasNet, each architecture is trained independently for five epochs following the official protocol \cite{Tan2018MnasNetPN}. 
The ``predicted model ranking'' is obtained by ranking the performance of these under-trained models. As the under-training setting of MnasNet is not provided in detail, we tried two different learning rates, i.e., 0.2 and 0.001, getting two very different results.

Fig. \ref{fig:ranking} presents the results. \textbf{First}, SPOS and DARTS fail to rank architectures correctly. This may be attributed to the bad generalization ability caused by the large search space, as Theorem \ref{theorem:generalization} suggested. \textbf{Second}, thanks to independent training, the model ranking of MnasNet is better than DARTS and SPOS. 
However, the cost of MnasNet is quite high. 
On the one hand, 8,000 out of the $10^{13}$ architectures are trained to obtain the Pareto front in MnasNet, which costs 288 TPU days \cite{Tan2018MnasNetPN}. 
On the other hand, it takes five epochs of ImageNet training to rate an architecture for MnasNet, while it only takes $\frac{120}{10^{10}}$ epochs of ImageNet training on average to rate an architecture for weight-sharing methods like SPOS and DNA.
Besides, the ranking result of MnasNet is highly hyperparameters-dependent (see the last two subfigures of Fig. \ref{fig:ranking}). The above two observations verify DNA's superiority in correctly and efficiently rating architectures, leading to effective and efficient architecture searches.

We also plot the true accuracies of different architectures on ImageNet \emph{w.r.t.} their under-trained models' losses for different NAS methods in Fig. \ref{fig:rankingloss}. Again, we can find that the losses obtained by our DNA are indicative of the real performance of the architectures. In contrast, the losses predicted by DARTS and SPOS are not indicative of the architectures' true accuracies.

\begin{table}
\setlength{\abovecaptionskip}{0.cm}
\setlength{\belowcaptionskip}{0.cm}
\centering
\caption{\small{Comparison of the effectiveness and efficiency of different NAS methods. The effectiveness is measured by Kendall Tau ($\tau$), Spearman Rho ($\rho$) and Pearson R ($R$), ranging from -1 to 1 (the higher, the better). The efficiency is measured by search cost (the lower, the better). (Gds: GPU days; Tds: TPU days)}}\label{tab:ranking}
\resizebox{0.45\textwidth}{!}{
\begin{tabular}{l|l|l|l|l}
Method & Search Cost & $\tau$ \cite{kendall1938new} & $\rho$ & $R$ \\\hline\hline
SPOS & \textbf{8.5 Gds} & -0.18 & -0.27 & -0.29  \\\hline
DARTS & 50 Gds & 0.08 & 0.14 & 0.06 \\\hline
MnasNet & 288 Tds & 0.04 / 0.61 & 0.07 / 0.77 & 0.05 / 0.78 \\\hline
DNA (Ours) & \textbf{8.5 Gds} & \textbf{0.64} & \textbf{0.82} &\textbf{0.84}
\end{tabular}
}
\vspace{-11pt}
\end{table}

\textbf{Quantitative analysis.}
We further provide a quantitative analysis of the effectiveness and efficiency of the four NAS methods. Following \cite{sciuto2019evaluating, anonymous2020nas}, we measure effectiveness by ranking correlation metrics, e.g., Kendall Tau ($\tau$) \cite{kendall1938new}, Spearman Rho ($\rho$) and Pearson R ($R$). All of the three metrics range from -1 to 1 with ``-1'' representing completely reversed ranking, ``1'' meaning entirely correct ranking, and ``0'' representing no correlation between rankings. We measure efficiency by search cost to measure.

Table \ref{tab:ranking} shows the results. \textbf{First}, although DARTS is renowned for high efficiency, it is less efficient than SPOS and DNA due to a large memory print. 
\textbf{Second}, DARTS, SPOS, and MnasNet with a 0.2 learning rate might be ineffective -- they are no better than or struggle to outperform random architecture selection with $\tau\approx0$). \textbf{Third}, MnasNet with a 0.001 learning rate achieves noticeably good effectiveness ($\tau=0.61$) at the cost of significantly low efficiency (288 TPU days). Our DNA obtains even better effectiveness ( $\tau=0.64$ \emph{vs.} $\tau=0.61$) with remarkably higher efficiency (8.5 GPU days \emph{vs.} 288 TPU days). \textbf{In summary}, DNA is the only member that achieves both high effectiveness and efficiency.

\begin{table}
\centering
\caption{\small{Robustness to different random seeds.}}\label{tab:seed}
\vspace{-11pt}
\begin{tabular}{l|c|c|c|c}
     Seed & 0 & 41  & 42\\\hline\hline
     Kendall tau to Acc & 0.64 & 0.64 & 0.64 \\\hline
     Kendall tau to seed 0 & - & 1.0 & 1.0 \\
\end{tabular}
\end{table}

\begin{table}
\centering
\caption{\small{Robustness to different training data amounts.}}
\vspace{-11pt}
\begin{tabular}{l|c|c|c|c}
     Dataset partition & full & 80\%  & 40\%\\\hline\hline
     Kendall tau to Acc & 0.64 & 0.63 & 0.64 \\\hline
     Kendall tau to full data & - & 0.98 & 1.0
\end{tabular}\label{tab:dataefficient}
\vspace{-11pt}
\end{table}

\begin{figure}[b]
    \centering
    \vspace{-11pt}
    \includegraphics[width=0.4\textwidth]{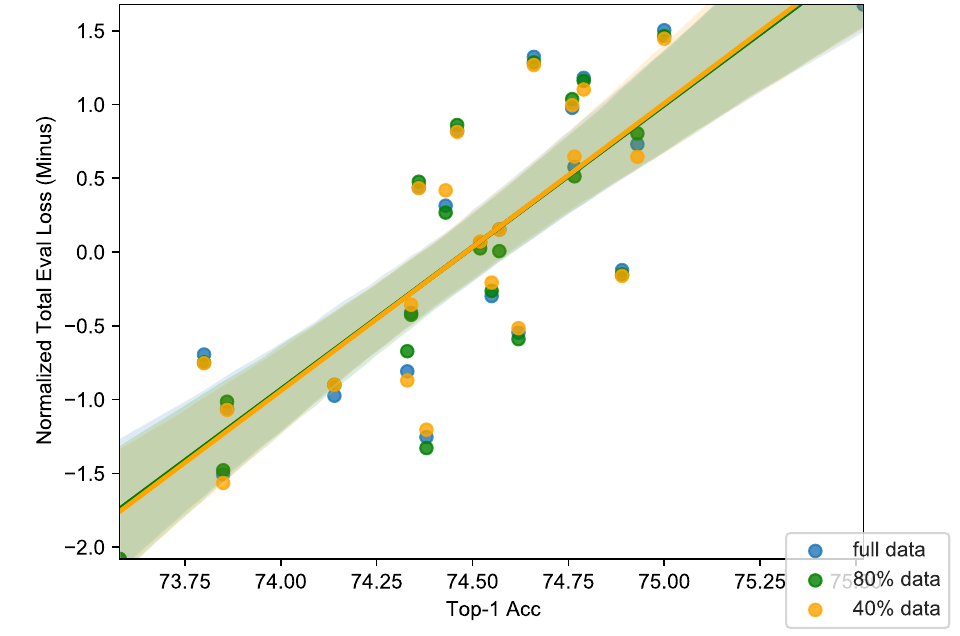}
    \vspace{-11pt}
    \caption{\small{Model ranking of different data amounts.}}\label{fig:reproduce}
    \vspace{-6pt}
\end{figure}

\begin{figure*}
\centering
\includegraphics[width=0.75\linewidth]{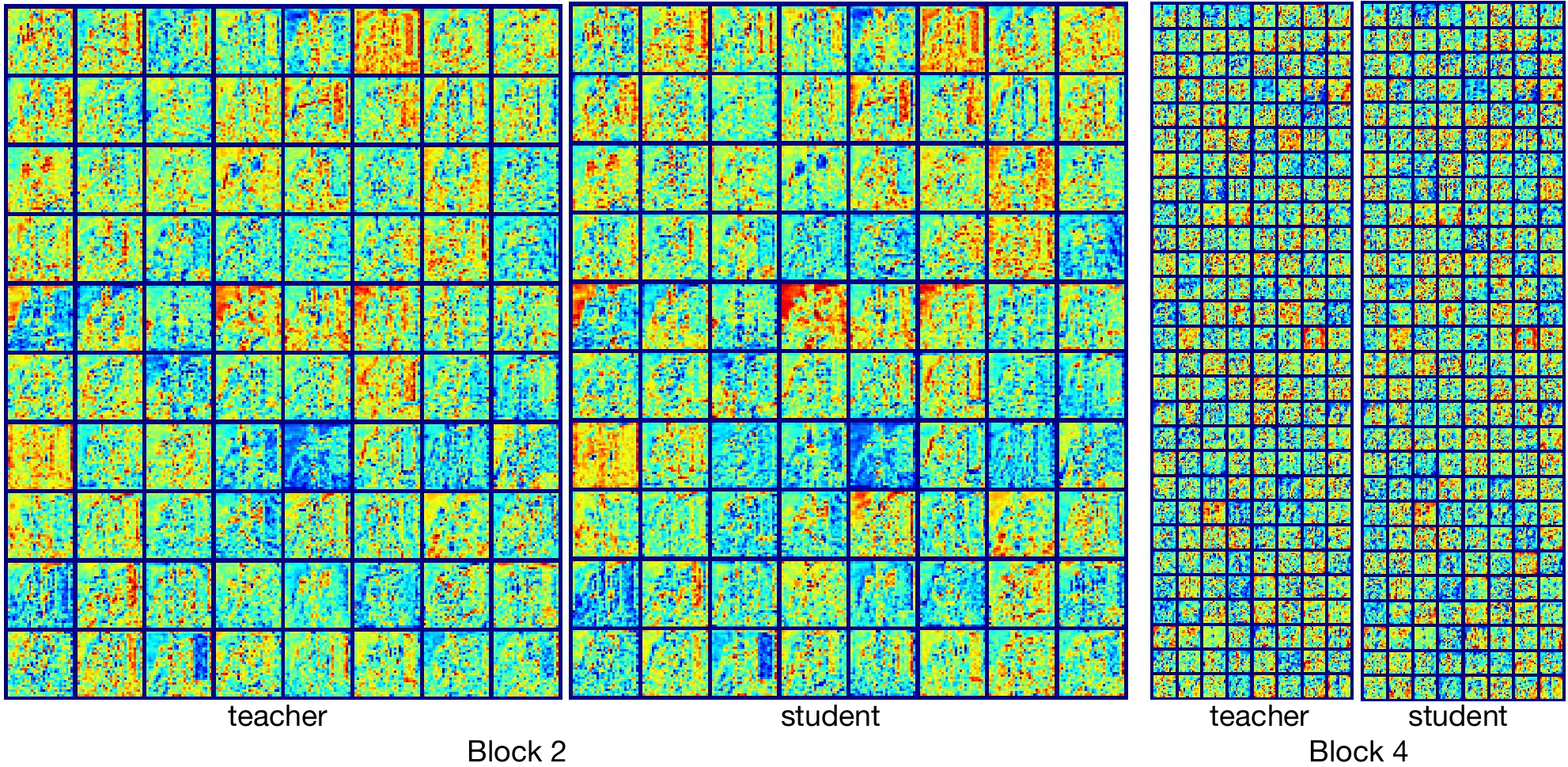}
\vspace{-9pt}
\caption{\small{Comparison of the feature maps between the teacher and student of two blocks.}}\label{fig:feature}
\vspace{-11pt}
\end{figure*}

\textbf{Training stability.} 
To further validate the search effectiveness, we examine whether DNA could consistently find better architectures as the supernet training goes on. We compare DNA with two supernet-based methods, i.e., SPOS and DARTS. For each method, we pick 8 intermediate checkpoints in the supernet training progress and search for the best architecture at each checkpoint. We then randomly re-initialize the network weights of the architecture and retrain them from scratch to convergence to obtain their real performance.
Fig. \ref{fig:progress} shows the results. 
\textbf{First}, for SPOS and DARTS, the performance of the searched architecture fluctuates randomly as the supernet training goes, although the training loss of the supernet keeps decreasing. 
This makes good architectures almost inaccessible because we cannot decide which epoch of the supernet checkpoint is optimal. \textbf{Second}, the accuracy of our searched architectures increases progressively (i.e., from 75.7\% to 77.5\%) as the supernet training goes on until convergence between the 16th and 20th epochs. This implies that we can use the converged supernet (e.g., the last epoch) to search for the architecture. Note that the accuracy increases rapidly in the early stage with the same tendency of training loss decreasing, which evidences a correlation between the accuracy of the searched architecture and the loss of the supernet. The stably increasing performance of searched architecture proves the stability of DNA.

\textbf{Reproducibility/Robustness.} NAS is worrying for being difficult to reproduce \cite{anonymous2020nas}. For instance, the seed is critical in reproducing DARTS results, possibly because of their sensitivity to initial weights. To evaluate the reproducibility of DNA, we conducted repeated experiments with two ablation types: i) different random seeds and ii) different data amounts. For seeds, we choose \{0, 41, 42\} that affect the supernet weight initialization and random path sampling during the search. Table \ref{tab:seed} shows that model ranking is stable for different seeds, proving the high reproducibility and robustness of DNA. For data amounts, we randomly sampled a subset of the ImageNet training set as the new training set and kept the total training steps unchanged. Table \ref{tab:dataefficient} shows that, with the training data being reduced to 80\% and further to 40\%, DNA's model ranking remains stable. A detailed model ranking comparison of different data amounts is shown in Fig. \ref{fig:reproduce}. To remove the effect of variance in scale and give a more intuitive comparison, we perform standardization over the total losses. Every 3 points are associated with the same accuracy. As shown, the points of every single model are closely bonded, and the regression lines of the three data amounts coincide with each other. This proves DNA's robustness to the data amount, implying that DNA can be a data-efficient few-shot NAS method.

\textbf{Feature map visualization.}
Several feature maps (Block 2 and 4, Epoch 16) of the teacher and student are visualized in Fig. \ref{fig:feature}. As shown, our student supernet, although with only $1/3$ of the layer numbers of the teacher model, can imitate the teacher very well. The textures are incredibly close at every channel, even on highly abstracted $14\times14$ feature maps of Block 4, proving the effectiveness of DNA in distilling knowledge from the teacher.

\subsection{Ablation Study}\label{sec:ablation}

\textbf{Distillation strategy.}
We use a parallel distillation strategy in our DNA, i.e., we distill different blocks of the student supernet \emph{concurrently} since the input and supervision of each student block are the feature maps of the teacher that can be pre-computed. We compare our parallel strategy to two progressive distillation strategies. A progressive strategy means training and searching each block of the student supernet \emph{one by one progressively}, in which: one trains Block 1 of the supernet and performs architecture search in Block 1 to obtain $m$ optimal blocky architectures; then, with the $m$ blocky architectures, one trains Block 2 of the supernet and perform architecture search in Block \textbf{1\&2} to obtain $m$ optimal blocky architectures; the cycle is repeated until the search is done. The two progressive strategies differ in that: \textbf{(A)} when training the $i$-th block of the supernet, all the $i$ blocks are trained from scratch; \textbf{(B)} when training the $i$-th block of the supernet, only the $i$-th block is trained from scratch with the weights of the previous ($i$-$1$) blocks of the supernet being frozen. The results in Table \ref{tab:strategy} prove the superiority of our strategy.

\begin{table}[t]
\setlength{\abovecaptionskip}{0.cm}
\setlength{\belowcaptionskip}{-0.cm}
\centering
\caption{\small{Component analysis of DNA. \textbf{Cons.:} constraint.}}\label{tab:strategy}
\resizebox{0.45\textwidth}{!}{
\begin{tabular}{c|c|c|c|c|c|c}
Strategy & name & Cell & Cons. &Params &Acc@1 &Acc@5\\\hline\hline
\multirow{2}{*}{progressive} & A & 1 & & 5.18M &77.0\% &93.34\%\\
 & B & 1 & & 5.58M &77.15\% &93.51\%\\\hline
\multirow{4}{*}{concurrent}&\multirow{4}{*}{Our} & 1 & & 5.69M &77.49\% &93.68\%\\
& &3 & & 6.26M &77.84\% &93.74\%\\
& & 1 &\checkmark & 5.09M &77.21\%	&93.50\%\\
& &3 &\checkmark & 5.28M &77.38\% &93.60\%\\
\end{tabular}
}
\vspace{-11pt}
\end{table}
\begin{table}
\centering
\caption{\small{DNA with different teachers. Note that all the searched models are retrained from scratch without any supervision of the teacher. $\mathbf{^\dagger}$: As the inputs and supervisions of a student block are precisely the teacher model's internal feature maps, EfficientNet-B7 is tested with an input size of $224\times224$ to adapt to the student.}}\label{tab:selfdistill}
\vspace{-11pt}
\begin{tabular}{l|l|c|c|c}
Role & Model & Params & Acc@1 & Acc@5\\
\hline\hline
teacher & EfficientNet-B0 & 5.28M & 76.3\% & 93.2\% \\
student & DNA-from-B0 & 5.27M & 77.8\% & 93.7\% \\
\hline\hline
teacher & EfficientNet-B7 & 66.4M & 77.8\%$^\dagger$ & 93.8\%$^\dagger$ \\
student & DNA-c & 5.28M & 77.8\% & 93.7\% \\\hline
student & DNA-c7 (224$\times$224) & 64.9M & 79.9\% & 94.9\% \\
\end{tabular}
\vspace{-11pt}
\end{table}

\textbf{Single cell \emph{vs.} multi cells.} To examine the multi-cell search's impact, we perform DNA with a single cell in each block for comparison. As shown in Table \ref{tab:strategy}, the multi-cell search improves the top-1 accuracy by 0.2\% under the constraint of 5.3M parameters and by 0.3\% under no constraint. Note that the single-cell case obtained an architecture with fewer parameters under the constraint of 5.3M parameters, which might be ascribed to the relatively lower variability of the channel and layer numbers.

\textbf{Teacher dependency.} As knowledge distillation is employed in our search stage (but \textbf{NOT} in the retraining stage), we must clarify whether the capacity of our searched architecture is limited by that of a teacher. \textbf{First}, we replace our senior teacher (i.e., EfficientNet-B7) with a junior teacher (i.e., EfficientNet-B0) to supervise architecture search under the same constraint. The result is shown in Table \ref{tab:selfdistill}. Surprisingly, the performance of the searched architecture guided by EfficientNet-B0 is almost the same as that guided by EfficientNet-B7, which indicates that the performance of our DNA does not necessarily rely on a high-performing teacher. More importantly, we can find that DNA-from-B0 searched by using EfficientNet-B0 as a teacher significantly outperforms its teacher by 1.5\% with the same model size (5.27M \emph{vs.} 5.28M), which proves that the performance of our architecture distillation is not restricted by the performance of the teacher. As implied, we could improve the capacity of any architecture by self-distilling the architecture. \textbf{Second}, as our searched architecture DNA-c achieves the same accuracy as its teacher with significantly fewer parameters (5.28M \emph{vs.} 66.4M), we scale our DNA-c to the similar model size as the teacher following \cite{Tan2019EfficientNetRM}. As the inputs and supervisions of a student block are exactly the internal feature maps of the teacher model, both EfficientNet-B7 and DNA-c7 in Table \ref{tab:selfdistill} are tested with an input size of $224\times224$. Besides, DNA-c7 in Table \ref{tab:selfdistill} is also trained with an input size of $224\times224$. Remarkably, the scaled architecture (i.e., DNA-c7) outperforms the teacher (i.e., EfficientNet-B7) by 2.1\%, demonstrating the practicability and scalability of our DNA. \textbf{Third} (\textbf{last but not least}), although the performance of our architecture distillation is not restricted by the performance of the teacher, recurrently scaling our student to be a new teacher can search for better and better architectures, as suggested by our DNA+. This indicates that we can randomly select an existing model as our teacher without allocating too much effort. Then we can perform recurrent NAS to obtain the optimal architecture.

\begin{figure}
\centering
\subfigure[\small{VOC mIoU vs. FLOPs}]{\label{fig:voc_flops}\includegraphics[width=0.48\linewidth]{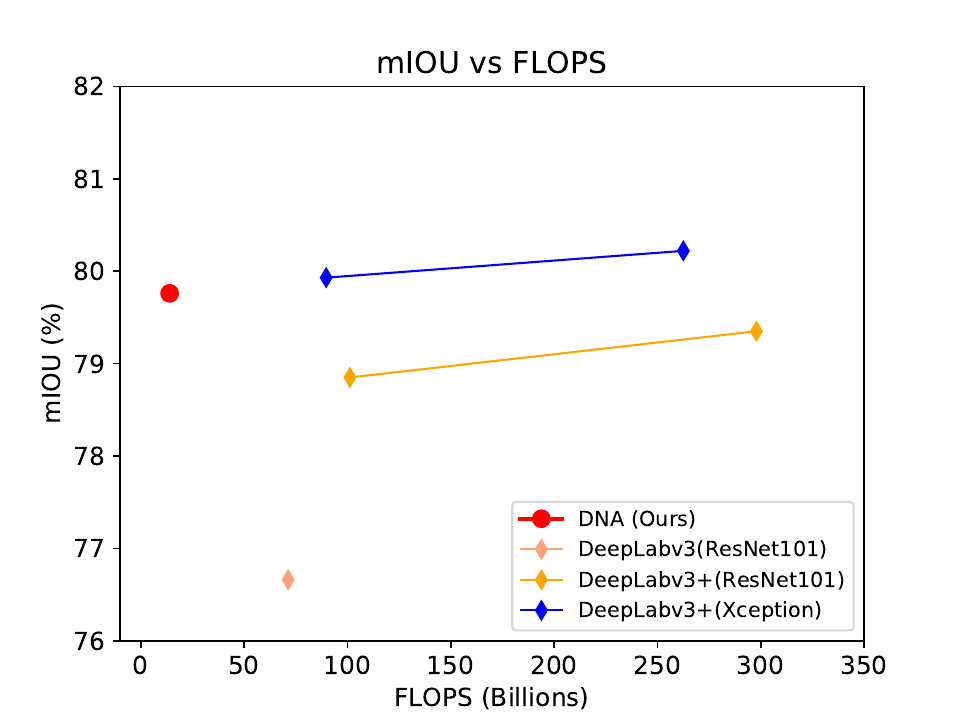}}
\subfigure[\small{VOC mIOU vs. Params}]{\label{fig:voc_params}\includegraphics[width=0.48\linewidth]{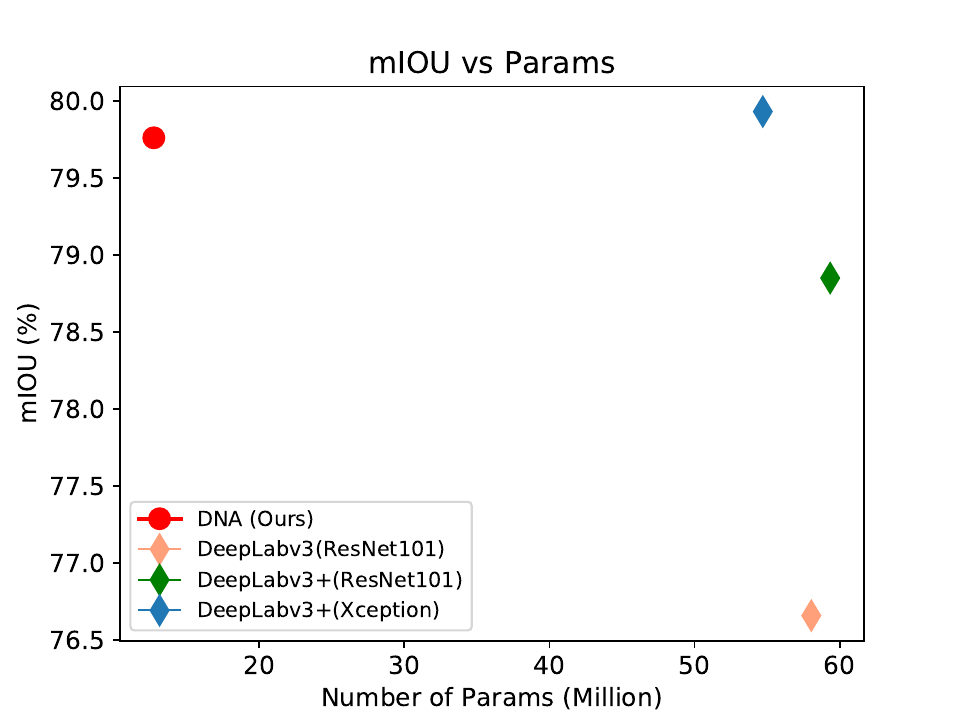}}
\vspace{-11pt}
\caption{\small{Trade-off between model accuracy and model complexity on PASCAL VOC12 segmentation. \textbf{Left:} model accuracy \emph{vs.} FLOPs; \textbf{Right:} model accuracy \emph{vs.} parameter numbers. }}\label{fig:voc}
\vspace{-11pt}
\end{figure}

\subsection{Semantic Segmentation on VOC12 and ADE20K}

\textbf{PASCAL VOC12.} As handcrafted architectures often exhibit good generalization beyond ImageNet/CIFAR classification, we examine the universality of our searched architectures. We take semantic segmentation on the challenging PASCAL VOC12 benchmark \cite{Everingham2015Pascal_ijcv} as an example. This dataset includes 21 categories, with 10,582 training images (\cite{Everingham2015Pascal_ijcv,Hariharan2011Semantic_iccv}) and 1,449 pixel-level labeled images in the validation set. We measure accuracy using the standard pixel intersection-over-union metric averaged across the 21 categories (mIOU). We replaced the backbone of DeepLabv3+ \cite{Chen2018Encoder_eccv}, a top-performing segmentation model, with our DNA-c3, forming DNA-Seg. Other protocols align with \cite{Chen2018Encoder_eccv}.

\textbf{Complexity vs. accuracy.}
Examining the performance on the test set (i.e., fighting for a champion of the leaderboard) of VOC12 segmentation usually includes many unknown tricks (e.g., employing multi-scale inputs during the evaluation, adding left-right flipped inputs, pre-training on MS-COCO dataset \cite{Lin2014Microsoft_eccv}, pre-training on JFT-300M dataset \cite{Sun2017Revisiting_iccv}, and ensembling multi-model), making the model complexity of different methods unclear. To avoid this ambiguity, we compare the complexity and accuracies of different models on the validation set of VOC12 segmentation for a fair comparison. The results are presented in Fig. \ref{fig:voc}. Under similar accuracy constraint, our DNA-Seg use 21$\times$ fewer FLOPs and 4.5$\times$ fewer parameters than the recent DeepLabv3+(ResNet101), 6$\times$ fewer FLOPs and 4.2$\times$ fewer parameters than the recent DeepLabv3+(Xception). In particular, with a single-model and single test-time scale, our DNA-Seg achieves state-of-the-art 79.76\% validation IoU with 12.75M parameters and 14.01 FLOPs.

\begin{table}[h]
\vspace{-5pt}
\scriptsize
\begin{center}
\caption{\small{\textcolor{black}{Segmentation tasks with CNNs on ADE20K.}}}
\vspace{-6pt}
\textcolor{black}{
\begin{tabular}{c|ccc|c|c}
\toprule
\multirow{2}{*}{Backbone} & \multicolumn{3}{|c|}{EfficientNet \cite{Tan2019EfficientNetRM}} & \multirow{2}{*}{FBNetV5 \cite{wu2021fbnetv5}} &   \multirow{2}{*}{\textbf{DNA-d}} \\
& -B0 \cite{Tan2019EfficientNetRM} & -B1 \cite{Tan2019EfficientNetRM} & -B2 \cite{Tan2019EfficientNetRM} &  &   \\
\midrule
mIoU (\%) & 38.9 & 39.2 & 40.4 & 40.4 &\textbf{40.7}   \\
\bottomrule
\end{tabular}
}
\label{tab:ade20k_r2_plus}
\end{center}
\vspace{-11pt}
\end{table}

\begin{table}[h]
\vspace{-5pt}
\scriptsize
\begin{center}
\caption{\small{\textcolor{black}{Segmentation tasks with ViTs on ADE20K.}}}
\vspace{-6pt}
\textcolor{black}{
\begin{tabular}{c|ccc}
\toprule
Backbone & NASViT \cite{gong2021nasvit} & S3 \cite{chen2021searching} &   DNA-T++  \\
\midrule
mIoU (\%) &  41.4 & 46.3 & 47.3  \\
\bottomrule
\end{tabular}
}
\label{tab:ade20k_vit}
\end{center}
\vspace{-11pt}
\end{table}

\noindent
\textcolor{black}{
\textbf{ADE20K.} In addition to PASCAL VOC12, we further include experiments on ADE20K \cite{zhou2019semantic}, expanding the scope of our evaluation. For the segmentation task, we have employed the Atrous Spatial Pyramid Pooling framework \cite{chen2017rethinking} as our segmenter. The results, as presented in Table \ref{tab:ade20k_r2_plus}, clearly demonstrate that our DNA method surpasses state-of-the-art efficient backbone methods by a significant margin. For instance, the performance comparison reveals that our DNA method achieves superior results (40.7\%) compared to EfficientNet-B0 \cite{Tan2019EfficientNetRM} (38.9\%), EfficientNet-B1 \cite{Tan2019EfficientNetRM} (39.2\%), and EfficientNet-B2 \cite{Tan2019EfficientNetRM} (40.4\%). Furthermore, it is worth noting that even the accuracy reported by FBNetV5 \cite{wu2021fbnetv5} (40.4\%) on ADE20K is inferior to the performance of our DNA method. These findings substantiate the effectiveness and superiority of our approach.
}

\textcolor{black}{
We further benchmark our approach against the two distinguished ViT-based methods, specifically NASViT \cite{gong2021nasvit} and S3 \cite{chen2021searching}. We have employed DNA-T++ as our selected backbone architecture. Under the UperNet configuration (which is identical to the setting used in S3), our model has demonstrated an exceptional accuracy rate of 47.3\% on the ADE20k dataset, outperforming both NASViT (41.4\%) and S3 (46.27\%). It is of utmost importance to emphasize that our model exhibits a larger scale in comparison to NASViT (5.6G vs. 1.9G), see Table \ref{tab:ade20k_vit}.
However, it's worth mentioning that this model is already the largest available in its paper and on its GitHub page.
}

\subsection{\textcolor{black}{Object Detection on MSCOCO}}

\begin{table}[h]
\vspace{-5pt}
\scriptsize
\begin{center}
\caption{\small{\textcolor{black}{Object detection results of Faster RCNN with different backbones on COCO.}}}
\vspace{-6pt}
\textcolor{black}{
\begin{tabular}{c|ccc}
\toprule
Backbone & EfficientNetB0 \cite{Tan2019EfficientNetRM} & FBNetV3-A \cite{dai2021fbnetv3} &  \textbf{DNA-d} \\
\midrule
mAP (\%) & 30.2 & 30.5 & \textbf{32.3}   \\
\bottomrule
\end{tabular}
}
\label{tab:coco_r2_plus}
\end{center}
\vspace{-11pt}
\end{table}

\textcolor{black}{In order to further assess the transferability of the searched models across various tasks, we employ DNA-d as a substitute for the backbone feature extractor in Faster R-CNN, specifically using the conv4 (C4) backbone, following the paper of FBNetV3 \cite{dai2021fbnetv3}. We then proceed to compare the performance of this configuration with other models on the COCO detection dataset. 
We have conducted a comprehensive comparison of our method with state-of-the-art approaches such as FBNetV3 \cite{dai2021fbnetv3} and EfficientNet-B0 (selected based on the recommendation of the FBNetV3 paper \cite{dai2021fbnetv3}) on the MSCOCO dataset \cite{Lin2014Microsoft_eccv}. The results, as depicted in Table \ref{tab:coco_r2_plus}, unequivocally reveal the substantial superiority of our method over these esteemed competitors, surpassing them by a clear margin (32.3\% vs. 30.5\%). 
}

\section{Conclusion}\label{sect:conclusion}

In this paper, we employ a tool for generalization boundedness to link weight-sharing NAS's inefficiency to unreliable architecture ratings due to a vast search space. To address this issue, we modularize the search space into blocks and apply distilling neural architecture techniques. We explore three block-wise learning methods: supervised learning (DNA), progressive learning (DNA+), and self-supervised learning (DNA++).
Our DNA family evaluates all candidate architectures, a significant advancement over prior methods restricted to smaller sub-search spaces via heuristic algorithms. Additionally, our approach enables the search for architectures with varying depths and widths under specified computational constraints.
Recognizing the pivotal role of architecture rating in NAS, we provide extensive empirical results to scrutinize this aspect. Lastly, our method attains state-of-the-art results across various tasks. Future work will extend the application of our DNA method to NLP, 3D architectures \cite{wang2023perf,wang2023sparsenerf,chen2023scenedreamer}, and generative models \cite{wang2022traditional,wang2022stylelight,chen2022text2light}.

\section{\textcolor{black}{Highlight and Outlook}}

\textcolor{black}{NAS has been a significant area of research and development in deep learning and AI. Nevertheless, it is plausible that the extent of attention garnered by NAS has undergone evolution or variation over time. Several conceivable factors may elucidate why NAS has witnessed diminished prominence during certain junctures or within specific contexts: \textbf{(a) Architecture Ranking:} we believe that the suboptimal architecture ranking is a key factor contributing to the reduced attention on NAS. Specifically, there exists a significant apprehension pertaining to the efficacy of the search process, making the researchers and practitioners worrying. To be more precise, this concern centers on the correlation between program performance during the search phase, specifically on small proxy tasks, and their subsequent performance on the final task. In the absence of a robust linkage between these two tasks, researchers may adopt a cautious stance towards NAS. \textbf{(b) Maturation of Architectures:} several deep learning architectures, such as CNNs and ViTs, had matured and demonstrated strong performance across a wide range of applications. Without resolving the architecture ranking issue, there may be less urgency to search for new architectures. \textbf{(c) Computational Cost:} NAS is computationally expensive and time-consuming, requiring significant computational resources to search for optimal neural network architectures. Researchers and organizations might prioritize more efficient approaches to model design and optimization unless the architecture ranking issue is addressed. \textbf{(d) Foundation Models:} foundation models \cite{bommasani2021opportunities} have become popular techniques for many AI tasks. These approaches allow practitioners to leverage existing architectures and adapt them to specific tasks without the need for extensive architecture search, unless the architecture ranking problem raised in our paper is resolved.}

\textcolor{black}{As evident, the resolution of the architecture ranking issue holds the capacity to alleviate all the previously mentioned challenges. Thus, the central focus of this paper is to diligently address the architecture ranking challenge with the aspiration of bolstering the efficacy of NAS methods.}

\section*{Acknowledgement}
This work is supported by the following grants: National Key R\&D Program of China under Grant No. 2021ZD0111601, National Natural Science Foundation of China (NSFC) under Grant No.61836012, 62006255, 62325605, GuangDong Basic and Applied Basic Research Foundation under Grant No. 2023A1515011374, GuangDong Province Key Laboratory of Information Security Technology.


\bibliographystyle{IEEEtran}
\bibliography{IEEEabrv,egbib}

\begin{thebibliography}{100}
\providecommand{\url}[1]{#1}
\csname url@samestyle\endcsname
\providecommand{\newblock}{\relax}
\providecommand{\bibinfo}[2]{#2}
\providecommand{\BIBentrySTDinterwordspacing}{\spaceskip=0pt\relax}
\providecommand{\BIBentryALTinterwordstretchfactor}{4}
\providecommand{\BIBentryALTinterwordspacing}{\spaceskip=\fontdimen2\font plus
\BIBentryALTinterwordstretchfactor\fontdimen3\font minus \fontdimen4\font\relax}
\providecommand{\BIBforeignlanguage}[2]{{%
\expandafter\ifx\csname l@#1\endcsname\relax
\typeout{** WARNING: IEEEtran.bst: No hyphenation pattern has been}%
\typeout{** loaded for the language `#1'. Using the pattern for}%
\typeout{** the default language instead.}%
\else
\language=\csname l@#1\endcsname
\fi
#2}}
\providecommand{\BIBdecl}{\relax}
\BIBdecl

\bibitem{RenXCHLCW2020}
\BIBentryALTinterwordspacing
P.~Ren, Y.~Xiao, X.~Chang, P.-Y. Huang, Z.~Li, X.~Chen, and X.~Wang, ``A comprehensive survey of neural architecture search: Challenges and solutions,'' \emph{{ACM Computing Surveys}}, 2021. [Online]. Available: \url{https://dx.doi.org/10.1145/3447582}
\BIBentrySTDinterwordspacing

\bibitem{Tan2018MnasNetPN}
M.~Tan, B.~Chen, R.~Pang, V.~Vasudevan, M.~Sandler, A.~Howard, and Q.~V. Le, ``Mnasnet: Platform-aware neural architecture search for mobile,'' in \emph{{IEEE} Conference on Computer Vision and Pattern Recognition, {CVPR} 2019, Long Beach, CA, USA, June 16-20, 2019}, 2019, pp. 2820--2828.

\bibitem{Tan2019EfficientNetRM}
M.~Tan and Q.~V. Le, ``Efficientnet: Rethinking model scaling for convolutional neural networks,'' in \emph{Proceedings of the 36th International Conference on Machine Learning, {ICML} 2019, 9-15 June 2019, Long Beach, California, {USA}}, 2019, pp. 6105--6114.

\bibitem{chen2018reinforced}
Y.~Chen, G.~Meng, Q.~Zhang, S.~Xiang, C.~Huang, L.~Mu, and X.~Wang, ``{RENAS:} reinforced evolutionary neural architecture search,'' in \emph{{IEEE} Conference on Computer Vision and Pattern Recognition, {CVPR} 2019, Long Beach, CA, USA, June 16-20, 2019}, 2019, pp. 4787--4796.

\bibitem{negrinho2017deeparchitect}
R.~Negrinho and G.~J. Gordon, ``Deeparchitect: Automatically designing and training deep architectures,'' \emph{CoRR}, vol. abs/1704.08792, 2017.

\bibitem{Kandasamy2018Neural_neurips}
K.~Kandasamy, W.~Neiswanger, J.~Schneider, B.~Poczos, and E.~P. Xing, ``Neural architecture search with bayesian optimisation and optimal transport,'' in \emph{Advances in neural information processing systems}, vol.~31, 2018.

\bibitem{White2021BANANAS_aaai}
C.~White, W.~Neiswanger, and Y.~Savani, ``{BANANAS:} bayesian optimization with neural architectures for neural architecture search,'' in \emph{Thirty-Fifth {AAAI} Conference on Artificial Intelligence, {AAAI} 2021}.\hskip 1em plus 0.5em minus 0.4em\relax {AAAI} Press, 2021, pp. 10\,293--10\,301.

\bibitem{sciuto2019evaluating}
\BIBentryALTinterwordspacing
C.~Sciuto, K.~Yu, M.~Jaggi, C.~Musat, and M.~Salzmann, ``Evaluating the search phase of neural architecture search,'' in \emph{8th International Conference on Learning Representations, {ICLR} 2020}, 2020. [Online]. Available: \url{https://openreview.net/forum?id=H1loF2NFwr}
\BIBentrySTDinterwordspacing

\bibitem{anonymous2020nas}
\BIBentryALTinterwordspacing
A.~Yang, P.~M. Esperan{\c{c}}a, and F.~M. Carlucci, ``{\{}NAS{\}} evaluation is frustratingly hard,'' in \emph{8th International Conference on Learning Representations, {ICLR} 2020,}, 2020. [Online]. Available: \url{https://openreview.net/forum?id=HygrdpVKvr}
\BIBentrySTDinterwordspacing

\bibitem{Peng2021Pi_iccv}
J.~Peng, J.~Zhang, C.~Li, G.~Wang, X.~Liang, and L.~Lin, ``Pi-nas: Improving neural architecture search by reducing supernet training consistency shift,'' in \emph{{IEEE} International Conference on Computer Vision, {ICCV} 2021}.\hskip 1em plus 0.5em minus 0.4em\relax {IEEE} Computer Society, 2021.

\bibitem{Hinton2015DistillingTK}
\BIBentryALTinterwordspacing
G.~E. Hinton, O.~Vinyals, and J.~Dean, ``Distilling the knowledge in a neural network,'' \emph{CoRR}, vol. abs/1503.02531, 2015. [Online]. Available: \url{http://arxiv.org/abs/1503.02531}
\BIBentrySTDinterwordspacing

\bibitem{zoph2016neural}
B.~Zoph and Q.~V. Le, ``Neural architecture search with reinforcement learning,'' in \emph{5th International Conference on Learning Representations, {ICLR} 2017, Toulon, France, April 24-26, 2017, Conference Track Proceedings}, 2017.

\bibitem{zhong2018practical}
Z.~Zhong, J.~Yan, W.~Wu, J.~Shao, and C.~Liu, ``Practical block-wise neural network architecture generation,'' in \emph{2018 {IEEE} Conference on Computer Vision and Pattern Recognition, {CVPR} 2018, Salt Lake City, UT, USA, June 18-22, 2018}, 2018, pp. 2423--2432.

\bibitem{baker2016designing}
B.~Baker, O.~Gupta, N.~Naik, and R.~Raskar, ``Designing neural network architectures using reinforcement learning,'' in \emph{5th International Conference on Learning Representations, {ICLR} 2017, Toulon, France, April 24-26, 2017, Conference Track Proceedings}, 2017.

\bibitem{Cai2018ProxylessNASDN}
H.~Cai, L.~Zhu, and S.~Han, ``Proxylessnas: Direct neural architecture search on target task and hardware,'' in \emph{7th International Conference on Learning Representations, {ICLR} 2019, New Orleans, LA, USA, May 6-9, 2019}, 2019.

\bibitem{Liu2018DARTSDA}
H.~Liu, K.~Simonyan, and Y.~Yang, ``{DARTS:} differentiable architecture search,'' in \emph{7th International Conference on Learning Representations, {ICLR} 2019, New Orleans, LA, USA, May 6-9, 2019}, 2019.

\bibitem{dong2019searching}
X.~Dong and Y.~Yang, ``Searching for a robust neural architecture in four {GPU} hours,'' in \emph{{IEEE} Conference on Computer Vision and Pattern Recognition, {CVPR} 2019, Long Beach, CA, USA, June 16-20, 2019}, 2019, pp. 1761--1770.

\bibitem{brock2017smash}
A.~Brock, T.~Lim, J.~M. Ritchie, and N.~Weston, ``{SMASH:} one-shot model architecture search through hypernetworks,'' in \emph{6th International Conference on Learning Representations, {ICLR} 2018, Vancouver, BC, Canada, April 30 - May 3, 2018, Conference Track Proceedings}, 2018.

\bibitem{Wu2018FBNetHE}
B.~Wu, X.~Dai, P.~Zhang, Y.~Wang, F.~Sun, Y.~Wu, Y.~Tian, P.~Vajda, Y.~Jia, and K.~Keutzer, ``Fbnet: Hardware-aware efficient convnet design via differentiable neural architecture search,'' in \emph{{IEEE} Conference on Computer Vision and Pattern Recognition, {CVPR} 2019, Long Beach, CA, USA, June 16-20, 2019}, 2019, pp. 10\,734--10\,742.

\bibitem{Guo2019SinglePO}
Z.~Guo, X.~Zhang, H.~Mu, W.~Heng, Z.~Liu, Y.~Wei, and J.~Sun, ``Single path one-shot neural architecture search with uniform sampling,'' \emph{CoRR}, vol. abs/1904.00420, 2019.

\bibitem{Chu2019FairNAS_arxiv}
X.~Chu, B.~Zhang, R.~Xu, and J.~Li, ``Fairnas: Rethinking evaluation fairness of weight sharing neural architecture search,'' \emph{CoRR}, vol. abs/1907.01845, 2019.

\bibitem{bender2018understanding}
G.~Bender, P.~Kindermans, B.~Zoph, V.~Vasudevan, and Q.~V. Le, ``Understanding and simplifying one-shot architecture search,'' in \emph{Proceedings of the 35th International Conference on Machine Learning, {ICML} 2018, Stockholmsm{\"{a}}ssan, Stockholm, Sweden, July 10-15, 2018}, 2018, pp. 549--558.

\bibitem{anonymous2020improving}
X.~Li, C.~Lin, C.~Li, M.~Sun, W.~Wu, J.~Yan, and W.~Ouyang, ``Improving one-shot {NAS} by suppressing the posterior fading,'' in \emph{2020 {IEEE/CVF} Conference on Computer Vision and Pattern Recognition, {CVPR} 2020, Seattle, WA, USA, June 13-19, 2020}.\hskip 1em plus 0.5em minus 0.4em\relax Computer Vision Foundation / {IEEE}, 2020, pp. 13\,833--13\,842.

\bibitem{Baxter1997Bayesian_ml}
J.~Baxter, ``A bayesian/information theoretic model of learning to learn via multiple task sampling,'' \emph{Mach. Learn.}, vol.~28, no.~1, pp. 7--39, 1997.

\bibitem{li2020block}
C.~Li, J.~Peng, L.~Yuan, G.~Wang, X.~Liang, L.~Lin, and X.~Chang, ``Block-wisely supervised neural architecture search with knowledge distillation,'' in \emph{Proceedings of the IEEE/CVF Conference on Computer Vision and Pattern Recognition}, 2020, pp. 1989--1998.

\bibitem{Ba2013DoDN}
J.~Ba and R.~Caruana, ``Do deep nets really need to be deep?'' in \emph{Advances in Neural Information Processing Systems 27: Annual Conference on Neural Information Processing Systems 2014, December 8-13 2014, Montreal, Quebec, Canada}, 2014, pp. 2654--2662.

\bibitem{Romero2014FitNetsHF}
A.~Romero, N.~Ballas, S.~E. Kahou, A.~Chassang, C.~Gatta, and Y.~Bengio, ``Fitnets: Hints for thin deep nets,'' in \emph{3rd International Conference on Learning Representations, {ICLR} 2015, San Diego, CA, USA, May 7-9, 2015, Conference Track Proceedings}, 2015.

\bibitem{Yim2017AGF}
J.~Yim, D.~Joo, J.~Bae, and J.~Kim, ``A gift from knowledge distillation: Fast optimization, network minimization and transfer learning,'' in \emph{2017 {IEEE} Conference on Computer Vision and Pattern Recognition, {CVPR} 2017, Honolulu, HI, USA, July 21-26, 2017}, 2017, pp. 7130--7138.

\bibitem{Wang2018ProgressiveBK}
H.~Wang, H.~Zhao, X.~Li, and X.~Tan, ``Progressive blockwise knowledge distillation for neural network acceleration,'' in \emph{Proceedings of the Twenty-Seventh International Joint Conference on Artificial Intelligence, {IJCAI} 2018, July 13-19, 2018, Stockholm, Sweden}, 2018, pp. 2769--2775.

\bibitem{furlanello2018born}
T.~Furlanello, Z.~Lipton, M.~Tschannen, L.~Itti, and A.~Anandkumar, ``Born again neural networks,'' in \emph{International Conference on Machine Learning}, 2018, pp. 1607--1616.

\bibitem{mobahi2020self}
H.~Mobahi, M.~Farajtabar, and P.~L. Bartlett, ``Self-distillation amplifies regularization in hilbert space,'' in \emph{Advances in Neural Information Processing Systems (NeurIPS)}, 2020.

\bibitem{xie2020self}
Q.~Xie, M.-T. Luong, E.~Hovy, and Q.~V. Le, ``Self-training with noisy student improves imagenet classification,'' in \emph{Proceedings of the IEEE/CVF Conference on Computer Vision and Pattern Recognition}, 2020, pp. 10\,687--10\,698.

\bibitem{Liu2020Are_eccv}
C.~Liu, P.~Doll{\'{a}}r, K.~He, R.~B. Girshick, A.~L. Yuille, and S.~Xie, ``Are labels necessary for neural architecture search?'' in \emph{Computer Vision - {ECCV} 2020 - 16th European Conference, Glasgow, UK, August 23-28, 2020, Proceedings, Part {IV}}, ser. Lecture Notes in Computer Science, vol. 12349.\hskip 1em plus 0.5em minus 0.4em\relax Springer, 2020, pp. 798--813.

\bibitem{Zhang2021Neural_cvpr}
X.~Zhang, P.~Hou, X.~Zhang, and J.~Sun, ``Neural architecture search with random labels,'' in \emph{{IEEE} Conference on Computer Vision and Pattern Recognition, {CVPR} 2021, virtual, June 19-25, 2021}.\hskip 1em plus 0.5em minus 0.4em\relax Computer Vision Foundation / {IEEE}, 2021, pp. 10\,907--10\,916.

\bibitem{Yan2020Does_neurips}
S.~Yan, Y.~Zheng, W.~Ao, X.~Zeng, and M.~Zhang, ``Does unsupervised architecture representation learning help neural architecture search?'' in \emph{Advances in Neural Information Processing Systems 33: Annual Conference on Neural Information Processing Systems 2020, NeurIPS 2020, December 6-12, 2020, virtual}, 2020.

\bibitem{Wang2021Joint_tnnls}
G.~Wang, L.~Lin, R.~Chen, G.~Wang, and J.~Zhang, ``Joint learning of neural transfer and architecture adaptation for image recognition,'' \emph{IEEE Transactions on Neural Networks and Learning Systems (T-NNLS)}, 2021.

\bibitem{Li2021BossNAS_iccv}
C.~Li, T.~Tang, G.~Wang, J.~Peng, B.~Wang, X.~Liang, and X.~Chang, ``{BossNAS}: Exploring hybrid cnn-transformers with block-wisely self-supervised neural architecture search,'' in \emph{{IEEE} International Conference on Computer Vision, {ICCV} 2021}.\hskip 1em plus 0.5em minus 0.4em\relax {IEEE} Computer Society, 2021.

\bibitem{chen2021glit}
B.~Chen, P.~Li, C.~Li, B.~Li, L.~Bai, C.~Lin, M.~Sun, J.~Yan, and W.~Ouyang, ``Glit: Neural architecture search for global and local image transformer,'' in \emph{Proceedings of the IEEE/CVF International Conference on Computer Vision}, 2021, pp. 12--21.

\bibitem{su2022vitas}
X.~Su, S.~You, J.~Xie, M.~Zheng, F.~Wang, C.~Qian, C.~Zhang, X.~Wang, and C.~Xu, ``Vitas: Vision transformer architecture search,'' in \emph{Computer Vision--ECCV 2022: 17th European Conference, Tel Aviv, Israel, October 23--27, 2022, Proceedings, Part XXI}.\hskip 1em plus 0.5em minus 0.4em\relax Springer, 2022, pp. 139--157.

\bibitem{chen2021autoformer}
M.~Chen, H.~Peng, J.~Fu, and H.~Ling, ``Autoformer: Searching transformers for visual recognition,'' in \emph{Proceedings of the IEEE/CVF international conference on computer vision}, 2021, pp. 12\,270--12\,280.

\bibitem{yu2020bignas}
J.~Yu, P.~Jin, H.~Liu, G.~Bender, P.-J. Kindermans, M.~Tan, T.~Huang, X.~Song, R.~Pang, and Q.~Le, ``Bignas: Scaling up neural architecture search with big single-stage models,'' in \emph{Computer Vision--ECCV 2020: 16th European Conference, Glasgow, UK, August 23--28, 2020, Proceedings, Part VII 16}.\hskip 1em plus 0.5em minus 0.4em\relax Springer, 2020, pp. 702--717.

\bibitem{gong2021nasvit}
C.~Gong, D.~Wang, M.~Li, X.~Chen, Z.~Yan, Y.~Tian, V.~Chandra \emph{et~al.}, ``Nasvit: Neural architecture search for efficient vision transformers with gradient conflict aware supernet training,'' in \emph{International Conference on Learning Representations}, 2021.

\bibitem{chen2021searching}
M.~Chen, K.~Wu, B.~Ni, H.~Peng, B.~Liu, J.~Fu, H.~Chao, and H.~Ling, ``Searching the search space of vision transformer,'' \emph{Advances in Neural Information Processing Systems}, vol.~34, pp. 8714--8726, 2021.

\bibitem{Pham2018EfficientNA}
H.~Pham, M.~Y. Guan, B.~Zoph, Q.~V. Le, and J.~Dean, ``Efficient neural architecture search via parameter sharing,'' \emph{ArXiv}, vol. abs/1802.03268, 2018.

\bibitem{ballard1987modular}
D.~H. Ballard, ``Modular learning in neural networks,'' in \emph{Proceedings of the sixth National Conference on artificial intelligence-volume 1}, 1987, pp. 279--284.

\bibitem{Vaswani2017Attention_nips}
A.~Vaswani, N.~Shazeer, N.~Parmar, J.~Uszkoreit, L.~Jones, A.~N. Gomez, L.~Kaiser, and I.~Polosukhin, ``Attention is all you need,'' in \emph{Advances in Neural Information Processing Systems 30: Annual Conference on Neural Information Processing Systems 2017, 4-9 December 2017, Long Beach, CA, {USA}}, 2017, pp. 5998--6008.

\bibitem{Devlin2019pretraining_naacl}
J.~Devlin, M.~Chang, K.~Lee, and K.~Toutanova, ``{BERT:} pre-training of deep bidirectional transformers for language understanding,'' in \emph{Proceedings of the 2019 Conference of the North American Chapter of the Association for Computational Linguistics: Human Language Technologies, {NAACL-HLT} 2019, Minneapolis, MN, USA, June 2-7, 2019, Volume 1 (Long and Short Papers)}, 2019, pp. 4171--4186.

\bibitem{Dong2021EfficientBERT_emnlp}
C.~Dong, G.~Wang, H.~Xu, J.~Peng, X.~Ren, and X.~Liang, ``Efficientbert: Progressively searching multilayer perceptron via warm-up knowledge distillation,'' in \emph{Findings of Conference on Empirical Methods in Natural Language Processing, {EMNLP} 2021}, 2021.

\bibitem{Chu2019ScarletNASBT}
X.~Chu, B.~Zhang, J.~Li, Q.~Li, and R.~Xu, ``Scarletnas: Bridging the gap between scalability and fairness in neural architecture search,'' \emph{CoRR}, vol. abs/1908.06022, 2019.

\bibitem{anonymous2020computation}
\BIBentryALTinterwordspacing
F.~Liang, C.~Lin, R.~Guo, M.~Sun, W.~Wu, J.~Yan, and W.~Ouyang, ``Computation reallocation for object detection,'' in \emph{8th International Conference on Learning Representations, {ICLR} 2020}, 2020. [Online]. Available: \url{https://openreview.net/forum?id=SkxLFaNKwB}
\BIBentrySTDinterwordspacing

\bibitem{Chen2020Simple_icml}
T.~Chen, S.~Kornblith, M.~Norouzi, and G.~E. Hinton, ``A simple framework for contrastive learning of visual representations,'' in \emph{Proceedings of the 37th International Conference on Machine Learning, {ICML} 2020, 13-18 July 2020, Virtual Event}, ser. Proceedings of Machine Learning Research, vol. 119.\hskip 1em plus 0.5em minus 0.4em\relax {PMLR}, 2020, pp. 1597--1607.

\bibitem{Grill2020Bootstrap_neurips}
J.~Grill, F.~Strub, F.~Altch{\'{e}}, C.~Tallec, P.~H. Richemond, E.~Buchatskaya, C.~Doersch, B.~{\'{A}}. Pires, Z.~Guo, M.~G. Azar, B.~Piot, K.~Kavukcuoglu, R.~Munos, and M.~Valko, ``Bootstrap your own latent - {A} new approach to self-supervised learning,'' in \emph{Advances in Neural Information Processing Systems 33: Annual Conference on Neural Information Processing Systems 2020, NeurIPS 2020, December 6-12, 2020, virtual}, 2020.

\bibitem{Wang2021Solving_iccv}
G.~Wang, K.~Wang, G.~Wang, P.~H.~S. Torr, and L.~Lin, ``Solving inefficiency of self-supervised representation learning,'' in \emph{{IEEE} International Conference on Computer Vision, {ICCV} 2021}.\hskip 1em plus 0.5em minus 0.4em\relax {IEEE} Computer Society, 2021.

\bibitem{wang2022semantic_cvpr}
G.~Wang, Y.~Tang, L.~Lin, and P.~H. Torr, ``Semantic-aware auto-encoders for self-supervised representation learning,'' in \emph{Proceedings of the IEEE/CVF Conference on Computer Vision and Pattern Recognition (CVPR)}, June 2022, pp. 9664--9675.

\bibitem{Chen2021Exploring_cvpr}
X.~Chen and K.~He, ``Exploring simple siamese representation learning,'' in \emph{{IEEE} Conference on Computer Vision and Pattern Recognition, {CVPR} 2021, virtual, June 19-25, 2021}.\hskip 1em plus 0.5em minus 0.4em\relax Computer Vision Foundation / {IEEE}, 2021, pp. 15\,750--15\,758.

\bibitem{Bardes2021VICReg_arxiv}
A.~Bardes, J.~Ponce, and Y.~LeCun, ``Vicreg: Variance-invariance-covariance regularization for self-supervised learning,'' \emph{CoRR}, vol. abs/2105.04906, 2021.

\bibitem{Zbontar2021Barlow_icml}
J.~Zbontar, L.~Jing, I.~Misra, Y.~LeCun, and S.~Deny, ``Barlow twins: Self-supervised learning via redundancy reduction,'' in \emph{Proceedings of the 38th International Conference on Machine Learning, {ICML} 2021, 18-24 July 2021, Virtual Event}, ser. Proceedings of Machine Learning Research, vol. 139.\hskip 1em plus 0.5em minus 0.4em\relax {PMLR}, 2021, pp. 12\,310--12\,320.

\bibitem{deng2009imagenet}
J.~Deng, W.~Dong, R.~Socher, L.-J. Li, K.~Li, and L.~Fei-Fei, ``Imagenet: A large-scale hierarchical image database,'' in \emph{2009 IEEE conference on computer vision and pattern recognition}, 2009, pp. 248--255.

\bibitem{Krizhevsky09cifar}
A.~Krizhevsky and G.~Hinton, ``Learning multiple layers of features from tiny images,'' \emph{Master's thesis, Department of Computer Science, University of Toronto}, 2009.

\bibitem{Everingham2015Pascal_ijcv}
M.~Everingham, S.~M.~A. Eslami, L.~V. Gool, C.~K.~I. Williams, J.~M. Winn, and A.~Zisserman, ``The pascal visual object classes challenge: {A} retrospective,'' \emph{Int. J. Comput. Vis.}, vol. 111, no.~1, pp. 98--136, 2015.

\bibitem{zhou2019semantic}
B.~Zhou, H.~Zhao, X.~Puig, T.~Xiao, S.~Fidler, A.~Barriuso, and A.~Torralba, ``Semantic understanding of scenes through the ade20k dataset,'' \emph{International Journal of Computer Vision}, vol. 127, pp. 302--321, 2019.

\bibitem{Lin2014Microsoft_eccv}
T.~Lin, M.~Maire, S.~J. Belongie, J.~Hays, P.~Perona, D.~Ramanan, P.~Doll{\'{a}}r, and C.~L. Zitnick, ``Microsoft {COCO:} common objects in context,'' in \emph{Computer Vision - {ECCV} 2014 - 13th European Conference, Zurich, Switzerland, September 6-12, 2014, Proceedings, Part {V}}, 2014, pp. 740--755.

\bibitem{Sandler2018MobileNetV2IR}
M.~Sandler, A.~G. Howard, M.~Zhu, A.~Zhmoginov, and L.~Chen, ``Mobilenetv2: Inverted residuals and linear bottlenecks,'' in \emph{2018 {IEEE} Conference on Computer Vision and Pattern Recognition, {CVPR} 2018, Salt Lake City, UT, USA, June 18-22, 2018}, 2018, pp. 4510--4520.

\bibitem{Chu2019MoGASB}
\BIBentryALTinterwordspacing
X.~Chu, B.~Zhang, and R.~Xu, ``Moga: Searching beyond mobilenetv3,'' \emph{CoRR}, vol. abs/1908.01314, 2019. [Online]. Available: \url{http://arxiv.org/abs/1908.01314}
\BIBentrySTDinterwordspacing

\bibitem{Xie2016AggregatedRT}
S.~Xie, R.~B. Girshick, P.~Doll{\'{a}}r, Z.~Tu, and K.~He, ``Aggregated residual transformations for deep neural networks,'' in \emph{2017 {IEEE} Conference on Computer Vision and Pattern Recognition, {CVPR} 2017, Honolulu, HI, USA, July 21-26, 2017}, 2017, pp. 5987--5995.

\bibitem{touvron2020deit}
H.~Touvron, M.~Cord, M.~Douze, F.~Massa, A.~Sablayrolles, and H.~J{\'e}gou, ``Training data-efficient image transformers \& distillation through attention,'' in \emph{ICML}, 2021.

\bibitem{yu2018slimmable}
J.~Yu, L.~Yang, N.~Xu, J.~Yang, and T.~Huang, ``Slimmable neural networks,'' in \emph{International Conference on Learning Representations}, 2018.

\bibitem{li2021dynamic}
C.~Li, G.~Wang, B.~Wang, X.~Liang, Z.~Li, and X.~Chang, ``Dynamic slimmable network,'' in \emph{Proceedings of the IEEE/CVF Conference on computer vision and pattern recognition}, 2021, pp. 8607--8617.

\bibitem{jiang2023dynamic}
Z.~Jiang, C.~Li, X.~Chang, L.~Chen, J.~Zhu, and Y.~Yang, ``Dynamic slimmable denoising network,'' \emph{IEEE Transactions on Image Processing}, vol.~32, pp. 1583--1598, 2023.

\bibitem{Li2022Ds_tpami}
C.~Li, G.~Wang, B.~Wang, X.~Liang, Z.~Li, and X.~Chang, ``Ds-net++: Dynamic weight slicing for efficient inference in cnns and transformers,'' \emph{IEEE Transactions on Pattern Analysis and Machine Intelligence (TPAMI)}, 2022.

\bibitem{cubuk2019randaugment}
E.~D. Cubuk, B.~Zoph, J.~Shlens, and Q.~V. Le, ``Randaugment: Practical data augmentation with no separate search,'' \emph{arXiv preprint arXiv:1909.13719}, 2019.

\bibitem{Howard2019SearchingFM}
\BIBentryALTinterwordspacing
A.~Howard, M.~Sandler, G.~Chu, L.~Chen, B.~Chen, M.~Tan, W.~Wang, Y.~Zhu, R.~Pang, V.~Vasudevan, Q.~V. Le, and H.~Adam, ``Searching for mobilenetv3,'' \emph{CoRR}, vol. abs/1905.02244, 2019. [Online]. Available: \url{http://arxiv.org/abs/1905.02244}
\BIBentrySTDinterwordspacing

\bibitem{Tan2019MixConvMD}
\BIBentryALTinterwordspacing
M.~Tan and Q.~V. Le, ``Mixconv: Mixed depthwise convolutional kernels,'' \emph{CoRR}, vol. abs/1907.09595, 2019. [Online]. Available: \url{http://arxiv.org/abs/1907.09595}
\BIBentrySTDinterwordspacing

\bibitem{dai2021fbnetv3}
X.~Dai, A.~Wan, P.~Zhang, B.~Wu, Z.~He, Z.~Wei, K.~Chen, Y.~Tian, M.~Yu, P.~Vajda \emph{et~al.}, ``Fbnetv3: Joint architecture-recipe search using predictor pretraining,'' in \emph{Proceedings of the IEEE/CVF Conference on Computer Vision and Pattern Recognition}, 2021, pp. 16\,276--16\,285.

\bibitem{wang2021attentivenas}
D.~Wang, M.~Li, C.~Gong, and V.~Chandra, ``Attentivenas: Improving neural architecture search via attentive sampling,'' in \emph{Proceedings of the IEEE/CVF conference on computer vision and pattern recognition}, 2021, pp. 6418--6427.

\bibitem{wang2021alphanet}
D.~Wang, C.~Gong, M.~Li, Q.~Liu, and V.~Chandra, ``Alphanet: Improved training of supernets with alpha-divergence,'' in \emph{International Conference on Machine Learning}.\hskip 1em plus 0.5em minus 0.4em\relax PMLR, 2021, pp. 10\,760--10\,771.

\bibitem{tan2021efficientnetv2}
M.~Tan and Q.~Le, ``Efficientnetv2: Smaller models and faster training,'' in \emph{International conference on machine learning}.\hskip 1em plus 0.5em minus 0.4em\relax PMLR, 2021, pp. 10\,096--10\,106.

\bibitem{cai2019once}
H.~Cai, C.~Gan, T.~Wang, Z.~Zhang, and S.~Han, ``Once-for-all: Train one network and specialize it for efficient deployment,'' \emph{arXiv preprint arXiv:1908.09791}, 2019.

\bibitem{Kornblith2018DoBI}
S.~Kornblith, J.~Shlens, and Q.~V. Le, ``Do better imagenet models transfer better?'' \emph{2019 IEEE/CVF Conference on Computer Vision and Pattern Recognition (CVPR)}, pp. 2656--2666, 2018.

\bibitem{Huang2019GPipeET}
Y.~Huang, Y.~Cheng, D.~Chen, H.~Lee, J.~Ngiam, Q.~V. Le, and Z.~Chen, ``Gpipe: Efficient training of giant neural networks using pipeline parallelism,'' in \emph{NeurIPS}, 2019.

\bibitem{dosovitskiy2021vit}
A.~Dosovitskiy, L.~Beyer, A.~Kolesnikov, D.~Weissenborn, X.~Zhai, T.~Unterthiner, M.~Dehghani, M.~Minderer, G.~Heigold, S.~Gelly, J.~Uszkoreit, and N.~Houlsby, ``An image is worth 16x16 words: Transformers for image recognition at scale,'' in \emph{{ICLR}}, 2021.

\bibitem{yuan2021tokens}
L.~Yuan, Y.~Chen, T.~Wang, W.~Yu, Y.~Shi, Z.~Jiang, F.~E. Tay, J.~Feng, and S.~Yan, ``Tokens-to-token vit: Training vision transformers from scratch on imagenet,'' in \emph{ICCV}, 2021.

\bibitem{liu2021swin}
Z.~Liu, Y.~Lin, Y.~Cao, H.~Hu, Y.~Wei, Z.~Zhang, S.~Lin, and B.~Guo, ``Swin transformer: Hierarchical vision transformer using shifted windows,'' in \emph{ICCV}, 2021.

\bibitem{chu2021twins}
X.~Chu, Z.~Tian, Y.~Wang, B.~Zhang, H.~Ren, X.~Wei, H.~Xia, and C.~Shen, ``Twins: Revisiting the design of spatial attention in vision transformers,'' \emph{Advances in Neural Information Processing Systems}, vol.~34, pp. 9355--9366, 2021.

\bibitem{szegedy2016rethinking}
C.~Szegedy, V.~Vanhoucke, S.~Ioffe, J.~Shlens, and Z.~Wojna, ``Rethinking the inception architecture for computer vision,'' in \emph{CVPR}, 2016.

\bibitem{huang2016deep}
G.~Huang, Y.~Sun, Z.~Liu, D.~Sedra, and K.~Q. Weinberger, ``Deep networks with stochastic depth,'' in \emph{ECCV}, 2016.

\bibitem{yun2019cutmix}
S.~Yun, D.~Han, S.~J. Oh, S.~Chun, J.~Choe, and Y.~Yoo, ``Cutmix: Regularization strategy to train strong classifiers with localizable features,'' in \emph{ICCV}, 2019.

\bibitem{cubuk2020randaugment}
E.~D. Cubuk, B.~Zoph, J.~Shlens, and Q.~Le, ``Randaugment: Practical automated data augmentation with a reduced search space,'' in \emph{NeurIPS}, 2020.

\bibitem{zhong2020random}
Z.~Zhong, L.~Zheng, G.~Kang, S.~Li, and Y.~Yang, ``Random erasing data augmentation,'' in \emph{AAAI}, 2020.

\bibitem{zhang2018mixup}
H.~Zhang, M.~Cisse, Y.~N. Dauphin, and D.~Lopez-Paz, ``mixup: Beyond empirical risk minimization,'' in \emph{ICLR}, 2018.

\bibitem{arber2020rdarts}
T.~E. Arber~Zela, T.~Saikia, Y.~Marrakchi, T.~Brox, and F.~Hutter, ``Understanding and robustifying differentiable architecture search,'' \emph{International Conference on Learning Representations (ICLR)}, 2020.

\bibitem{kendall1938new}
M.~G. Kendall, ``A new measure of rank correlation,'' \emph{Biometrika}, vol.~30, no. 1/2, pp. 81--93, 1938.

\bibitem{Hariharan2011Semantic_iccv}
B.~Hariharan, P.~Arbelaez, L.~D. Bourdev, S.~Maji, and J.~Malik, ``Semantic contours from inverse detectors,'' in \emph{{IEEE} International Conference on Computer Vision, {ICCV} 2011, Barcelona, Spain, November 6-13, 2011}, 2011, pp. 991--998.

\bibitem{Chen2018Encoder_eccv}
L.~Chen, Y.~Zhu, G.~Papandreou, F.~Schroff, and H.~Adam, ``Encoder-decoder with atrous separable convolution for semantic image segmentation,'' in \emph{Computer Vision - {ECCV} 2018 - 15th European Conference, Munich, Germany, September 8-14, 2018, Proceedings, Part {VII}}, 2018, pp. 833--851.

\bibitem{Sun2017Revisiting_iccv}
C.~Sun, A.~Shrivastava, S.~Singh, and A.~Gupta, ``Revisiting unreasonable effectiveness of data in deep learning era,'' in \emph{{IEEE} International Conference on Computer Vision, {ICCV} 2017, Venice, Italy, October 22-29, 2017}, 2017, pp. 843--852.

\bibitem{wu2021fbnetv5}
B.~Wu, C.~Li, H.~Zhang, X.~Dai, P.~Zhang, M.~Yu, J.~Wang, Y.~Lin, and P.~Vajda, ``Fbnetv5: Neural architecture search for multiple tasks in one run,'' \emph{arXiv preprint arXiv:2111.10007}, 2021.

\bibitem{chen2017rethinking}
L.-C. Chen, G.~Papandreou, F.~Schroff, and H.~Adam, ``Rethinking atrous convolution for semantic image segmentation,'' \emph{arXiv preprint arXiv:1706.05587}, 2017.

\bibitem{wang2023perf}
G.~Wang, P.~Wang, Z.~Chen, W.~Wang, C.~C. Loy, and Z.~Liu, ``Perf: Panoramic neural radiance field from a single panorama,'' \emph{arXiv preprint arXiv:2310.16831}, 2023.

\bibitem{wang2023sparsenerf}
G.~Wang, Z.~Chen, C.~C. Loy, and Z.~Liu, ``Sparsenerf: Distilling depth ranking for few-shot novel view synthesis,'' \emph{arXiv preprint arXiv:2303.16196}, 2023.

\bibitem{chen2023scenedreamer}
Z.~Chen, G.~Wang, and Z.~Liu, ``Scenedreamer: Unbounded 3d scene generation from 2d image collections,'' \emph{arXiv preprint arXiv:2302.01330}, 2023.

\bibitem{wang2022traditional}
G.~Wang and P.~H. Torr, ``Traditional classification neural networks are good generators: They are competitive with ddpms and gans,'' \emph{arXiv preprint arXiv:2211.14794}, 2022.

\bibitem{wang2022stylelight}
G.~Wang, Y.~Yang, C.~C. Loy, and Z.~Liu, ``Stylelight: Hdr panorama generation for lighting estimation and editing,'' in \emph{European Conference on Computer Vision}.\hskip 1em plus 0.5em minus 0.4em\relax Springer, 2022, pp. 477--492.

\bibitem{chen2022text2light}
Z.~Chen, G.~Wang, and Z.~Liu, ``Text2light: Zero-shot text-driven hdr panorama generation,'' \emph{ACM Transactions on Graphics (TOG)}, vol.~41, no.~6, pp. 1--16, 2022.

\bibitem{bommasani2021opportunities}
R.~Bommasani, D.~A. Hudson, E.~Adeli, R.~Altman, S.~Arora, S.~von Arx, M.~S. Bernstein, J.~Bohg, A.~Bosselut, E.~Brunskill \emph{et~al.}, ``On the opportunities and risks of foundation models,'' \emph{arXiv preprint arXiv:2108.07258}, 2021.

\end{thebibliography}

\begin{IEEEbiography}[{\includegraphics[width=1in,height=1.25in,clip,keepaspectratio]{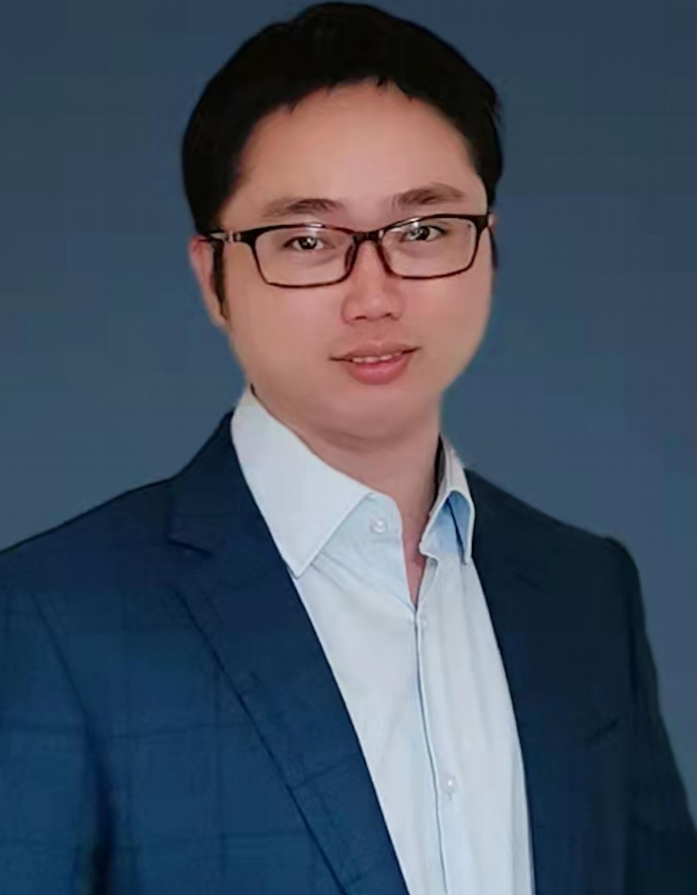}}]{Guangrun Wang} is currently a Postdoctoral Researcher in the Department of Engineering Science at the University of Oxford. He received two B.E. degrees and one Ph.D. degree from SYSU in 2014 and 2020. He was a visiting scholar at the Chinese University of Hong Kong (CUHK). His research interest is machine learning. He is a Distinguished Senior Program Committee member for IJCAI, an outstanding reviewer (six times) for ICLR, NeurIPS, and ICCV. He is the recipient of the 2018 Pattern Recognition Best Paper Award, two ESI Highly Cited Papers, Top Chinese Rising Stars in Artificial Intelligence, and Wu Wen-Jun Best Doctoral Dissertation.
\end{IEEEbiography}

\begin{IEEEbiography}[{\includegraphics[width=1in,height=1.25in,clip,keepaspectratio]{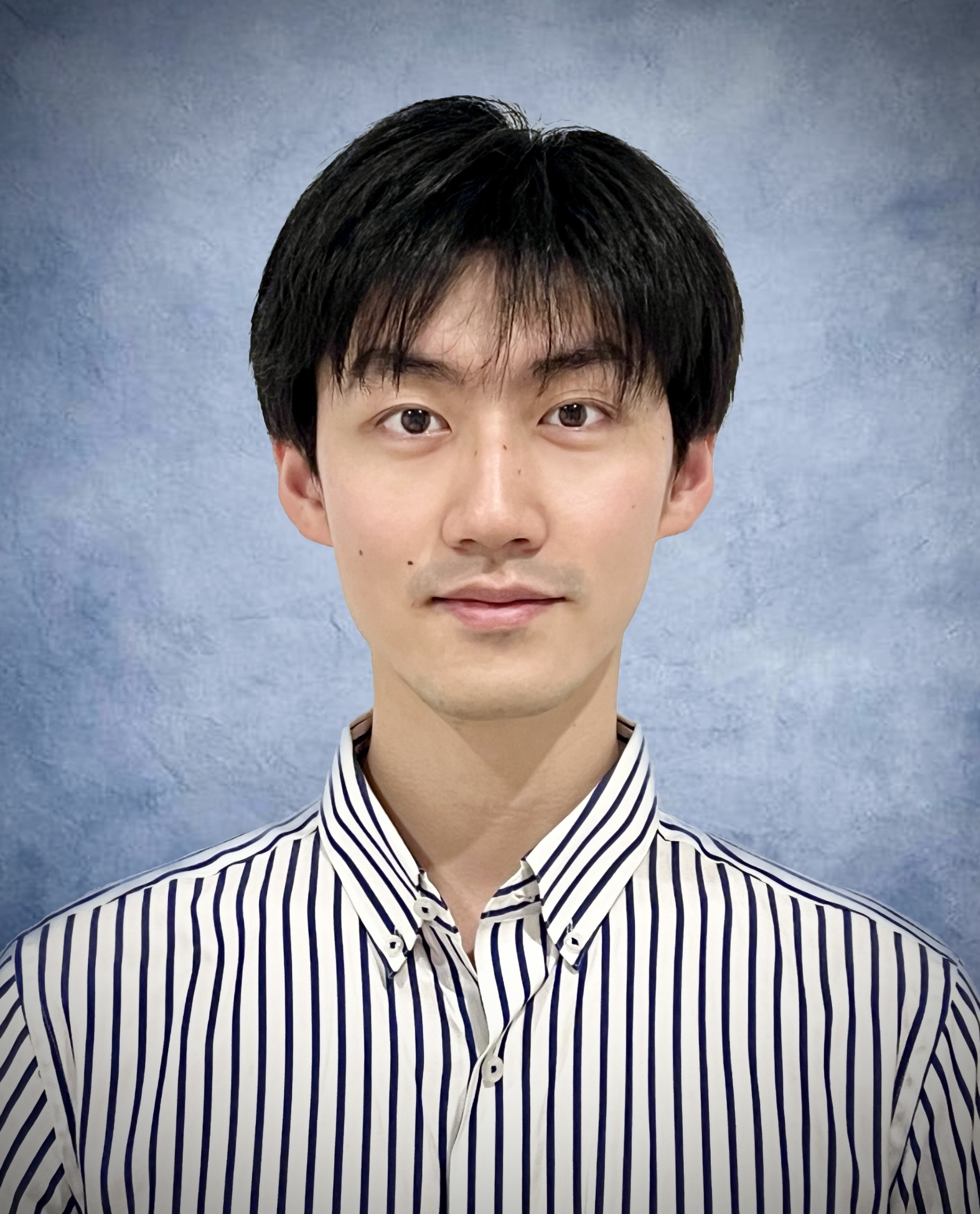}}]{Changlin Li} is a Postdoctoral Researcher in ReLER Lab, Australian Artificial Intelligence Institute, University of Technology Sydney (UTS). He received his Ph.D. degree from UTS in 2023.
Prior to his Ph.D. study, he received his B.E. degree in Computer Science in 2019, from University of Science and Technology of China. He currently serves as a reviewer of CVPR, ICCV, ECCV, ICML, NeurIPS, T-PAMI, T-IP, etc. His main research ambition is to explore more efficient and more intelligent neural network architectures. He is also interested in computer vision tasks such as activity recognition and has won first place in the TRECVID ActEV 2019 grand challenge.\end{IEEEbiography}

\begin{IEEEbiography}[{\includegraphics[width=1in,height=1.25in,clip,keepaspectratio]{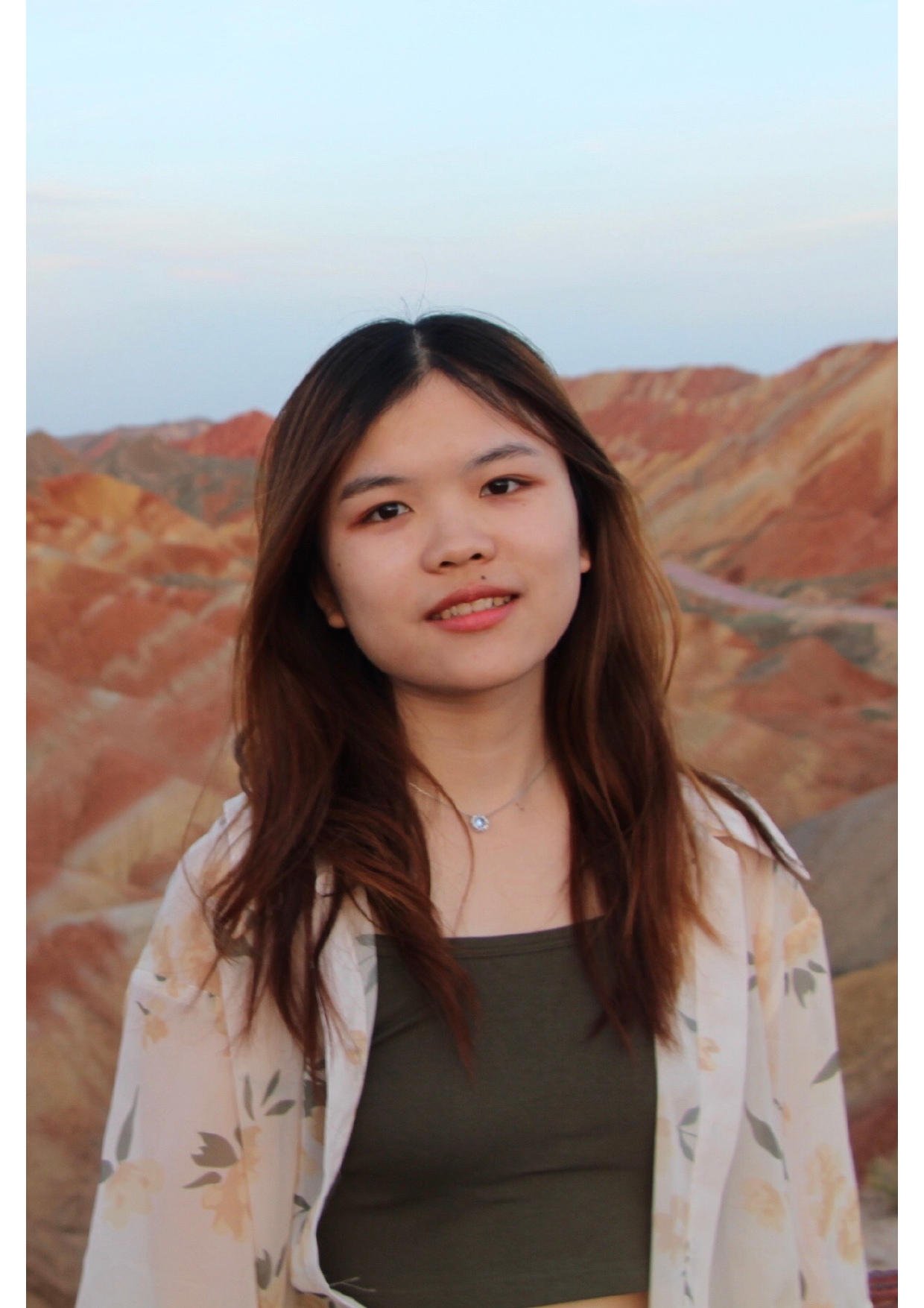}}]{Liuchun Yuan} received the B.S. degree from the Nanjing University of Information and Science Technology, China, in 2017. She is currently pursuing the M.S. degree with the School of Electronics and Information Technology, Sun Yat-sen University, Guangzhou, China. Her research interests mainly include computer vision and machine learning.\end{IEEEbiography}

\begin{IEEEbiography}[{\includegraphics[width=1in,height=1.25in,clip,keepaspectratio]{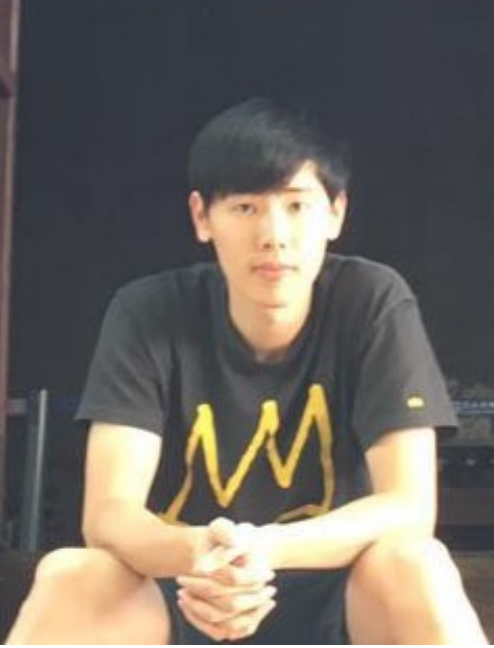}}]{JieFeng Peng} received his B.S. and M.E. degree from the Sun Yat-sen University, Guangzhou, China, in 2016 and 2019, respectively. His main research interests include deep learning and automated machine learning.\end{IEEEbiography}

\begin{IEEEbiography}[{\includegraphics[width=1in,height=1.25in,clip,keepaspectratio]{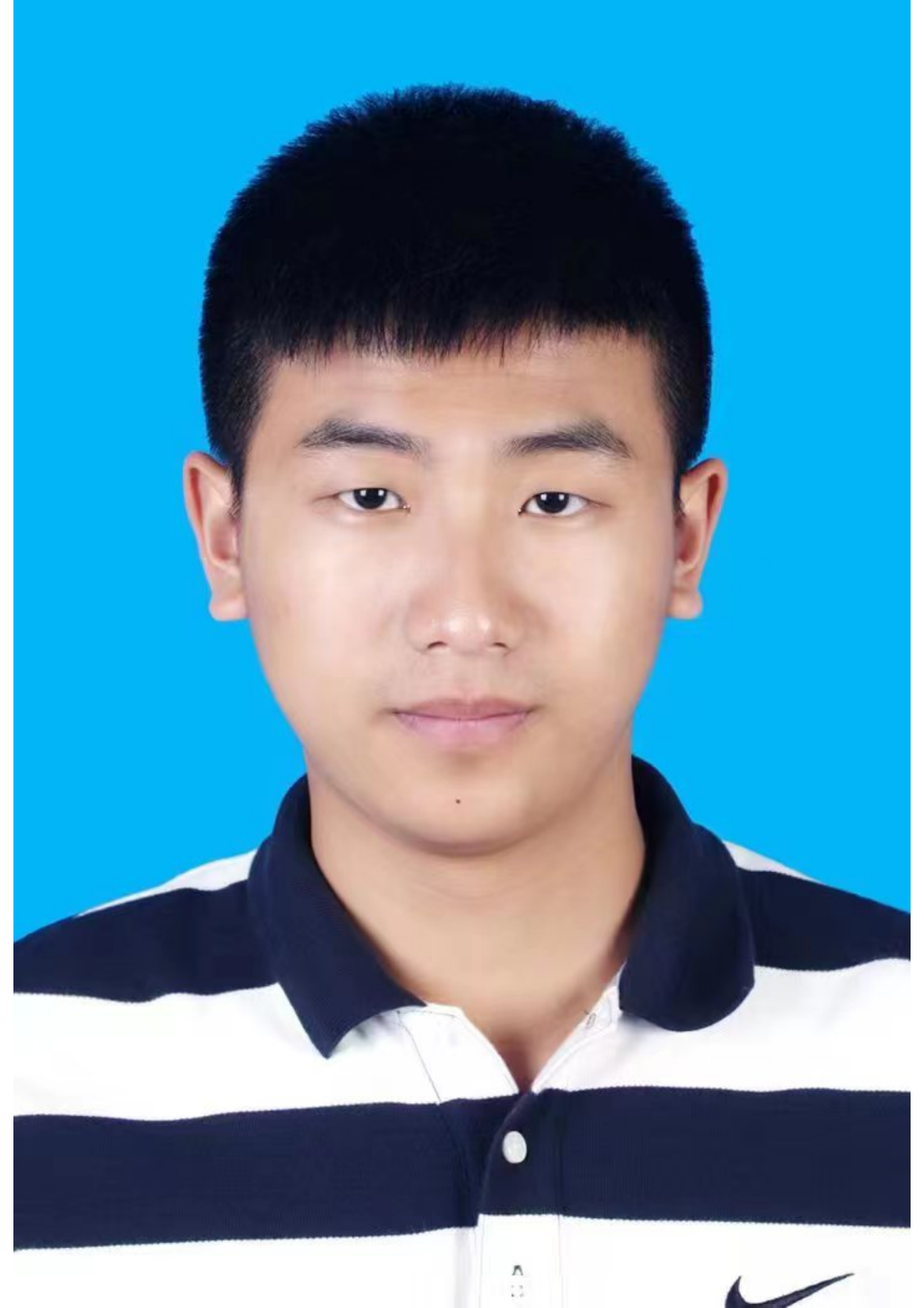}}]{Xiaoyu Xian} received the M.S. degrees from Beijing Jiaotong University, Beijing, China. Currently, he is a researcher with the technical department, CRRC Academy Co., Ltd., Beijing. His current research interests include optical character recognition and machine learning. \end{IEEEbiography}

\begin{IEEEbiography}[{\includegraphics[width=1in,height=1.25in,clip,keepaspectratio]{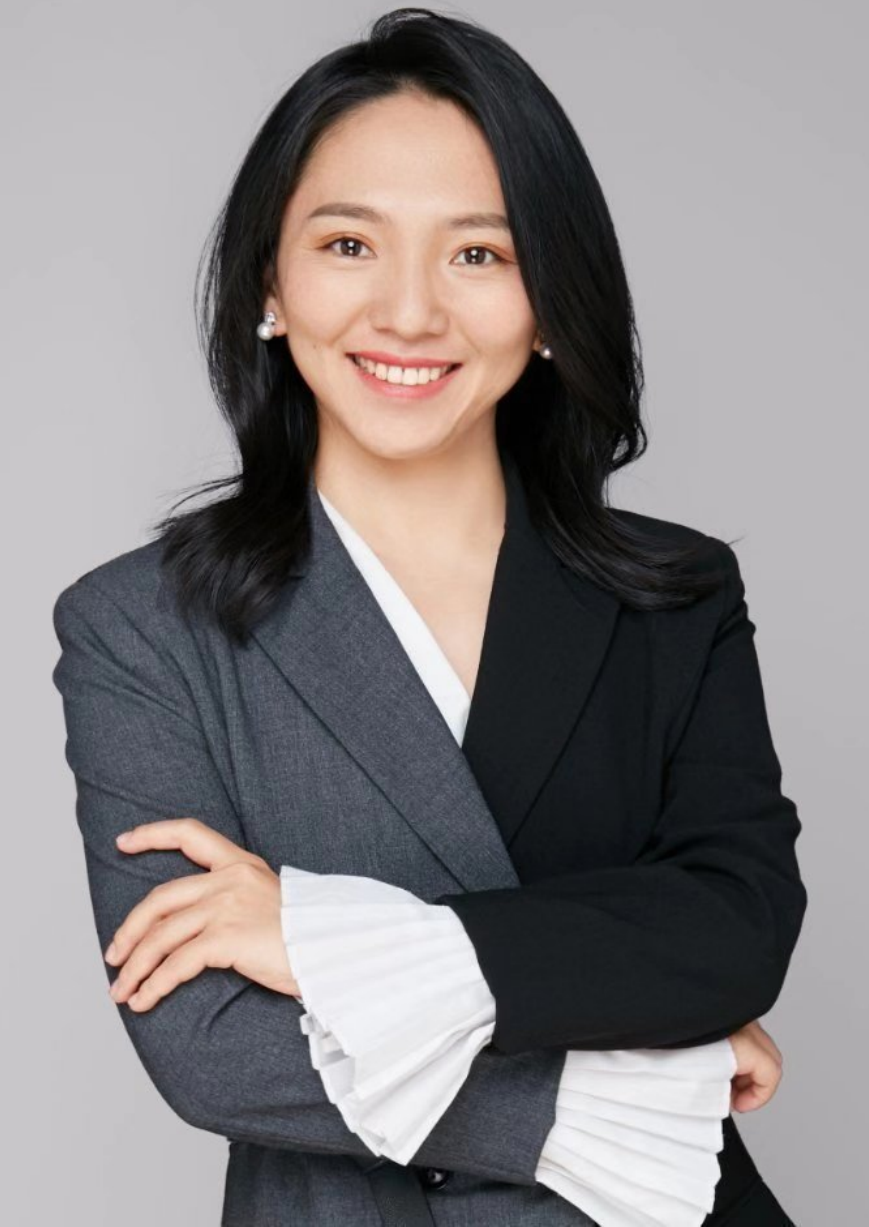}}]{Xiaodan Liang} is currently an Associate Professor at Sun Yat-sen University. She was a postdoc researcher in the machine learning department at Carnegie Mellon University, working with Prof. Eric Xing, from 2016 to 2018. She received her PhD degree from Sun Yat-sen University in 2016, advised by Liang Lin. She has published several cutting-edge projects on human-related analysis, including human parsing, pedestrian detection and instance segmentation, 2D/3D human pose estimation, activity recognition, dialogue system, and automated machine learning. More details could be found in her homepage: \url{https://lemondan.github.io}.\end{IEEEbiography}

\begin{IEEEbiography}[{\includegraphics[width=1in,height=1.25in,clip,keepaspectratio]{photo/XiaojunChang.pdf}}]{Xiaojun Chang} is a Professor at Australian Artificial Intelligence Institute, University of Technology Sydney. He is also an Honorary Professor at the School of Computing Technologies, RMIT University. Before joining UTS, he was an Associate Professor at the School of Computing Technologies, RMIT University, Australia. After graduation, he subsequently worked as a Postdoc Research Fellow at School of Computer Science, Carnegie Mellon University, Lecturer and Senior Lecturer in the Faculty of Information Technology, Monash University, Australia. He has spent most of his time working on exploring multiple signals (visual, acoustic, textual) for automatic content analysis in unconstrained or surveillance videos. He has achieved top performances in various international competitions, such as TRECVID MED, TRECVID SIN, and TRECVID AVS.\end{IEEEbiography}

\begin{IEEEbiography}[{\includegraphics[width=1in,height=1.25in,clip,keepaspectratio]{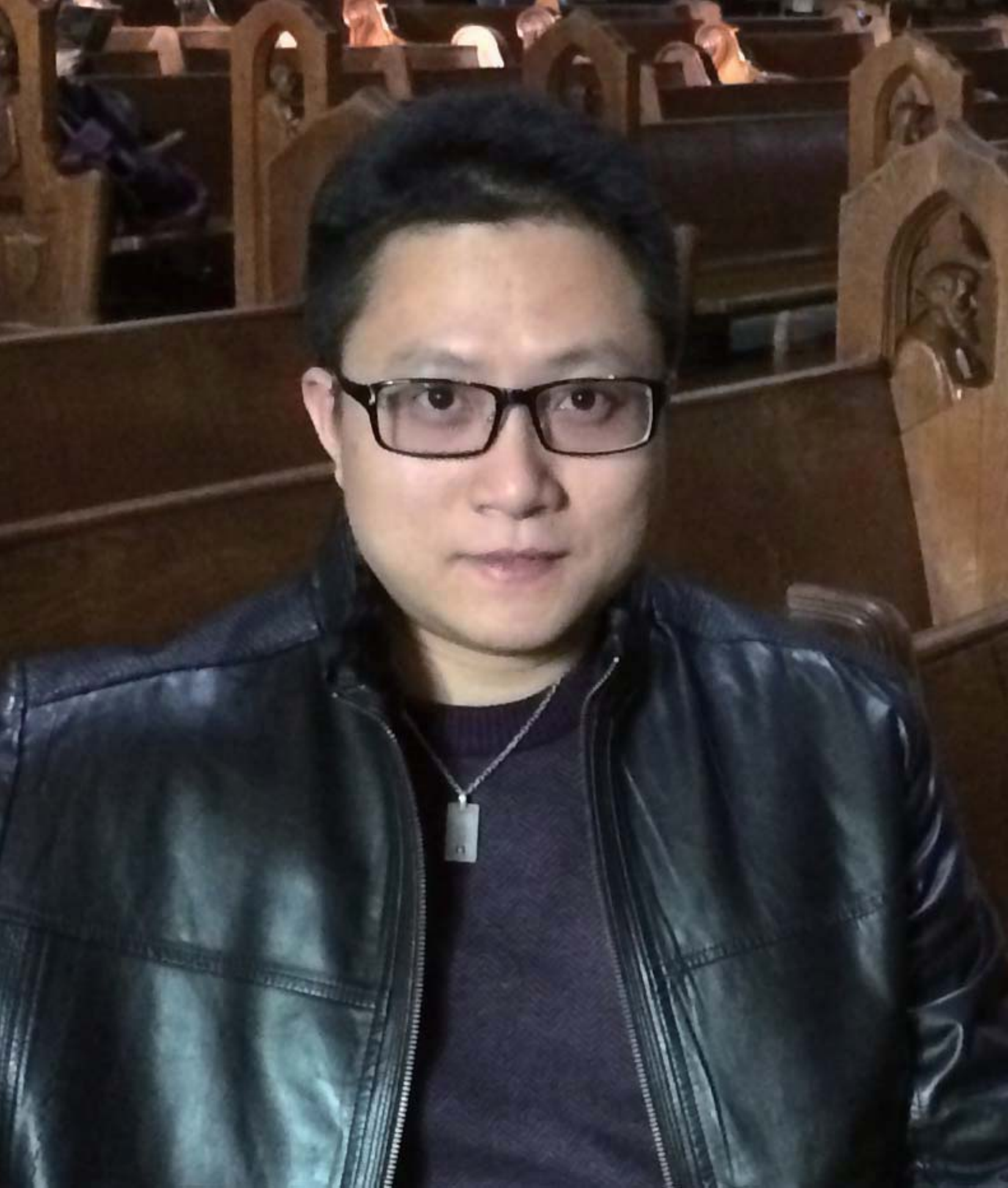}}]{Liang Lin} (M'09, SM'15) is a Full Professor of computer science at Sun Yat-sen University. He served as the Executive Director and Distinguished Scientist of SenseTime Group from 2016 to 2018, leading the R\&D teams for cutting-edge technology transferring. He has authored or co-authored more than 200 papers in leading academic journals and conferences (e.g., 20+ papers in TPAMI/IJCV), and his papers have been cited by more than 16,000 times. He is an associate editor of IEEE Trans. Neural Networks and Learning Systems and IEEE Trans. Human-Machine Systems, and served as Area Chairs for numerous conferences such as CVPR, ICCV, SIGKDD and AAAI. He is the recipient of numerous awards and honors including Wu Wen-Jun Artificial Intelligence Award, the First Prize of China Society of Image and Graphics, ICCV Best Paper Nomination in 2019, Annual Best Paper Award by Pattern Recognition (Elsevier) in 2018, Best Paper Dimond Award in IEEE ICME 2017, Google Faculty Award in 2012. His supervised PhD students received ACM China Doctoral Dissertation Award, CCF Best Doctoral Dissertation and CAAI Best Doctoral Dissertation. He is a Fellow of IAPR and IET.
\end{IEEEbiography}

\clearpage
\pagenumbering{arabic}
\section*{Appendix}

\begin{table*}
\setlength{\abovecaptionskip}{0.cm}
\setlength{\belowcaptionskip}{0.cm}
\begin{center}
\caption{\small{Detailed comparison of search space differences.}}\label{tab:searchspacecmp}
 \begin{tabular}{l||l|l|l|l|l|l|l}
 Methods & Granularity & Building block &Kernel size & Expand rate & SE ratio & Channels & Depth \\
 \hline\hline
 MnasNet \cite{Tan2018MnasNetPN} & block-level & MBBlock & 3, 5 & 3, 6 & 0, 0.25 & variable & variable \\
 MobileNetV3 \cite{Howard2019SearchingFM} & block-level & MBBlock & 3, 5 & 3, 6 & 0.25 & variable & variable \\
 EfficientNet \cite{Tan2019EfficientNetRM}  & block-level & MBBlock & 3, 5 & 3, 6 & 0.25 & variable & variable \\
 SPOS \cite{Guo2019SinglePO} & layer-level & ShuffleBlock & 3, 5, 7 & - & 0 & variable & variable \\
 ProxylessNAS \cite{Cai2018ProxylessNASDN} & layer-level & MBBlock & 3, 5, 7 & 3, 6 & 0 & fixed & variable \\
 FBNet-C \cite{Wu2018FBNetHE} & layer-level & MBBlock & 3, 5 & 1, 3, 6 & 0 & fixed & fixed \\
 FairNAS-A \cite{Chu2019FairNAS_arxiv} & layer-level & MBBlock & 3, 5, 7 & 3, 6 & 0 & fixed & fixed \\
 MoGA-A \cite{Chu2019MoGASB} & layer-level & MBBlock & 3, 5, 7 & 3, 6 & 0, 0.25 & fixed & fixed \\
 SCARLET-A \cite{Chu2019ScarletNASBT} & layer-level & MBBlock & 3, 5, 7 & 3, 6 & 0, 0.25 & fixed & variable \\
 PC-NAS-S \cite{anonymous2020improving} & layer-level & MBBlock & 3, 5, 7 & 1, 3, 6 & 0.25 & fixed & variable \\
 MixNet-M \cite{Tan2019MixConvMD}  & layer-level & MixConvBlock & mixed 3, 5, 7, 9, 11 & 3, 6 & 0.25 & fixed & variable \\
 DNA (ours) & layer-level & MBBlock & 3, 5, 7 & 3, 6 & 0.25 & variable & variable \\
 \end{tabular}
 \end{center}
 \vspace{-11pt}
\end{table*}

In the main body of this paper, we have presented the main idea in detail and have used prosperous, extensive, and fair experiments to demonstrate the effectiveness and efficiency of our DNA family. In this appendix, we present more details and experimental results to help readers better understand this paper. The contents of the appendix include proof of Theorem \ref{theorem:generalization}, algorithm boxes, details of our ImageNet NAS Bench, comparison of search spaces, and visualization of the searched architectures.

 \subsection*{Proof of Theorem \ref{theorem:generalization}}

We first rewrite Theorem \ref{theorem:generalization} into Theorem \ref{theorem:rewrite}.

\begin{theorem}\label{theorem:rewrite}
\textbf{(Generalization boundedness).} \emph{Let $\big\{ \psi_t^{\bullet}\big\}_{t=0}^T$ be the optimal network weights of the weight-sharing NAS, $\psi_0^{*}$ be the optimal network weights of the universal solution, and $\big\{ \psi_t^{*}\big\}_{t=1}^T$ be the optimal network weight of the stand-alone solution with $\mathcal{R} (\psi_t) = \tilde{\lambda} \big\| \psi_t\big\|_F^2$ where $\tilde{\lambda} = \frac{\lambda_0 \lambda}{\lambda_0 + \lambda}$. Then, for any candidate architecture $j$, the Frobenius norm of $\omega_j^{\bullet}$ is upper bounded by:}\begin{equation}\label{eqn:bound-a-rewrite}
\begin{aligned}
&\big\| \omega_j^{\bullet} \big\|_F \\
& \\
 = &\big\| \psi_0^{\bullet} + \psi_j^{\bullet} \big\|_F \\
 & \\
 \le &\sqrt{\frac{\lambda_0\big\| \psi_0^{*}\big\|_F^2}{\tilde{\lambda}} + \frac{\sum\limits_{t=1}^{T} \big| \mathcal{L}_{\alpha_t} (\psi_0^*) - \mathcal{L}_{\alpha_t} (\psi_t^*) \big| }{\tilde{\lambda} } },
\end{aligned}
\end{equation}\emph{where $\big| \mathcal{L}_{\alpha_t} (\psi_0^*) - \mathcal{L}_{\alpha_t} (\psi_t^*) \big|$ is the overall gap between the universal solution and the stand-alone solution.}
\end{theorem}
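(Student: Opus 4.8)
The plan is to bound $\|\omega_j^{\bullet}\|_F$ by first reducing it to the total regularization energy accumulated by the weight-sharing optimum, and then controlling that energy through the optimality of the weight-sharing training objective. The key identity I would start from is the elementary \emph{parallel-sum} (infimal-convolution) fact that, for any matrix $\omega$,
\[
\tilde{\lambda}\,\|\omega\|_F^2 \;=\; \min_{\psi_0+\psi=\omega}\bigl(\lambda_0\|\psi_0\|_F^2+\lambda\|\psi\|_F^2\bigr),\qquad \tilde{\lambda}=\tfrac{\lambda_0\lambda}{\lambda_0+\lambda},
\]
attained at $\psi_0=\tfrac{\lambda}{\lambda_0+\lambda}\omega$, $\psi=\tfrac{\lambda_0}{\lambda_0+\lambda}\omega$. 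Applying it to $\omega_j^{\bullet}=\psi_0^{\bullet}+\psi_j^{\bullet}$ and then adding the remaining nonnegative per-architecture terms gives $\tilde{\lambda}\|\omega_j^{\bullet}\|_F^2\le\lambda_0\|\psi_0^{\bullet}\|_F^2+\lambda\|\psi_j^{\bullet}\|_F^2\le\lambda_0\|\psi_0^{\bullet}\|_F^2+\lambda\sum_{t=1}^{T}\|\psi_t^{\bullet}\|_F^2$. This already explains why the harmonic mean $\tilde{\lambda}$ --- rather than $\lambda_0$ or $\lambda$ alone --- governs the bound, and it shows that it suffices to upper bound the total regularization energy of the weight-sharing solution.

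The second stage is an optimality comparison. Writing $F(\psi_0,\{\psi_t\})=\sum_{t=1}^{T}\mathcal{L}_{\alpha_t}(\psi_0+\psi_t)+\lambda_0\|\psi_0\|_F^2+\lambda\sum_{t=1}^{T}\|\psi_t\|_F^2$ for the weight-sharing training objective, nonnegativity of the losses at the minimizer gives $\lambda_0\|\psi_0^{\bullet}\|_F^2+\lambda\sum_t\|\psi_t^{\bullet}\|_F^2\le F(\psi_0^{\bullet},\{\psi_t^{\bullet}\})$, and the latter is dominated by $F$ evaluated at any feasible point. I would plug in the reference point assembled from the universal solution $\psi_0^{*}$ and the stand-alone residuals $\psi_t^{*}-\psi_0^{*}$, chosen precisely so that the summed loss term collapses to $\sum_t\mathcal{L}_{\alpha_t}(\psi_t^{*})$. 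Then I would write $\mathcal{L}_{\alpha_t}(\psi_t^{*})=\mathcal{L}_{\alpha_t}(\psi_0^{*})-\bigl(\mathcal{L}_{\alpha_t}(\psi_0^{*})-\mathcal{L}_{\alpha_t}(\psi_t^{*})\bigr)\le\mathcal{L}_{\alpha_t}(\psi_0^{*})+\bigl|\mathcal{L}_{\alpha_t}(\psi_0^{*})-\mathcal{L}_{\alpha_t}(\psi_t^{*})\bigr|$, absorb the $\sum_t\mathcal{L}_{\alpha_t}(\psi_0^{*})$ piece into the constant $\mathcal{C}_3$ of Theorem~\ref{theorem:generalization} and the term $\lambda_0\|\psi_0^{*}\|_F^2$ into $\mathcal{C}_1$, divide through by $\tilde{\lambda}$, and take a square root --- yielding (\ref{eqn:bound-a-rewrite}) with $\mathcal{C}_1=\lambda_0\|\psi_0^{*}\|_F^2/\tilde{\lambda}$ and $\mathcal{C}_2=1/\tilde{\lambda}$, and hence (\ref{eqn:bound-a}).

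The hard part will be the reference-point bookkeeping in the second stage: the architecture-specific regularizer $\lambda\sum_t\|\psi_t\|_F^2$ evaluated at $\psi_t^{*}-\psi_0^{*}$ leaves a residual that must be controlled, and the loss terms must telescope into exactly the advertised ``universal vs.\ stand-alone'' gap $\sum_t|\mathcal{L}_{\alpha_t}(\psi_0^{*})-\mathcal{L}_{\alpha_t}(\psi_t^{*})|$ rather than something strictly larger. Doing this cleanly forces me to use the defining first-order optimality of $\psi_0^{*}$ (minimizer of the aggregated, $\lambda_0$-regularized risk, which lets $\lambda_0\|\psi_0^{*}\|_F^2$ be traded against the aggregated loss) and of $\psi_t^{*}$ (minimizer of the $\tilde{\lambda}$-regularized single-task risk, which lets $\tilde{\lambda}\|\psi_t^{*}\|_F^2$ be traded against $\mathcal{L}_{\alpha_t}(\psi_0^{*})-\mathcal{L}_{\alpha_t}(\psi_t^{*})$), together with the elementary inequality $\tilde{\lambda}\le\min(\lambda_0,\lambda)$. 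As a fallback, one can instead invoke strong convexity of $F$ at its minimizer to bound the distance from the minimizer to the reference point directly; this sidesteps the residual-term bookkeeping but produces a looser constant, so I would only use it if the telescoping argument proves too fragile.
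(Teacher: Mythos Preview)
Your parallel-sum identity is correct and is exactly the ``Cauchy--Schwarz'' step the paper uses. The trouble is in the second stage, where two choices --- dropping the weight-sharing loss terms via nonnegativity, and taking $(\psi_0^{*},\psi_t^{*}-\psi_0^{*})$ as the comparison point --- each leave a residual that (\ref{eqn:bound-a-rewrite}) does not contain and that your proposed fixes cannot remove. Concretely, your chain yields
\[
\tilde{\lambda}\,\|\omega_j^{\bullet}\|_F^2 \;\le\; \lambda_0\|\psi_0^{*}\|_F^2 \;+\; \sum_{t}\mathcal{L}_{\alpha_t}(\psi_t^{*}) \;+\; \lambda\sum_{t}\|\psi_t^{*}-\psi_0^{*}\|_F^2,
\]
whereas the target right-hand side is $\lambda_0\|\psi_0^{*}\|_F^2+\sum_t|\mathcal{L}_{\alpha_t}(\psi_0^{*})-\mathcal{L}_{\alpha_t}(\psi_t^{*})|$. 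Writing $\mathcal{L}_{\alpha_t}(\psi_t^{*})\le\mathcal{L}_{\alpha_t}(\psi_0^{*})+|\mathcal{L}_{\alpha_t}(\psi_0^{*})-\mathcal{L}_{\alpha_t}(\psi_t^{*})|$ converts the loss surplus into a standalone $\sum_t\mathcal{L}_{\alpha_t}(\psi_0^{*})$, which is a nonnegative additive term and cannot be tucked inside the absolute value as ``$\mathcal{C}_3$''. Likewise, trading $\tilde{\lambda}\|\psi_t^{*}\|_F^2$ against the gap via stand-alone optimality only controls the regularization residual up to an extra $O(\lambda T)\|\psi_0^{*}\|_F^2$ that the stated bound does not have. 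Your fallback admits a looser constant; the main route does too, so neither recovers~(\ref{eqn:bound-a-rewrite}).

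The paper avoids both problems. It compares against the simpler reference $(\psi_0^{*},0,\ldots,0)$, which kills the regularization residual outright and puts $\sum_t\mathcal{L}_{\alpha_t}(\psi_0^{*})+\lambda_0\|\psi_0^{*}\|_F^2$ on the right. Crucially, it does \emph{not} drop the loss terms on the left: the $j$-th term $\mathcal{L}_{\alpha_j}(\psi_0^{\bullet}+\psi_j^{\bullet})$ is retained, and the remaining terms $\mathcal{L}_{\alpha_t}(\psi_0^{\bullet}+\psi_t^{\bullet})$ for $t\ne j$ are \emph{lower-bounded} by $\mathcal{L}_{\alpha_t}(\psi_t^{*})$ (the standing assumption that stand-alone training attains lower loss than weight-sharing). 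That subtraction collapses $\sum_t\mathcal{L}_{\alpha_t}(\psi_0^{*})-\sum_{t\ne j}\mathcal{L}_{\alpha_t}(\psi_t^{*})$ to $\mathcal{L}_{\alpha_j}(\psi_j^{*})+\sum_t|\mathcal{L}_{\alpha_t}(\psi_0^{*})-\mathcal{L}_{\alpha_t}(\psi_t^{*})|$, and one more use of the same assumption gives $\mathcal{L}_{\alpha_j}(\psi_j^{*})\le\mathcal{L}_{\alpha_j}(\psi_0^{\bullet}+\psi_j^{\bullet})$, which cancels against the retained left-hand term --- only then is the parallel-sum inequality applied. Your nonnegativity shortcut throws away precisely the term needed for this cancellation.
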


\begin{proof}
Since $\big\{ \psi_t^{\bullet} \big\}_{t=0}^T$ are the optimal network weights of the weight-sharing solution and the weight-sharing solution has lower losses than the universal one, we have:\begin{equation}\label{eqn:ineq-a}
\begin{aligned}
&\sum\limits_{t=1}^T \mathcal{L}_{\alpha_t} (\psi_0^{\bullet} + \psi_t^{\bullet}) + \lambda_0 \big\| \psi_0^{\bullet} \big\|_F^2 + \lambda \big\| \psi_t ^{\bullet}\big\|_F^2 \\
\le &\sum\limits_{t=1}^T\mathcal{L}_{\alpha_t}(\psi_0^*) + \lambda_0 \big\| \psi_0^*\big\|_F^2.
\end{aligned}
\end{equation}On the other hand, as $\big\{ \psi_t^*\big\}_{t=1}^T$ is the optimal network weights of the stand-alone solution and the stand-alone solution has lower losses than the weight-sharing one, for any candidate architecture $\alpha_j$, we have:\begin{equation}\label{eqn:ineq-b}
\begin{aligned}
&\mathcal{L}_{\alpha_j} (\psi_j^*) + \tilde{\lambda} \big\| \psi_j^* \big\|_F^2 \\
\le & \mathcal{L}_{\alpha_t} (\psi_0^{\bullet} + \psi_t^{\bullet}) + \tilde{\lambda} \big\| \psi_0^{\bullet} + \psi_j^{\bullet} \big\|_F^2\\
= & \mathcal{L}_{\alpha_t} (\psi_0^{\bullet} + \psi_t^{\bullet}) + \frac{1}{\lambda_0 + \lambda_t}\big\| \sqrt{\lambda_0} \sqrt{\lambda} \psi_0^{\bullet} + \sqrt{\lambda_0}\sqrt{\lambda}\psi_j^{\bullet} \big\|_F^2 \\
\le & \mathcal{L}_{\alpha_t} (\psi_0^{\bullet} + \psi_t^{\bullet}) + \frac{(\sqrt{\lambda_0})^2 + (\sqrt{\lambda})^2}{\lambda_0 + \lambda_t} \big\| \sqrt{\lambda_0}\psi_0^{\bullet}\big\|_F^2 \\
&+ \big\|\sqrt{\lambda}\psi_j^{\bullet} \big\|_F^2 \\
=& \mathcal{L}_{\alpha_t} (\psi_0^{\bullet} + \psi_t^{\bullet}) + \lambda_0\big\|\psi_0^{\bullet}\big\|_F^2 + \lambda\big\|\psi_j^{\bullet} \big\|_F^2
\end{aligned}
\end{equation}where the last inequality is due to the Cauchy-Schwarz inequality. Combining Eqn. \eqref{eqn:ineq-a} and \eqref{eqn:ineq-b} gives:\begin{equation}\label{eqn:ineq-c}
\begin{aligned}
&\mathcal{L}_{\alpha_t} (\psi_0^{\bullet} + \psi_t^{\bullet}) + \tilde{\lambda} \big\| \psi_0^{\bullet} + \psi_j^{\bullet} \big\|_F^2\\
\le & \mathcal{L}_{\alpha_t} (\psi_0^{\bullet} + \psi_t^{\bullet}) + \lambda_0\big\|\psi_0^{\bullet}\big\|_F^2 + \lambda\big\|\psi_j^{\bullet} \big\|_F^2 \\
\le & \sum\limits_{t=1}^T\mathcal{L}_{\alpha_t}(\psi_0^*) + \lambda_0 \big\| \psi_0^*\big\|_F^2 - \sum\limits_{t=1 \atop t\neq i}^{T}\mathcal{L}_{\alpha_t} (\psi_0^{\bullet} + \psi_t^{\bullet}) \\
\le & \sum\limits_{t=1}^T\mathcal{L}_{\alpha_t}(\psi_0^*) + \lambda_0 \big\| \psi_0^*\big\|_F^2 - \sum\limits_{t=1 \atop t\neq i}^{T} \mathcal{L}_{\alpha_t} (\psi_t^*)\\
= & \mathcal{L}_{\alpha_j} (\psi_j^*) + \lambda_0 \big\| \psi_0^*\big\|_F^2 + \sum\limits_{t=1}^T \big| \mathcal{L}_{\alpha_t} (\psi_0^*) - \mathcal{L}_{\alpha_t} (\psi_t^*) \big| \\
\le & \mathcal{L}_{\alpha_t} (\psi_0^{\bullet} + \psi_t^{\bullet}) + \lambda_0 \big\| \psi_0^*\big\|_F^2 + \sum\limits_{t=1}^T \big| \mathcal{L}_{\alpha_t} (\psi_0^*) - \mathcal{L}_{\alpha_t} (\psi_t^*) \big| ,
\end{aligned}
\end{equation}where the last inequality is because the weight-sharing solution has higher losses than the stand-alone one. Combining the first and the last line of Eqn. \eqref{eqn:ineq-c} gives:\begin{equation}
\big\| \psi_0^{\bullet} + \psi_j^{\bullet} \big\|_F \le \sqrt{\frac{\lambda_0 \big\| \psi_0^*\big\|_F^2 }{\tilde{\lambda}} + \frac{\sum\limits_{t=1}^T \big| \mathcal{L}_{\alpha_t} (\psi_0^*) - \mathcal{L}_{\alpha_t} (\psi_t^*) \big| }{\tilde{\lambda}} },
\end{equation} for any $j$.
\end{proof}

\textbf{Discussion:}
One could raise the point that this theorem is based on the assumption that ``weight-sharing solutions yield higher losses than the stand-alone ones," which may not always hold true as there are instances where models can derive benefits from weight-sharing training \cite{cai2019once}. We would like to provide further discussion to clarify this issue.

\textcolor{black}{
First and foremost, we would like to emphasize that even in the OnceForAll mode, Theorem \ref{theorem:generalization} still holds true. We know that in OnceForAll, two loss functions are used simultaneously for training the subnets: the classic cross-entropy loss and the KD loss that uses the predictions of the largest network for distillation. This technique is called inplace KD. In OnceForAll, the sub-models might not benefit from weight-sharing training, but they do benefit from inplace KD. Therefore, when considering retraining a stand-alone subnet of OnceForAll, it is advisable to utilize both cross-entropy and KD losses. By doing so, the loss of the stand-alone subnet still proves to be lower than that of the weight-sharing model, thereby affirming the validity of Theorem \ref{theorem:generalization}.
}

\textcolor{black}{Secondly, OnceForAll requires a relatively high similarity between the results of the student and the teacher, otherwise, different subnets would have optimization conflicts due to excessive structural diversity. Therefore, our DNA is more versatile and flexible compared to OnceForAll. For example, our DNA can be applied to cases where the teacher is CNN and the student is ViT.}

\textcolor{black}{Thirdly, it is important to note that the correlation between the accuracy of the subnets and their train-from-scratch accuracies remains unknown, rendering them somewhat opaque and leaving the upper bound of the subnets' capabilities uncertain. In contrast, our DNA method specifically targets and resolves the issue of inaccurate architecture ranking in weight-sharing NAS, thereby enhancing its efficiency and effectiveness. }

\textcolor{black}{
With approximately 6.4 million parameters, our DNA method achieves an impressive top-1 accuracy of 79.7\%, surpassing that of OnceForAll (76.4\%, see Table \ref{tab:imagenet}). These compelling results unequivocally demonstrate the consistent superiority of our method.
}

\subsection*{Algorithm boxes}

In Section \ref{sec:supernetdistill}, we mentioned that we designed two excellent algorithms in the rating step and the search step, respectively, to make our NAS very efficient. These two algorithms are the "feature sharing rating" and "Traversal search" algorithms, respectively. Here, we present two detailed algorithm boxes in Algorithm \ref{alg:eval} and Algorithm \ref{alg:search}. Hopefully, it can help readers to understand our algorithm well. The readers can also directly view our code to understand our algorithm (see the code link in the abstract, or \url{https://github.com/changlin31/DNA}).

\begin{algorithm}
\footnotesize
\caption{Feature sharing rating}\label{alg:eval}
\Indmm
\SetKw{Def}{define}
\SetKw{Out}{output}
\SetKwFunction{dfsforward}{DFS-Forward}
 \KwIn{Teacher's previous feature map $G_{prev}$, Teacher's current feature map $G_{curr}$, Root of the cell $Cell$, loss function $loss$}
 \KwOut{List of evaluation loss $L$}
\Indp
 \BlankLine
 \Def \dfsforward{$N$, $X$}:\\
\Indp
 $Y = N(X)$\;
 \eIf{$N$ has no $child$}{
  $append(L, loss(Y, G_{curr}))$\;}
 {
  \For{$C$ in $N.child$}{
   \dfsforward{$C$, $Y$}\;}}
\Indm
\BlankLine
\dfsforward{$Cell$, $G_{prev}$}\;
\Out{$L$}\;
\end{algorithm}

\setlength{\floatsep}{0.1cm}
\setlength{\textfloatsep}{0.1cm}
\begin{algorithm}
\footnotesize
\caption{Traversal search}\label{alg:search}
\Indmm
\KwIn{Block index $B$, the teacher's current feature map $G$, constrain $C$, model pool list $Pool$}
\KwOut{best model $M$}
\SetKw{Def}{define}
\SetKw{Out}{output}
\SetKwFunction{traverseblock}{SearchBlock}
\Indpp
 \Def \traverseblock{$B, size_{prev}, loss_{prev}$}:\\
\Indp
 \For{$i < length(Pool[B])$}{
 $size \leftarrow size_{prev} + size[i]$\;
 \If{$size > C$}{continue\;}
 $loss \leftarrow loss_{prev} + loss[i]$\;
 \eIf{B is last block}
 {
  \If{$loss \leq loss_{best}$}
   {$loss_{best} \leftarrow loss$\;
   $M \leftarrow$ index of each block
  }
  break\;}
 {\traverseblock{$B+1, size, loss$}\;}
}
\Indm
\BlankLine
\traverseblock{0}\;
\Out{$M$}\;
\end{algorithm}

\subsection*{Details of our ImageNet NAS Bench}

In Section \ref{sect:dataset}, we mentioned that we constructed a ImageNet NAS Bench to ensure fair and consistent evaluation and to compensate for the lack of full-resolution NAS benches on ImageNet. Here, we show some statistics about this benchmark so that the readers can better understand this benchmark in Fig. \ref{fig:bench}.

Fig. \ref{fig:bench}(a) presents the histogram of the top-1 accuracy of the architectures in our ImageNet NAS Bench. Fig. \ref{fig:bench} (b) summarize the top-1 accuracies \emph{w.r.t.} the parameter numbers. Fig. \ref{fig:bench} (c) summarize the top-1 accuracies \emph{w.r.t.} the computational complexity (i.e., FLOPs). 
These statistics truly reflect the real distribution of accuracy and computational complexity of different architectures in the real world, so hopefully, they will be useful to future researchers.

\begin{figure}[t]
\centering
\includegraphics[width=0.8\linewidth]{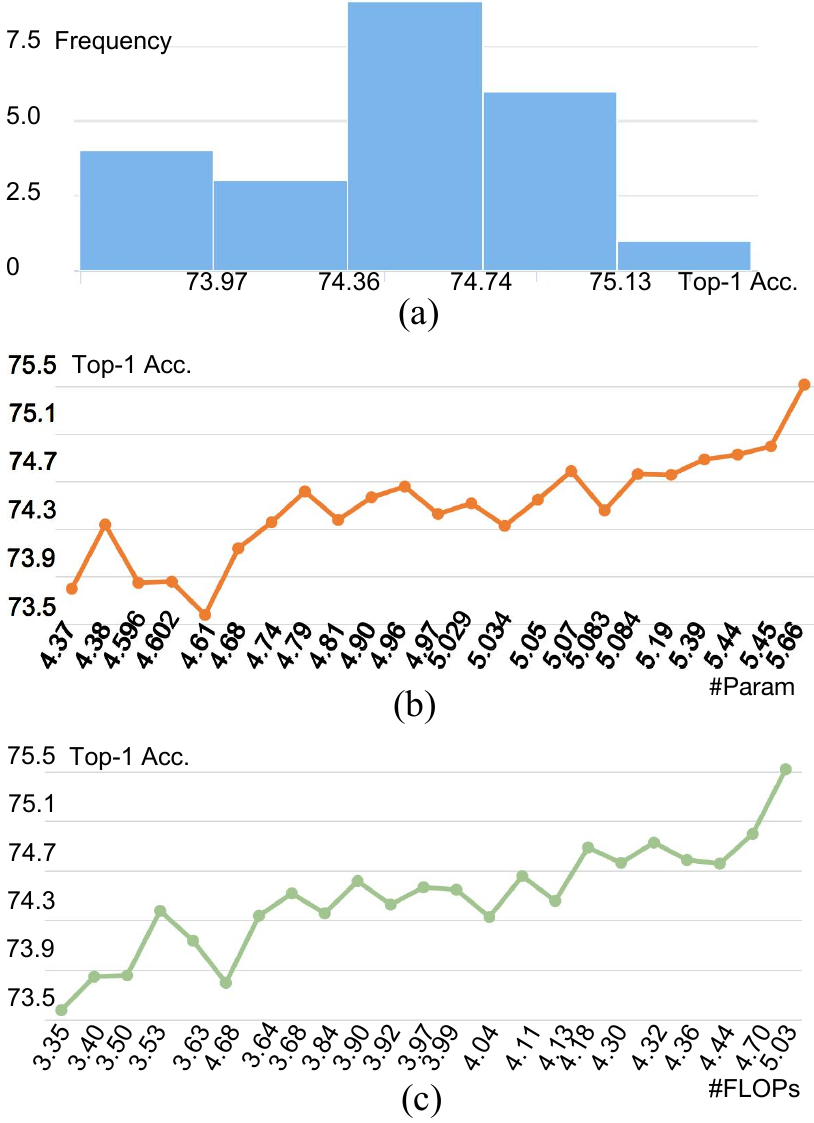}
\vspace{-11pt}
\caption{\small{Statistics of our ImageNet NAS Bench. (a) summarizes the histogram of the top-1 accuracies. (b) and (c) outline the top-1 accuracies \emph{w.r.t.} the parameter numbers and FLOPs.}}\label{fig:bench}
\end{figure}

\subsection*{Detailed comparison of search space differences}

As expected, the performance of searched models in NAS methods highly depends on the quality and variability of the search space.
To make a fair comparison, we only compare our DNA family with NAS methods using the MBConv search space in Table \ref{tab:imagenet}. This search space ignores NAS methods using cell-based search space, another critical NAS branch, in which ENAS \cite{Pham2018EfficientNA} and DARTS \cite{Liu2018DARTSDA} are typical examples. 
Note that we do not ignore these methods in Section \ref{sec:effctiveness} where the effectiveness of our method is sufficiently justified.

As mention in Section \ref{sec:search_spaces}, our MBConv search space is similar to most of the recent NAS works (\cite{Tan2018MnasNetPN, Tan2019EfficientNetRM, Chu2019ScarletNASBT, Chu2019MoGASB}) to ensure a fair comparison. Nevertheless, there still are some minor differences between these MobileNet block-based search spaces. The search spaces of all the NAS methods we compared in Table \ref{tab:imagenet} are shown in Table \ref{tab:searchspacecmp} in detail. As shown, our MBConv search space aligns with existing NAS works.

\subsection*{Architecture visualization}

Our searched architectures are visualized in Fig \ref{fig:arch_detail} in the appendix, from which we have several observations. \textbf{i)} Searched under no constraint, DNA-d tends to choose relatively expensive operations with high expansion rate and large kernel size, thus achieving the best performance. This verifies that our DNA is able to find optimal architectures in the search space. \textbf{ii)} Under the constraint of maximum parameter number, DNA-c tends to discard the operations with redundant channels to save the parameters. It also tends to select a lower expansion rate in the last blocks since the last blocks have more channels than the first blocks. \textbf{iii)} Under the constraint of maximum computational cost, DNA-b and DNA-a tend to select operations with fewer channels and lower expansion rate evenly in each block. The significant style difference of the high-performing architectures in Fig \ref{fig:arch_detail} proves DNA's architecture search ability.

\begin{figure*}
\centering
     \includegraphics[width=0.9\linewidth]{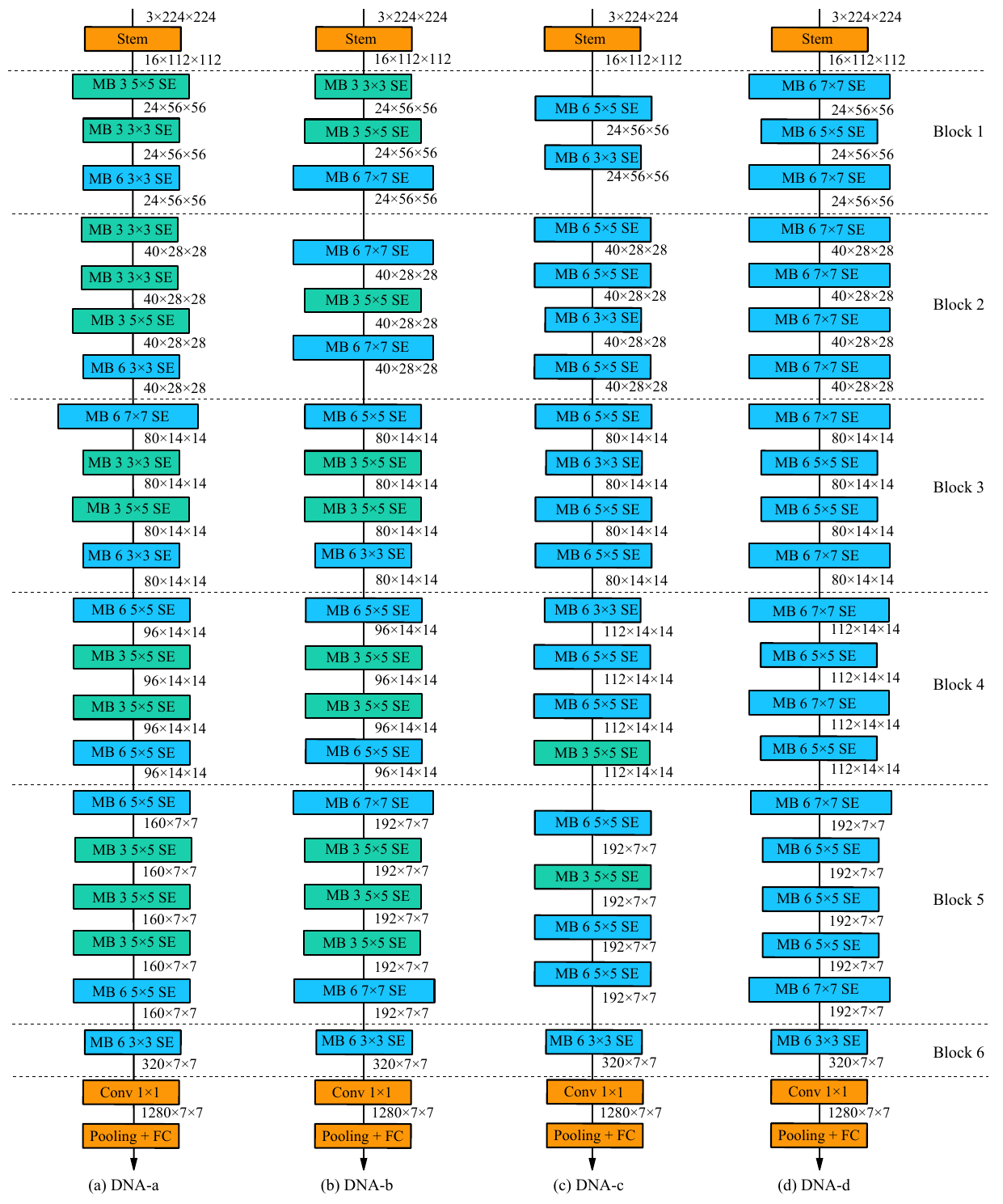}
     \caption{\small{Architectures of DNA-a,b,c,d. `MB $x$ $y\times y$'represents an inverted-bottleneck-convolution with expand rate $x$ and kernel size $y$.}}\label{fig:arch_detail}
\end{figure*}

\ifCLASSOPTIONcaptionsoff
 \newpage
\fi
\end{document}